\documentclass[11pt]{article}

\usepackage[margin=1in]{geometry}
\usepackage[T1]{fontenc}
\usepackage{lmodern}
\usepackage{microtype}
\usepackage{authblk}
\usepackage[numbers,sort&compress]{natbib}

\usepackage{amsmath,amssymb,amsfonts,mathtools,amsthm}
\usepackage{algorithm}
\usepackage{algorithmic}
\usepackage{graphicx}
\usepackage{textcomp}
\usepackage{xcolor}

\usepackage{booktabs}
\usepackage{adjustbox}
\usepackage{tikz}
\usetikzlibrary{arrows.meta,positioning,calc}
\usepackage[colorlinks=true,linkcolor=blue!60!black,citecolor=blue!60!black,urlcolor=blue!60!black]{hyperref}

\newtheorem{theorem}{Theorem}
\newtheorem{proposition}{Proposition}
\newtheorem{lemma}{Lemma}
\newtheorem{definition}{Definition}
\newtheorem{corollary}{Corollary}
\newtheorem{assumption}{Assumption}
\theoremstyle{remark}

\definecolor{myblue}{RGB}{35,92,255}
\definecolor{mypurple}{RGB}{130,45,220}
\definecolor{blockfill}{RGB}{229,239,255}

\newcommand{\A}{\mathcal{A}}
\newcommand{\U}{\mathcal{U}}

\newcommand{\E}{\mathbb{E}}

\newcommand{\norm}[1]{\left\lVert #1 \right\rVert}
\newcommand{\eps}{\varepsilon}

\definecolor{colBlue}{RGB}{24, 95, 165}
\definecolor{colRed}{RGB}{216, 90, 48}
\definecolor{colGray}{RGB}{95, 94, 90}
\definecolor{colGrayLight}{RGB}{180,178,169}
 
\tikzset{
  arr/.style      = {-{Latex[length=5pt,width=4pt]}, line width=1.6pt},
  arrRR/.style    = {-{Latex[length=4pt,width=3.5pt]}, line width=1.1pt,
                     dashed, dash pattern=on 4pt off 2.5pt},
  ck/.style       = {circle, fill=white, draw=colGrayLight,
                     line width=0.6pt, inner sep=0pt, minimum size=5pt},
  startdot/.style = {circle, fill=colGray, inner sep=0pt, minimum size=4pt},
  lbl/.style      = {font=\small},
  lblrr/.style    = {font=\small, text=colGray},
  ckl/.style      = {font=\scriptsize, text=colGrayLight},
}
\title{Counterfactual Certification for Sequential Unlearning\\ with Long-Run Performance Guarantees}

\author[1]{Hanna Benarroch\thanks{Corresponding author: \texttt{hanna.benarroch@ens.psl.eu}}}
\author[2]{Jamal Atif}
\author[1]{Olivier Capp\'e}

\affil[1]{DI ENS, \'Ecole normale sup\'erieure, Universit\'e PSL, CNRS, 75005 Paris, France}
\affil[2]{CMAP, \'Ecole polytechnique, Institut Polytechnique de Paris, 91120 Palaiseau, France}

\date{}

\begin{document}

\maketitle

\begin{abstract}
Certified machine unlearning ensures the proximity between the
unlearned model and a model retrained without the data to be forgotten.
Extending this framework to sequential deletion requests, typically arriving as groups of variable size (e.g., all records of one user or of coordinated users), is challenging. For unlearning methods based on fine-tuning on the retain sets, this leads to either degrading performance or increasing computation as the number of unlearning requests grows.
For independent successive requests, we instead certify each model release against a counterfactual unlearning
trajectory in which the group being deleted never took part. With a fixed fine-tuning budget per deletion, we show that, for smooth strongly convex objectives, this approach keeps the performance loss bounded, uniformly in the number of requests.
Experiments with multiclass logistic regression on FEMNIST show that the resulting algorithm needs fewer corrective steps than existing sequential methods and outperforms training-time differential privacy, particularly when group sizes are heterogeneous.
\end{abstract}

\section{Introduction}
\label{sec:introduction}

Training data may need to be removed after a model has been trained, whether to honor 
deletion requests, comply with privacy laws, or meet broader regulatory requirements 
on deployed AI systems \cite{marino2025position}. Retraining from scratch without the data is 
the gold standard but is often computationally prohibitive; machine unlearning 
seeks to approximate its effect at a substantially lower computational cost, either by enforcing 
exact equivalence with retraining \cite{cao2015towards} or, for randomized algorithms, 
by matching the output distribution of the unlearned model to that of retraining on 
the retained data \cite{ginart2019,sekhari2021}.
We work in the latter setting, and specifically on \emph{certified} unlearning, where 
the deletion procedure comes with an explicit, provable guarantee.
The sequential unlearning setting is challenging as requests arrive asynchronously over
an indefinite horizon, and each usually concerns a \emph{group} of records
(all the records of a user exercising their right to erasure, of
an organization, or of a data source), whose size and content are only revealed upon
arrival of the request.

For a single request, the natural reference for unlearning is a (fictitious) rerun
of the learning algorithm without the deleted data point. 
In contrast, for sequential unlearning the model released at request $r$
results from a history of learning and unlearning operations, and one must first ask:
\emph{against which reference should it be certified?} Following Ginart
et al.~\cite{ginart2019}, most certified sequential analyses
\cite{chien2025langevinunlearningnewperspective,neel2020descenttodeletegradientbasedmethodsmachine,
guo2020certified, mu2025rewind, zhang2024towards, qiao2025hessianfree} use the model
retrained from scratch with all deletions applied, i.e., they treat the successive deleted groups
$G_1,\ldots,G_r$ as a single group $G_1\cup\cdots\cup G_r$.
This joint treatment is natural for dependent requests, such as those of the same individual
or of coordinated individuals, but conservative for independent ones. It is also costly, as
the reference at round $r$ is $\sum_s|G_s|$ records away from the training set, so that the amount of injected noise and/or
the computational effort necessarily grow with $r$
\cite{chien2025langevinunlearningnewperspective,neel2020descenttodeletegradientbasedmethodsmachine,
guo2020certified}
(see also Appendix~\ref{app:baseline-guarantees}).
We call this reference the \emph{cumulative-deletion reference} and propose instead a
\emph{counterfactual sequential reference}: at request $r$, we compare the release with the
trajectory the algorithm \emph{would have produced had $G_r$ never been in the training
set}, the previous deletions $G_1,\ldots,G_{r-1}$ being applied exactly as in the actual
run.

To illustrate the need for such a specific framework for sequential unlearning, consider as an
example the use of a training-time differential privacy (DP) mechanism. For a single
unlearning request, it is well established that DP offers an unlearning guarantee ``for free''~\cite{guo2020certified,sekhari2021}. Note that
DP does not necessarily constitute a viable option for unlearning as it may cause performance
degradation even in the absence of any actual unlearning request and it responds passively to unlearning requests,
which might simply be unacceptable when those requests are dictated by agency principles. However, when the number
of deletion requests exceeds one, the guarantees that can be
provided to the users under the cumulative-deletion reference can only decrease (due to the group privacy
effect). The proposed counterfactual framework is introduced to eliminate this counterintuitive first-mover
advantage, which is artificial for independent requests, by simply requiring that successive requests
be treated similarly to the first one. As we will see, with this proposed definition, training-time DP can provide
non-vanishing unlearning guarantees for independent deletion requests, by paying a cost that depends only
on the maximal number of data points that need to be deleted per request.

However, training-time DP falls short of being adaptive to the successive unlearning requests,
and we focus instead on the use of noisy gradient fine-tuning on the retained data. When
using the proposed unlearning reference together with a fixed budget of
fine-tuning steps per deletion request, we obtain a method that we refer to as
\emph{Counterfactual Sequential Unlearning} (CoSU). In the smooth strongly convex
setting, adapting the shifted-interpolation framework of Bok et
al.~\cite{bok2024shiftedinterpolationdifferentialprivacy}, we derive 
per-request noise scales that depend on the size of the current request and remain
\emph{bounded uniformly} in $r$. In addition, we provide an excess-risk
bound that reaches a \emph{plateau}, suggesting that this approach can successfully handle
long sequences of deletion requests.

On FEMNIST with user-level deletions, we measure the privacy--utility
trade-off of CoSU, provide empirical evidence with membership inference attacks that
independent requests are protected as certified while dependent requests certified
separately may not be, and find that CoSU outperforms sequential baselines (including 
training-time DP), in particular for heterogeneous group sizes.

\section{Related Work}
\label{sec:related-work}

\paragraph{Sequential certified unlearning}
Ginart et al.~\cite{ginart2019} introduced the cumulative-deletion reference, which most
subsequent certified sequential methods retain
\cite{chien2025langevinunlearningnewperspective,
neel2020descenttodeletegradientbasedmethodsmachine,
guo2020certified,mu2025rewind,zhang2024towards,qiao2025hessianfree}.
When requests may depend on previous releases, the sequential definition is no longer
well posed \cite{gupta2021}, and the protection of a deleted record can depend on that of
retained ones \cite{chourasia2023forgetunlearning}; we therefore consider non-adaptive
requests (Assumption~\ref{ass:adaptive}). Neel et
al.~\cite{neel2020descenttodeletegradientbasedmethodsmachine} also distinguish methods
whose entire persistent state is certified from methods that keep a hidden clean state,
which is more efficient but may leak deleted records across later updates
\cite{chourasia2023forgetunlearning,cohen2026protecting}; we adopt the former
(Assumption~\ref{ass:stateless}).

\paragraph{Sequential methods (using the cumulative-deletion reference)}
Descent-to-Delete (D2D) alternates corrective gradient steps and Gaussian perturbation;
when no hidden state is kept, its number of iterations grows with the number of requests
\cite{neel2020descenttodeletegradientbasedmethodsmachine}. Langevin Unlearning uses noisy
gradient descent for both learning and unlearning, and recursively determines the number
of unlearning steps of each deletion through a R\'enyi-divergence bound
\cite{chien2025langevinunlearningnewperspective}. Guo et al.~\cite{guo2020certified}
combine objective perturbation at training time with a Newton-type removal step,
maintaining a certificate on the accumulated gradient residual and retraining once it is
exhausted. Zhang et al.~\cite{zhang2024towards} extend Newton-style certified unlearning
to deep models and sequential removals, but perturb only the final released model, without
the per-request guarantee considered here. Qiao et al.~\cite{qiao2025hessianfree} avoid
explicit Hessian inversion by storing per-sample influence-like quantities; their method
relies on a clean parameter and clean data-dependent statistics kept across requests, and
therefore falls outside our stateless setting.

\paragraph{Counterfactual references}
Several works define deletion guarantees relative to a counterfactual execution in
which the deleted data was never present. Garg et al.~\cite{garg2020formalizing} state
this principle for a general data collector: the collector's state must be the same
whether a party submits data and later requests deletion, or never communicates with
the collector at all. Our definition enforces this comparison for independent
requests (Proposition~\ref{prop:semantics}, Appendix~\ref{app:proofs-definition}). In
machine unlearning, Sekhari et al.~\cite{sekhari2021} follow this principle for a single
deletion request, with a reference obtained by training without the deleted records, then
running the unlearning procedure with an empty request; we extend this reference to sequences of requests.
Chourasia and Shah~\cite{chourasia2023forgetunlearning} also use a counterfactual
execution for non-adaptive requests, but accept any reference that does not depend on the
deleted record, including the cumulative-deletion reference $\A(D_r)$; we fix the reference
to the counterfactual trajectory.

\paragraph{Privacy amplification by iteration}
Our noise calibration relies on privacy amplification by iteration
(PABI)~\cite{feldman2018amplification}: when only the final iterate is released,
contractive noisy iterations leak much less than naive composition suggests
\cite{chourasia2021dpdynamics,ye2022differentially,altschuler2022noisysgd}.
Our main technical tool is the shifted-interpolation of Bok
et al.~\cite{bok2024shiftedinterpolationdifferentialprivacy}, formulated in the $f$-DP
framework, which we apply to pairs of actual and counterfactual unlearning
trajectories. Our technical contribution (Theorem~\ref{thm:eps-delta-tracking}) is to
allocate the effect of the noise across deletion requests, so that each pair is certified
with a request-dependent noise scale $\sigma_r$.

\section{Counterfactual Sequential Unlearning}
\label{sec:definition}

\subsection{Notation}
\label{sec:process}

A \emph{record} (or data point) is an element of a data domain $\mathcal D$. A dataset of
size $n$ is a collection of $n$ records,
\[
D=(d_1,\ldots,d_n)\in\mathcal D^n.
\]
We write $\mathcal D^* := \bigcup_{n \ge 0} \mathcal D^n$ for the set of all datasets
of arbitrary finite size. A \emph{group}
$G\subseteq D$ is a set of records of $D$, of size $|G|$, and $D\setminus G$ is the dataset
of the remaining records. Two groups are disjoint if they share no record, and the
\emph{content} of a group is the values of its records.

Models are parameters $\theta\in\Theta\subseteq\mathbb R^p$, and $\mathcal P(\Theta)$ denotes
the probability measures on $\Theta$. A randomized \emph{learning algorithm} is a Markov kernel
\[
\A:\mathcal D^*\to\mathcal P(\Theta),
\]
and we write $\theta\sim\A(D)$ for a draw from $\A(D)$. A \emph{sequential unlearning
procedure} is a sequence of Markov kernels
\[
\U_r:\Theta\times\mathcal D^*\to\mathcal P(\Theta),\qquad r\ge1,
\]
which map the current parameter $\theta$ and the retained dataset $D'$ to a random updated parameter
$\theta^+\sim\U_r(\theta,D')$.

\begin{definition}[$(\varepsilon,\delta)$-indistinguishability \cite{neel2020descenttodeletegradientbasedmethodsmachine}]
\label{def:indistinguishability}
Two $\Theta$-valued random variables $X$ and $Y$ satisfy $X\simeq_{\varepsilon,\delta}Y$ if,
for every measurable set $S\subseteq\Theta$,
\[
e^{-\varepsilon}\bigl(\Pr(Y\in S)-\delta\bigr)\le\Pr(X\in S)\le e^{\varepsilon}\Pr(Y\in S)+\delta .
\]
\end{definition}

\subsection{Sequential unlearning process}

Starting from the original dataset $D_0:=D$, the model owner receives an online sequence
of deletion requests $G_1,G_2,\ldots$, each of which removes a group of records from the
current dataset:
\[
D_0\xrightarrow{\;G_1\;}D_1\xrightarrow{\;G_2\;}D_2\xrightarrow{\;G_3\;}\cdots,
\qquad D_r:=D_{r-1}\setminus G_r .
\]
The groups $G_r$ are pairwise disjoint. A group may be a single record, all the records of a
user (as in \emph{user-level deletion}, our running example), or all the records of an
organization. Request $r$ is certified at a privacy level $(\varepsilon_r,\delta_r)$, which is
known, together with the size $g_r:=|G_r|$, only when the request arrives. The number of
requests processed so far and the size and privacy level of each
request are public; which records a group contains is private.

The initial model and the \emph{actual trajectory} are defined by
\[
\theta_0\sim\A(D_0),\qquad
\theta_r\mid\theta_{r-1}\sim\U_r(\theta_{r-1},D_r),\quad r\ge1.
\]
We make the following assumptions. Assumption~\ref{ass:independent-requests} is specific to
the counterfactual reference and is discussed in Section~\ref{sec:counterfactual-reference}.

\begin{assumption}[Public maps]
\label{ass:independence}
$\A$ and $(\U_r)_{r\ge1}$ are fixed independently of the data. $\U_r$
may depend on the requests only through public quantities: the counter $r$ and the sizes and
privacy levels $(g_s,\varepsilon_s,\delta_s)_{s\le r}$ of the requests received so far.
\end{assumption}

\begin{assumption}[Non-adaptive requests]
\label{ass:adaptive}
The requests (groups, sizes and privacy levels) are independent of the released models
$\theta_0,\theta_1,\ldots,\theta_r$.
\end{assumption}

\begin{assumption}[Stateless model]
\label{ass:stateless}
Between requests, the owner keeps only $\theta_r$, $D_r$ and the public quantities of
Assumption~\ref{ass:independence}. No hidden state involving deleted records, such as
cached gradients or checkpoints, is kept.
\end{assumption}

\begin{assumption}[Single release]
\label{ass:single-release}
When $\theta_r$ is published, the previous release $\theta_{r-1}$ is withdrawn: it can no
longer be served, queried or otherwise accessed.\footnote{Otherwise, comparing successive releases enables
differencing attacks that can reconstruct deleted data \cite{cohen2026protecting}.}
\end{assumption}

\begin{assumption}[Independent requests]
\label{ass:independent-requests}
For every $r$, given the size and privacy level of request $r$, the law of the other requests is
the same whether or not $G_r$ was ever in the training set, and does not depend on the
content of $G_r$.
\end{assumption}

\subsection{Cumulative-deletion reference}
The standard reference, introduced by Ginart et al.~\cite{ginart2019}, is a fresh run of
the learning algorithm on the retained dataset $D_r=D\setminus(G_1\cup\cdots\cup G_r)$:
\[
\theta_r^{\mathrm{cd}}\sim\A(D_r).
\]

\begin{definition}[Cumulative-deletion sequential unlearning \cite{ginart2019}]
\label{def:naive_sequential_unlearning}
A pipeline $(\A,(\U_s)_{s\ge1})$ satisfies $(\varepsilon_r,\delta_r)$-cumulative-deletion
sequential unlearning at request $r$ if, for every dataset $D$ and every request sequence
$(G_1,\ldots,G_r)$,
\[
\theta_r\simeq_{\varepsilon_r,\delta_r}\theta_r^{\mathrm{cd}},
\]
where $\theta_r$ and $\theta_r^{\mathrm{cd}}$ are defined as above.
\end{definition}

\subsection{Counterfactual sequential reference}
\label{sec:counterfactual-reference}

With the \emph{counterfactual} reference, a group that has already been unlearned is not
counted again: request $r$ only has to certify the deletion of the current group $G_r$,
and not that of the previous groups $G_1,\ldots,G_{r-1}$. The
reference is therefore the trajectory obtained by removing $G_r$ from the initial
dataset and processing the previous deletions $G_1,\ldots,G_{r-1}$ exactly as in the
actual run (Figure~\ref{fig:standard-vs-counterfactual-reference},
Appendix~\ref{app:illustration}). For independent requests, it describes the world in
which $G_r$ was never collected, with request $r$ still issued as a void request of the
same size and privacy level. This is the comparison required by the deletion-compliance notion
of Garg et al.~\cite{garg2020formalizing} (Proposition~\ref{prop:semantics},
Appendix~\ref{app:proofs-definition}).

This comparison keeps the other requests fixed, which is legitimate only if they would
have been issued even without $G_r$. This holds when each request depends only on its
own group and on independent randomness (Assumption~\ref{ass:independent-requests}),
for instance when users leave independently of each other. It fails when one request
triggers another, for instance when the users of an organization leave together:
without $G_r$, the requests it triggers would not have been issued either, so a
reference that keeps them no longer describes the world without $G_r$. Such dependent
requests must be merged into a single group (Proposition~\ref{prop:necessity}) or, when
they are issued at different times, certified jointly (Appendix~\ref{app:dependent}).

For each $s\ge 0$, we define the \emph{counterfactual retained
dataset}, which is the dataset retained after the first $s$ requests, with $G_r$
removed in advance:
\[
D_s^{\langle r\rangle}
:=
\begin{cases}
D_s\setminus G_{r}, & s<r,\\
D_s, & s\ge r.
\end{cases}
\]
In particular, $D_{r-1}^{\langle r\rangle}=D_r$. 
The \emph{counterfactual
reference trajectory} runs the actual pipeline on these datasets:
\[
\theta_0^{\langle r\rangle}\sim\A\big(D_0^{\langle r\rangle}\big),
\qquad
\theta_s^{\langle r\rangle}\mid\theta_{s-1}^{\langle r\rangle}
\sim\U_s\big(\theta_{s-1}^{\langle r\rangle},D_s^{\langle r\rangle}\big),
\quad s\ge 1.
\]
From episode $r$ on, since $D_s^{\langle r\rangle}=D_s$, it applies the same updates
$\U_s(\cdot,D_s)$ as the actual trajectory.

\begin{definition}[Counterfactual sequential unlearning]
\label{def:counterfactual_sequential_unlearning}
Let the requests satisfy Assumption~\ref{ass:independent-requests}. A pipeline
$(\A,(\U_s)_{s\ge1})$ satisfies $(\varepsilon_r,\delta_r)$-counterfactual sequential
unlearning at request $r$ if, for every dataset $D$ and every request sequence
$(G_1,\ldots,G_r)$,
\[
\theta_r\simeq_{\varepsilon_r,\delta_r}\theta_r^{\langle r\rangle}.
\]
\end{definition}

Definitions~\ref{def:naive_sequential_unlearning}
and~\ref{def:counterfactual_sequential_unlearning} are not directly comparable, since
they certify the same release against different references. After request $r$, the
cumulative-deletion definition certifies the whole deleted set
$G_1\cup\dots\cup G_r$ at once, whereas the counterfactual definition gives a 
certificate for the current request $G_r$ only.

\subsection{Properties of counterfactual sequential unlearning}
\label{sec:consequences-counterfactual}

The counterfactual reference raises two questions that do not arise with the
cumulative-deletion reference. First, a reference is valid only if its law does not depend on the content of the group
being deleted. This is
immediate for the cumulative-deletion reference $\theta_r^{\mathrm{cd}}\sim\A(D_r)$. For the
counterfactual reference, it must be checked, because its trajectory goes through the
kernels $\U_1,\ldots,\U_{r-1}$, which were applied in the actual process while $G_r$ was
still in the training set.

\begin{proposition}[Independence from the deleted group]
\label{prop:independence-reference}
Under Assumptions~\ref{ass:independence} and~\ref{ass:stateless}, the law of
$\theta_r^{\langle r\rangle}$ depends on $D$ only through $D\setminus G_r$, the groups
$G_1,\ldots,G_{r-1}$, which are contained in $D\setminus G_r$, and the sizes and privacy levels of
the requests. In particular, it is unchanged if the content of $G_r$ is replaced by
arbitrary values in $\mathcal D$.
\end{proposition}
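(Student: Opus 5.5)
The plan is to write the law of $\theta_r^{\langle r\rangle}$ explicitly as a composition of Markov kernels and check that every ingredient is a function of the allowed quantities only. By definition,
\[
\mathrm{Law}\bigl(\theta_r^{\langle r\rangle}\bigr)
=\A\bigl(D_0^{\langle r\rangle}\bigr)\,\U_1\bigl(\cdot,D_1^{\langle r\rangle}\bigr)\cdots\U_r\bigl(\cdot,D_r^{\langle r\rangle}\bigr),
\]
where the composition is the usual Chapman--Kolmogorov composition of kernels. It therefore suffices to show two things. First, the kernels themselves, viewed as maps $\A$ and $\U_s$, do not depend on the content of $G_r$. Second, each dataset argument $D_s^{\langle r\rangle}$, for $0\le s\le r$, is a function of $D\setminus G_r$ and $G_1,\ldots,G_{r-1}$ alone.

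For the kernels, I will invoke Assumption~\ref{ass:independence}. The map $\A$ is fixed independently of the data. Each $\U_s$ is a fixed kernel indexed only by $s$ and the public tuple $(g_{s'},\varepsilon_{s'},\delta_{s'})_{s'\le s}$. Assumption~\ref{ass:stateless} ensures that $\U_s$ takes no further hidden argument, such as cached gradients or checkpoints, that could carry information about $G_r$ from the actual run. Its only inputs are the current parameter and the dataset handed to it. This is the point where the actual process, which saw $G_r$, could otherwise leak into the reference, so I would state it carefully. The kernel used at step $s$ of the reference is the \emph{same} map as in the actual run, namely $\U_s(\cdot,\cdot)$, and it is evaluated only on reference quantities.

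For the datasets, I will argue by cases. When $s<r$, we have $D_s^{\langle r\rangle}=D_s\setminus G_r=(D\setminus G_r)\setminus(G_1\cup\cdots\cup G_s)$, using the disjointness of the groups. When $s=r$, we have $D_r^{\langle r\rangle}=D_r=(D\setminus G_r)\setminus(G_1\cup\cdots\cup G_{r-1})$. In both cases the dataset is computed from $D\setminus G_r$ and $G_1,\ldots,G_{r-1}$, and these groups are contained in $D\setminus G_r$ because they are disjoint from $G_r$. Hence replacing the records of $G_r$ by arbitrary values leaves every $D_s^{\langle r\rangle}$ unchanged. The sizes $g_s$ are also unchanged, including $g_r$ used in $\U_r$, since $g_r$ is just the size of $G_r$ and not its content. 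Since every factor in the composition is unchanged, so is the law.

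I do not expect any of these steps to be a real obstacle. The main subtle point is ensuring that $\U_s$ for $s<r$ cannot depend on $G_r$ through anything other than its dataset argument, and this is exactly what Assumptions~\ref{ass:independence} and~\ref{ass:stateless} rule out. The remaining care needed is measure-theoretic: the composition of Markov kernels is well defined and determined by its factors, which is standard.
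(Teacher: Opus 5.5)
Your proposal is correct and follows essentially the same route as the paper, which also uses disjointness to write each $D_s^{\langle r\rangle}$ as $D\setminus G_r$ minus earlier groups and then argues by induction on $s$, invoking Assumption~\ref{ass:independence} for $\A$ and $\U_s$ and Assumption~\ref{ass:stateless} to exclude hidden state. Your composition of kernels is that induction unrolled, and your explicit handling of the case $s=r$ and of the public size $g_r$ entering $\U_r$ spells out details the paper leaves implicit.
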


Second, the successive retained datasets of the cumulative-deletion reference are nested,
$D_0\supset D_1\supset\cdots\supset D_r$. In our approach, however, each request defines its own
counterfactual trajectory: for $q\neq r$ and $s<\min(q,r)$,
$D_s^{\langle q\rangle}=D_s\setminus G_q$ and $D_s^{\langle r\rangle}=D_s\setminus G_r$ differ. 
The question is then whether the
certificate of an earlier request remains valid after later requests are processed.

\begin{proposition}[Persistence]
\label{prop:persistence-postprocessing}
Let Assumptions~\ref{ass:independence}--\ref{ass:stateless} hold, and let
$\theta_s^{\langle r\rangle}$, $s\ge r$, continue the counterfactual trajectory of request $r$.
If $\theta_r\simeq_{\varepsilon_r,\delta_r}\theta_r^{\langle r\rangle}$, then
$\theta_s\simeq_{\varepsilon_r,\delta_r}\theta_s^{\langle r\rangle}$ for every $s>r$.
\end{proposition}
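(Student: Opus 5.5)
The argument is a post-processing argument. The starting observation is that, by the definition of the counterfactual retained datasets, $D_s^{\langle r\rangle}=D_s$ for every $s\ge r$. So for $s>r$ the counterfactual trajectory of request $r$ evolves through exactly the same kernels as the actual trajectory:
\[
\theta_s^{\langle r\rangle}\mid\theta_{s-1}^{\langle r\rangle}\sim\U_s\big(\theta_{s-1}^{\langle r\rangle},D_s\big),
\qquad
\theta_s\mid\theta_{s-1}\sim\U_s(\theta_{s-1},D_s).
\]
Composing these, we define a single Markov kernel
\[
K_{r\to s}:=\U_{r+1}(\cdot,D_{r+1})\circ\cdots\circ\U_s(\cdot,D_s).
\]
Then $\theta_s\sim K_{r\to s}(\theta_r,\cdot)$ and $\theta_s^{\langle r\rangle}\sim K_{r\to s}(\theta_r^{\langle r\rangle},\cdot)$.

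Before applying post-processing, we must check that $K_{r\to s}$ is one fixed kernel, applied identically to both trajectories.
\begin{itemize}
\item Assumption~\ref{ass:independence} ensures that each $\U_t$ is a fixed map whose only request-dependence is through the public quantities $(g_u,\varepsilon_u,\delta_u)_{u\le t}$. These quantities are the same in the actual and counterfactual worlds, since the counterfactual still issues request $r$ as a void request of the same size and privacy level.
\item Assumption~\ref{ass:stateless} ensures that the update at step $t$ uses only the current parameter, $D_t$, and public data. No hidden state could differ between the two trajectories, for example cached gradients involving $G_r$.
\item Assumption~\ref{ass:adaptive} ensures that the later groups $G_{r+1},\dots,G_s$, and hence the datasets $D_{r+1},\dots,D_s$, do not depend on the released $\theta_r$. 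The kernel is therefore not itself a function of the input parameter beyond its Markov dependence. We work conditionally on the fixed request sequence, as in Definition~\ref{def:counterfactual_sequential_unlearning}.
\end{itemize}

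The core step is the post-processing inequality for $(\varepsilon,\delta)$-indistinguishability: if $X\simeq_{\varepsilon,\delta}Y$ and $K$ is any Markov kernel, then $XK\simeq_{\varepsilon,\delta}YK$. I would prove it directly from Definition~\ref{def:indistinguishability}. Fix a measurable $S$ and set $h(\theta):=K(\theta,S)\in[0,1]$, so that $\Pr(XK\in S)=\E[h(X)]$. Writing $\E[h(X)]=\int_0^1\Pr(h(X)>t)\,dt$, each superlevel set $\{h>t\}$ is measurable. Applying the upper inequality to each such set and integrating over $t\in[0,1]$ gives
\[
\E[h(X)]\le e^{\varepsilon}\E[h(Y)]+\delta.
\]
The lower inequality $e^{-\varepsilon}(\E[h(Y)]-\delta)\le\E[h(X)]$ follows in the same way. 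Alternatively, one can cite the standard post-processing property of approximate DP or hockey-stick divergences.

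Applying this with $X=\theta_r$, $Y=\theta_r^{\langle r\rangle}$ and $K=K_{r\to s}$ yields $\theta_s\simeq_{\varepsilon_r,\delta_r}\theta_s^{\langle r\rangle}$ for every $s>r$.

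The only delicate point is the second step, namely justifying that both trajectories are driven by the same kernel, independent of the point at which they are evaluated. This is where all three assumptions are genuinely needed; the analytic post-processing step itself is routine.
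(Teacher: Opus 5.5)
Your proposal is correct and follows the paper's proof essentially step for step. In both, $D_s^{\langle r\rangle}=D_s$ for $s\ge r$ and the three assumptions give a single common kernel driving both trajectories from round $r$ on, and post-processing concludes; the only difference is that you prove the post-processing inequality directly from Definition~\ref{def:indistinguishability} via the layer-cake formula, whereas the paper cites it from Dwork and Roth.
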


After $r$ requests, the current release thus carries one certificate per deleted group. These
certificates do not, however, combine into a single guarantee with respect to the
cumulative-deletion reference $\A(D_r)$. The proofs of Propositions~\ref{prop:independence-reference} and~\ref{prop:persistence-postprocessing} are
given in Appendix~\ref{app:proofs-definition}.

\subsection{Training-time privacy as a baseline}
\label{sec:no-provisioning}

A natural baseline is to train with differential privacy (DP) and perform no update at
deletion time, so that $\theta_r\sim\A(D)$ for every $r$. The counterfactual reference is
then $\theta_r^{\langle r\rangle}\sim \A(D\setminus G_r)$, and
Definition~\ref{def:counterfactual_sequential_unlearning} reduces to DP with respect to
the removal of $G_r$. A learner that is $(\varepsilon,\delta)$-DP for groups of at most
$g_{\max}$ records therefore satisfies this definition for every request of size at most
$g_{\max}$ and privacy level at least $(\varepsilon,\delta)$
(Proposition~\ref{prop:no-provisioning}, Appendix~\ref{app:proofs-definition}).

However, in addition to the limitations of differential privacy for unlearning listed 
in the introduction, the noise scale must be
calibrated before training for the largest group size and the strictest privacy level.
The resulting loss of accuracy is measured in Section~\ref{sec:exp-dp}.

\section{The CoSU Algorithm}
\label{sec:method}

We now instantiate the definition with a concrete learning and
unlearning scheme, which we call Counterfactual Sequential Unlearning (CoSU).
The training objective ($r=0$, with $D_0=D$) and the retained objective after $r$
deletion requests are
\[
f_r(\theta):=\sum_{d\in D_r}\ell(\theta,d)+\Omega_\lambda(\theta),\qquad r\ge0,
\]
where \(\Omega_\lambda\) is a fixed regularization term.

\begin{assumption}[Smooth strongly convex objectives]
\label{ass:contraction}
Each loss $\ell(\cdot,d)$ is convex, differentiable and $M$-Lipschitz, and, for every subset
$D'\subseteq D$, $\sum_{d\in D'}\ell(\cdot,d)+\Omega_\lambda$ is $m$-strongly convex and
$L$-smooth, with public constants $M,m,L$. With step size $\eta=1/L$, every gradient map
$\theta\mapsto\theta-\eta\nabla f_r(\theta)$ is $c$-contractive, $c:=1-\eta m<1$.
\end{assumption}

Starting from a data-independent $\theta_{\mathrm{init}}$, \(\A\) runs \(T\) steps of noisy gradient descent on
\(f_0\), with noise scale \(\sigma_0\geq 0\). For $k=0,\dots,T-1,$
\[
    \theta_{k+1}
    =
    \theta_k - \eta \nabla f_0(\theta_k)
    +
    \sqrt{2\eta}\,\sigma_0 Z_{k+1},
\]
where \(Z_{k+1}\sim \mathcal N(0,I)\) are independent standard Gaussian vectors.
The case \(\sigma_0=0\) corresponds to a deterministic training phase.

After the \(r\)-th deletion request, the unlearning algorithm \(\U_r\) performs
\(K\ll T\) corrective noisy gradient steps on the updated objective \(f_r\), with
noise scale \(\sigma_r\ge0\). Writing
\[
    N_r := T+rK,
\]
the \(r\)-th unlearning episode consists in, for $k=N_{r-1},\dots,N_r-1,$
\[
    \theta_{k+1}
    =
    \theta_k - \eta \nabla f_r(\theta_k)
    +
    \sqrt{2\eta}\,\sigma_r Z_{k+1}.
\]
In response to the $r$-th unlearning request, the algorithm releases $\theta_{N_r}$. Each
request requires $K$ gradient steps, and using $K\ll T$ keeps
this cost far below that of retraining. The noise scale $\sigma_r$ is calibrated when
request $r$ arrives (Algorithm~\ref{alg:cosu}), so that $\theta_{N_r}$ satisfies
counterfactual sequential unlearning at level $(\varepsilon_r,\delta_r)$. With $\sigma_0=0$, the
model released before any request is the non-private one; a
positive $\sigma_0$ lowers utility after training but makes the first 
unlearning request easier to satisfy (Proposition~\ref{prop:training-noise}).

\section{Theoretical Guarantees}
\label{sec:theory}

In this section, we state the unlearning and utility guarantees for CoSU
(detailed in Algorithm~\ref{alg:cosu}).
Each request is certified within the Gaussian DP framework~\cite{dong2022gaussian} in which a
guarantee at level
$\bar\mu_r:=\sqrt{2\log(1/\delta_r)+2\varepsilon_r}-\sqrt{2\log(1/\delta_r)}$ implies
$(\varepsilon_r,\delta_r)$-indistinguishability (Lemma~\ref{lem:gdp-to-dp}).

\begin{algorithm}[t]
\caption{CoSU with online calibration (Theorem~\ref{thm:eps-delta-tracking})}
\label{alg:cosu}
\begin{algorithmic}[1]
\STATE Train: $T$ noisy gradient steps on $f_0$ with noise scale $\sigma_0\ge0$; release $\theta_{N_0}$;
$W_0\gets2\eta\sigma_0^2S_2(T)$
\FOR{each request $r$, with public size $g_r$ and level $\bar\mu_r$}
\STATE $b_r\gets g_rc^KI_{r-1}$;\quad $P_r\gets c^{2K}W_{r-1}$
\STATE $\bar x_r\gets P_r\min_{s<r,\ W_s>0}\,g_rI_s/(c^{K(r-s)}W_s)$
\STATE $x_r\gets\min\{\bar\mu_r^2P_r/b_r,\ \bar x_r\}$ \ ($x_r\gets0$ if $P_r=0$)
\STATE $\sigma_r^2\gets\dfrac{(b_r-x_r)^2}{2\eta S_2(K)\,(\bar\mu_r^2-x_r^2/P_r)}$
\ ($\sigma_r\gets0$ if $x_r=b_r$)
\STATE $K$ noisy gradient steps on $f_r$ with noise scale $\sigma_r$; release $\theta_{N_r}$
\STATE $W_r\gets P_r+2\eta\sigma_r^2S_2(K)$
\ENDFOR
\end{algorithmic}
\end{algorithm}

Let $S_2(k):=\sum_{u=0}^{k-1}c^{2u}$ and $W_0:=2\eta\sigma_0^2S_2(T)$. For each
request $r$, let
\begin{align*}
I_r&:=\eta M\,\frac{1-c^{N_r}}{1-c}, & b_r&:=g_r\,c^K I_{r-1},\\
P_r&:=c^{2K}W_{r-1}, & W_r&:=P_r+2\eta\sigma_r^2S_2(K).
\end{align*}
The quantity $b_r$ bounds the gap between the actual and counterfactual trajectories at
the release of request $r$. The quantity $W_r$ is the variance of the cumulative noise present in
the model after episode $r$: its first component, $P_r$, comes from past noise, injected
before episode $r$, and its second component represents the contribution of the noise added during the $r$-th
unlearning episode.

\begin{theorem}[Noise calibration for CoSU]
\label{thm:eps-delta-tracking}
Let Assumptions~\ref{ass:independence}--\ref{ass:contraction} hold. CoSU satisfies
$(\varepsilon_r,\delta_r)$-counterfactual sequential unlearning at every request
$r\ge1$ if each unlearning episode $r$ is run with the noise variance
\[
\sigma_r^2:=\frac{(b_r-x_r)^2}{2\eta S_2(K)\,\bigl(\bar\mu_r^2-x_r^2/P_r\bigr)},
\]
where $x_r:=\min\bigl\{\bar\mu_r^2P_r/b_r,\ \bar x_r\bigr\}$ and $\bar x_r$ is the value of
the minimization problem
\[
\bar x_r:=P_r\min_{0\le s<r,\ W_s>0}\frac{g_rI_s}{c^{K(r-s)}W_s},
\]
which is well defined when $P_r>0$. By convention, $x_r:=0$ and $x_r^2/P_r:=0$ when $P_r=0$, and $\sigma_r:=0$ when
$x_r=b_r$.
\end{theorem}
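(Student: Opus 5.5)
We will compare, for a fixed request $r$, the actual trajectory $(\theta_k)$ and the counterfactual trajectory $(\theta^{\langle r\rangle}_k)$ run with coupled noise, and bound the trade-off function between their laws at time $N_r$ via the shifted-interpolation technique of Bok et al.~\cite{bok2024shiftedinterpolationdifferentialprivacy}. Lemma~\ref{lem:gdp-to-dp} then converts a $\bar\mu_r$-GDP bound into $(\varepsilon_r,\delta_r)$-indistinguishability.

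\textbf{Step 1: deterministic gap.} Before episode $r$, the two trajectories apply gradient maps of objectives that differ only through the $g_r$ losses of $G_r$. Each loss gradient is bounded by $M$, so one step creates a discrepancy of at most $\eta M g_r$. Each gradient map is $c$-contractive (Assumption~\ref{ass:contraction}), so with shared noise the gap after $N_{r-1}$ steps is at most $g_r\eta M\sum_{u<N_{r-1}}c^u=g_rI_{r-1}$. During episode $r$ both trajectories use the same map $\U_r(\cdot,D_r)$, so the gap contracts to $b_r=g_rc^KI_{r-1}$ at the release. More generally, the gap accumulated up to the end of episode $s<r$ is at most $g_rI_s$.

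\textbf{Step 2: shifted interpolation across episodes.} Following Bok et al., we introduce an interpolated sequence that moves from the counterfactual toward the actual trajectory. It absorbs part of the gap at each step and pays a Gaussian divergence cost of (shift)$^2/(2\eta\sigma^2)$. The noise injected before episode $r$ is pushed forward by contractions and behaves like a single Gaussian of variance $W_{r-1}$, which becomes $P_r=c^{2K}W_{r-1}$ at time $N_r$. The noise of episode $r$ acts like a Gaussian of variance $2\eta\sigma_r^2S_2(K)$.

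We split the final gap $b_r$ into two parts. A portion $x_r$ is absorbed by the past noise $P_r$, at GDP cost $x_r^2/P_r$. The remainder $b_r-x_r$ is absorbed by the fresh noise, at cost $(b_r-x_r)^2/(2\eta\sigma_r^2S_2(K))$. Composing Gaussian trade-off functions gives a total of
\[
\frac{x_r^2}{P_r}+\frac{(b_r-x_r)^2}{2\eta\sigma_r^2S_2(K)}\le\bar\mu_r^2 .
\]
Solving this with equality gives exactly the stated $\sigma_r^2$. The constraint $x_r\le\bar\mu_r^2P_r/b_r$ keeps $\bar\mu_r^2-x_r^2/P_r>0$, and also ensures the split is sensible, since $x_r\le b_r$ once $x_r^2/P_r\le\bar\mu_r^2$. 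In the degenerate cases $P_r=0$ and $x_r=b_r$ the corresponding term simply vanishes.

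\textbf{Step 3 (main obstacle): legitimacy of using past noise.} Using $P_r$ to absorb a shift $x_r$ is only valid if the earlier noise was not already "spent". Those Gaussians were also needed to certify the earlier requests $s<r$, and they have to cover the gap $g_rI_s$ that exists at earlier times. I would handle this with a single interpolation built backwards over all episodes $s<r$. At the end of episode $s$, the shift that must have been absorbed is at most $g_rI_s$, while the noise available there has variance $W_s$. Propagated to time $N_r$, a shift $x$ requires a corresponding shift $x/c^{K(r-s)}$ at time $N_s$.

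The constraint $x_r\le\bar x_r$ guarantees that the ratio of shift to variance at each earlier checkpoint never exceeds what is consistent with the actual gap. Concretely, the shift $x_rc^{-K(r-s)}W_s/P_r$ must not exceed $g_rI_s$. This is what makes a feasible interpolation exist with total cost $x_r^2/P_r$, by the Cauchy--Schwarz-optimal allocation of shifts proportional to the noise variances.

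I expect verifying this feasibility carefully to be the hard part: the shifts must be monotone and bounded by the true gap at every intermediate step, including within episodes. I would first try to prove it by induction on $s$ using the recursion $W_s=c^{2K}W_{s-1}+2\eta\sigma_s^2S_2(K)$.

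Finally, Proposition~\ref{prop:independence-reference} ensures the reference law is valid, and Assumption~\ref{ass:adaptive} lets us condition on the request sequence. Together these conclude the proof.
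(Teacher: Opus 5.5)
\textbf{Verdict: genuine gap.} Your architecture matches the paper's. You use a shifted-interpolation bound, in the step-dependent-noise version where no shift is allowed on noiseless steps; Bok et al.\ treat a constant noise scale, and here $\sigma_0$ or $\sigma_r$ may vanish. You place shifts $a_{k+1}\propto 2\eta\sigma_{k+1}^2c^{N_r-1-k}$ at one rate on all past steps and another on episode $r$, and reduce to $x+\gamma Q_r=b_r$, $\mu^2=x^2/P_r+(b_r-x)^2/Q_r$.

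The gap is in Step~3, which carries the proof; you leave it open, and the condition you write there is wrong. The part of $x$ absorbed by the steps up to $N_s$, measured at $N_r$, is $(x/P_r)\,c^{2K(r-s)}W_s$, because the propagated variance of those steps is $c^{2K(r-s)}W_s$, not $W_s$. Your fraction $W_s/P_r$ already exceeds $1$ at $s=r-1$, where it equals $c^{-2K}$. Pulled back to $N_s$, the absorbed part is $(x/P_r)\,c^{K(r-s)}W_s$, and requiring it to be at most $g_rI_s$ is exactly $x\le\bar x_r$. Your condition $x_rc^{-K(r-s)}W_s/P_r\le g_rI_s$ is stricter by $c^{-2K(r-s)}$ and is not implied by $x_r\le\bar x_r$. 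At $s=r-1$ it forces $x_r\le c^{2K}b_r$, so it would not certify the stated calibration.

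Checkpoints also do not suffice on their own, since $z_k\ge0$ is needed at every step. The missing idea is not an induction on $W_s$ but a unimodality argument. Let $F(k)$ be the slack in $\sum_{\ell<k}c^{N_r-1-\ell}a_{\ell+1}\le\sum_{\ell<k}c^{N_r-1-\ell}s_\ell$. On each phase of constant noise, both $s_\ell$ and the shift rate $\rho$ are constant. The increment $c^{N_r-1-\ell}(s_\ell-\rho\,c^{N_r-1-\ell})$ therefore changes sign at most once, from positive to negative. Hence $F$ is minimized on each phase at an endpoint $N_q$, where $F(N_q)=g_rc^{K(r-q)}I_q-(x/P_r)\,c^{2K(r-q)}W_q$, and $x\le\bar x_r$ makes this nonnegative.

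Your well-definedness argument in Step~2 is also incorrect. The bound $x_r\le\bar\mu_r^2P_r/b_r$ alone does not give $\bar\mu_r^2-x_r^2/P_r>0$: if $\bar\mu_r^2P_r>b_r^2$, the value $x_r=\bar\mu_r^2P_r/b_r$ gives $x_r^2/P_r>\bar\mu_r^2$. Nor does $x_r^2/P_r\le\bar\mu_r^2$ imply $x_r\le b_r$. What is needed is that the $s=r-1$ term of the cap equals $P_rg_rI_{r-1}/(c^KW_{r-1})=g_rc^KI_{r-1}=b_r$, so $x_r\le b_r$. This is the checkpoint condition $F(N_{r-1})=b_r-x_r\ge0$, i.e., $\gamma\ge0$. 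Then $x_r<b_r$ and $x_rb_r\le\bar\mu_r^2P_r$ give $x_r^2/P_r<\bar\mu_r^2$, so $\sigma_r^2$ is finite and $\mu=\bar\mu_r$. In the remaining case $x_r=b_r$, you get $\sigma_r=0$ and cost $b_r^2/P_r\le\bar\mu_r^2$.
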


The quantity $x_r$ is the part of the gap $b_r$ covered by the past noise, so
that the noise of episode $r$ only has to cover the remaining part $b_r-x_r$. Although
$\sigma_r^2$ depends on the previous requests through $P_r$ and $x_r$, it remains bounded
uniformly in $r$.

\begin{corollary}[Uniformly bounded noise]
\label{cor:pay-as-you-go}
Let $h_r:=g_r/\bar\mu_r$ and
$\Gamma:=\frac{\eta M^2c^{2K}}{2}\frac{1+c}{1-c}$.
\begin{enumerate}
\item[(a)] For every request sequence and every $r$,
\[
\sigma_r^2\le\frac{h_r^2\,\Gamma}{1-c^{2K}}.
\]
\item[(b)] If $\sigma_0=0$ and all requests have the same size $g$ and level $\bar\mu$,
then, with $h:=g/\bar\mu$,
\begin{align*}
\sigma_1^2&=\frac{h^2\Gamma\,(1-c^T)^2}{1-c^{2K}},\\
\sigma_r^2&=h^2\Gamma\Bigl(1-\frac{2c^{T+(r-1)K}}{1+c^K}\Bigr)\quad\text{for } r\ge2,
\end{align*}
which increases to $h^2\Gamma$ as $r\to\infty$.
\end{enumerate}
\end{corollary}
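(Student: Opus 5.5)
The plan is to read both parts off the closed form of $\sigma_r^2$ in Theorem~\ref{thm:eps-delta-tracking}, viewed as a function of $x_r$. Write
\[
\varphi(x):=\frac{(b_r-x)^2}{\bar\mu_r^2-x^2/P_r},
\qquad\text{so that}\qquad
\sigma_r^2=\frac{\varphi(x_r)}{2\eta S_2(K)} .
\]
Two identities will be used throughout: $S_2(K)=(1-c^{2K})/(1-c^2)$, and $I_{r-1}\le \eta M/(1-c)$ for every $r$.

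\textbf{Part (a).} I will show that $\sigma_r^2$ is largest when $x_r=0$. Differentiating gives
\[
\varphi'(x)=\frac{2(b_r-x)\bigl(b_rx/P_r-\bar\mu_r^2\bigr)}{\bigl(\bar\mu_r^2-x^2/P_r\bigr)^2}.
\]
This is $\le0$ on the admissible range $0\le x\le\min\{b_r,\bar\mu_r^2P_r/b_r\}$, since there $b_r-x\ge0$ and $b_rx/P_r\le\bar\mu_r^2$. One must check that the range where the denominator stays positive and the theorem's conventions apply (in particular $\sigma_r=0$ when $x_r=b_r$) do not break monotonicity; this bookkeeping is the delicate point of (a). Granting it, $\varphi(x_r)\le\varphi(0)=b_r^2/\bar\mu_r^2$, which also covers the case $P_r=0$ where $x_r=0$ by convention. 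Therefore
\[
\sigma_r^2\le\frac{g_r^2c^{2K}I_{r-1}^2(1-c^2)}{2\eta\bar\mu_r^2(1-c^{2K})}
\le\frac{h_r^2\,\eta M^2c^{2K}(1+c)}{2(1-c)(1-c^{2K})}
=\frac{h_r^2\,\Gamma}{1-c^{2K}}.
\]

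\textbf{Part (b), first request.} With $\sigma_0=0$ we have $W_0=0$, hence $P_1=0$ and $x_1=0$. The computation of (a) then holds with equality, using $I_0=\eta M(1-c^T)/(1-c)$, and gives the stated $\sigma_1^2$. It also gives $W_1=b_1^2/\bar\mu^2=h^2c^{2K}I_0^2$.

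\textbf{Part (b), induction.} I will prove by induction that $W_s=h^2c^{2K}I_{s-1}^2$ for all $s\ge1$, and that for $r\ge2$ the minimum defining $x_r$ is attained by the first branch, $x_r=\bar\mu^2P_r/b_r$. This branch identification is the main obstacle. Note that the term $s=r-1$ of $\bar x_r$ equals exactly $b_r$, so choosing the wrong branch would give $\sigma_r=0$. Using the induction hypothesis, the inequality $\bar\mu^2P_r/b_r\le P_r\,gI_s/(c^{K(r-s)}W_s)$ reduces, for each $1\le s<r$, to
\[
c^{K(r-s+1)}I_{s-1}^2\le I_sI_{r-1},
\]
which holds because $I$ is increasing and $c<1$. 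The term $s=0$ is excluded since $W_0=0$. Moreover $b_r^2/\bar\mu^2=h^2c^{2K}I_{r-1}^2\ge P_r=h^2c^{4K}I_{r-2}^2$, so $\varphi(x_r)$ is nonnegative. On this branch $\varphi(x_r)=b_r^2/\bar\mu^2-P_r$, so
\[
\sigma_r^2=\frac{b_r^2/\bar\mu^2-P_r}{2\eta S_2(K)}
\qquad\text{and}\qquad
W_r=P_r+2\eta\sigma_r^2S_2(K)=\frac{b_r^2}{\bar\mu^2},
\]
which closes the induction.

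\textbf{Part (b), closed form.} It remains to evaluate
\[
\sigma_r^2=\frac{h^2c^{2K}\bigl(I_{r-1}^2-c^{2K}I_{r-2}^2\bigr)}{2\eta S_2(K)}.
\]
Set $a:=c^{N_{r-1}}$, so that $c^Kc^{N_{r-2}}=a$. Then
\[
\Bigl(\frac{1-c}{\eta M}\Bigr)^2\bigl(I_{r-1}^2-c^{2K}I_{r-2}^2\bigr)
=(1-a)^2-(c^K-a)^2
=(1-c^{2K})-2a(1-c^K).
\]
Dividing by $1-c^{2K}$ gives the factor $1-2a/(1+c^K)$. Combined with the constants this yields $h^2\Gamma\bigl(1-2c^{T+(r-1)K}/(1+c^K)\bigr)$, which increases to $h^2\Gamma$ as $r\to\infty$.
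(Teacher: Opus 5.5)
Your proposal is correct and follows the paper's proof essentially step for step. For (a), the paper uses the same monotonicity of $\varphi$ on $[0,x_r]$, and the bookkeeping you flag is immediate: $x_r\le b_r$ because the $s=r-1$ term of the cap equals $b_r$, the denominator stays positive except when $x_r=b_r$, and both that case ($\sigma_r=0$) and $P_r=0$ ($x_r=0$) are trivial. For (b), the paper runs the same induction ($W_r=b_r^2/\bar\mu^2$, first branch of $x_r$ active) and uses the same identity for the closed form; the only difference is that it assumes the weaker condition $h_j\le c^{-K(r-j)/2}h_r$ rather than constant sizes and levels, and in your setting this reduces to your inequality $c^{K(r-q+1)}I_{q-1}^2\le I_qI_{r-1}$.
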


By (a), $\sigma_r^2$ is bounded in terms of the size and level of request $r$ only:
the bound does not depend on the previous requests. It also decreases geometrically with
the number $K$ of corrective steps, since $\Gamma$ is proportional to $c^{2K}$. By (b),
for identical requests, the first request needs the largest noise,
$\sigma_1^2\approx h^2\Gamma/(1-c^{2K})$, to compensate for the absence of noise during training. The following
requests need less, $\sigma_r^2\le h^2\Gamma$, since the past noise has already covered part of the gap.

\begin{proposition}[Training noise]
\label{prop:training-noise}
With training noise $\sigma_0>0$, the first request of CoSU requires
\[
\sigma_1^2=c^{2K}\frac{S_2(T)}{S_2(K)}
\bigl(\sigma_0^\star(g_1,\bar\mu_1)^2-\sigma_0^2\bigr)_+ ,
\]
where
\[
\sigma_0^\star(g,\bar\mu):=\frac{g\,I_0}{\bar\mu\sqrt{2\eta S_2(T)}}.
\]
\end{proposition}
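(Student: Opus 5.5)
Proposition~\ref{prop:training-noise} follows from specializing Theorem~\ref{thm:eps-delta-tracking} to $r=1$; after that, only algebra in the quantities $P_1$, $b_1$, $\bar x_1$ remains. With $r=1$ the minimization defining $\bar x_1$ runs over the single index $s=0$, and it is well defined since $W_0=2\eta\sigma_0^2S_2(T)>0$ when $\sigma_0>0$. The plan is to write the three quantities explicitly,
\[
P_1=c^{2K}W_0,\qquad b_1=g_1c^KI_0,\qquad \bar x_1=P_1\frac{g_1I_0}{c^KW_0}=g_1c^KI_0=b_1 ,
\]
so that $x_1=\min\{\bar\mu_1^2P_1/b_1,\ b_1\}$. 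The threshold $\sigma_0^\star$ enters through the case split between the two terms of this minimum.

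First I will check that $b_1^2\le\bar\mu_1^2P_1$ is equivalent to $g_1^2I_0^2\le\bar\mu_1^2\cdot 2\eta\sigma_0^2S_2(T)$, since the factors $c^{2K}$ cancel. This is in turn equivalent to $\sigma_0\ge\sigma_0^\star(g_1,\bar\mu_1)$. In that case $x_1=b_1$, so $\sigma_1=0$ by the convention in Theorem~\ref{thm:eps-delta-tracking}, which agrees with the positive part in the claimed formula.

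In the complementary case $\sigma_0<\sigma_0^\star$, we have $x_1=\bar\mu_1^2P_1/b_1$, and hence $x_1^2/P_1=\bar\mu_1^4P_1/b_1^2$. Substituting into the formula of the theorem gives
\[
\sigma_1^2=\frac{(b_1-\bar\mu_1^2P_1/b_1)^2}{2\eta S_2(K)\,\bar\mu_1^2\bigl(1-\bar\mu_1^2P_1/b_1^2\bigr)}
=\frac{b_1^2-\bar\mu_1^2P_1}{2\eta S_2(K)\,\bar\mu_1^2},
\]
after factoring $(1-\bar\mu_1^2P_1/b_1^2)$ out of the numerator, which equals $b_1^2(1-\bar\mu_1^2P_1/b_1^2)^2$. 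Plugging in $b_1^2=g_1^2c^{2K}I_0^2$ and $P_1=c^{2K}2\eta\sigma_0^2S_2(T)$ should then yield
\[
\sigma_1^2=c^{2K}\frac{S_2(T)}{S_2(K)}\Bigl(\frac{g_1^2I_0^2}{2\eta S_2(T)\bar\mu_1^2}-\sigma_0^2\Bigr),
\]
which is the claimed expression.

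The computation is routine. The only point that needs care is the boundary behaviour: the denominator $\bar\mu_1^2-x_1^2/P_1$ must be positive in the second case, which holds precisely because $b_1^2>\bar\mu_1^2P_1$ there. This same inequality is what makes the case split match the positive part $(\cdot)_+$ exactly.
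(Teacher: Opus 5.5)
Your proof is correct and matches the paper's argument essentially line by line. You specialize the calibration of Theorem~\ref{thm:eps-delta-tracking} to $r=1$, observe that the cap $\bar x_1$ reduces to its $s=0$ term, which equals $b_1$, and split on whether $\bar\mu_1^2P_1\ge b_1^2$, i.e.\ whether $\sigma_0\ge\sigma_0^\star(g_1,\bar\mu_1)$; the only cosmetic difference is that you rederive $Q_1=(b_1^2-\bar\mu_1^2P_1)/\bar\mu_1^2$ by factoring the numerator, whereas the paper quotes it from the proof of the theorem.
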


In particular, the first request requires no additional noise when
$\sigma_0\ge\sigma_0^\star(g_1,\bar\mu_1)$. Theorem~\ref{thm:eps-delta-tracking},
Corollary~\ref{cor:pay-as-you-go} and Proposition~\ref{prop:training-noise} are proved in
Appendix~\ref{sec:appendix-proof-CoSU}.

We now turn to the utility of the released models.

\begin{theorem}[Utility guarantee for CoSU]
\label{thm:final-excess-risk}
Let $\mathcal E_r:=\mathbb E\bigl[f_r(\theta_{N_r})-f_r(\theta_r^\star)\bigr]$ be the excess
empirical risk after request $r$, where $\theta_r^\star$ minimizes $f_r$, and let
$B:=LM^2/(2m^2)$. Under the calibration of Theorem~\ref{thm:eps-delta-tracking}, for every
$r$,
\begin{equation}
    \label{eq:findbetterlabel}
\mathcal E_r
\le
c^{Kr}\mathcal E_0
+
B\,c^{2K}\sum_{s=1}^{r}c^{K(r-s)}\Bigl(\frac{g_s^2}{1-c^K}+p\,h_s^2\Bigr),
\end{equation}
where $\mathcal E_0:=\frac L2\bigl[c^{2T}\|\theta_{\mathrm{init}}-\theta_0^\star\|^2
+2\eta p\,\sigma_0^2\frac{1-c^{2T}}{1-c^2}\bigr]$.
\end{theorem}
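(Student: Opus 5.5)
We prove Theorem~\ref{thm:final-excess-risk} by a one-episode recursion on $\mathcal E_r$, then unroll it. The recursion has three ingredients: the contraction of noisy gradient descent on a fixed strongly convex objective, the displacement of the minimizer when $g_r$ records are removed, and the noise bound of Corollary~\ref{cor:pay-as-you-go}(a).

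\textbf{Step 1: the base case $\mathcal E_0$.} Work with the distance $\Delta_k:=\mathbb E\|\theta_k-\theta_r^\star\|^2$ during episode $r$. Since the gradient map is $c$-contractive and fixes $\theta_r^\star$, and the injected noise is independent of $\theta_k$, we get
\[
\Delta_{k+1}\le c^2\Delta_k+2\eta p\sigma_r^2 .
\]
For the training phase this gives $\mathbb E\|\theta_{N_0}-\theta_0^\star\|^2\le c^{2T}\|\theta_{\mathrm{init}}-\theta_0^\star\|^2+2\eta p\sigma_0^2\frac{1-c^{2T}}{1-c^2}$. $L$-smoothness, $f_0(\theta)-f_0(\theta_0^\star)\le\frac L2\|\theta-\theta_0^\star\|^2$, then yields exactly $\mathcal E_0$.

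\textbf{Step 2: one episode in function value.} Rather than tracking distances, which gives a $c^{2K}$ rate, I would recurse on function values, which gives the $c^{K}$ factor appearing in~\eqref{eq:findbetterlabel}. A clean route: $f_r(\theta_{N_r})-f_r^\star\le\frac L2\|\theta_{N_r}-\theta_r^\star\|^2$ and $\|\theta_{N_{r-1}}-\theta_r^\star\|^2\le 2\|\theta_{N_{r-1}}-\theta_{r-1}^\star\|^2+2\|\theta_{r-1}^\star-\theta_r^\star\|^2$. The first term is controlled through $\frac{2}{m}\mathcal E_{r-1}$. However, the constants $L/m$ then accumulate multiplicatively, so this alone will not give the stated recursion $\mathcal E_r\le c^K\mathcal E_{r-1}+\dots$.

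Instead, I would first try the descent lemma combined with the PL inequality on noisy steps:
\[
\mathbb E f_r(\theta_{k+1})-f_r^\star\le(1-\eta m)\bigl(\mathbb E f_r(\theta_k)-f_r^\star\bigr)+\tfrac L2\,2\eta p\sigma_r^2 .
\]
This has contraction exactly $c$ per step. Over $K$ steps it gives $c^K(\text{initial gap})+\frac{L\eta p\sigma_r^2}{1-c}$. Since $1-c=\eta m$, the noise term is $\frac{Lp\sigma_r^2}{m}$.

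\textbf{Step 3: the initial gap of episode $r$.} The initial gap is $f_r(\theta_{N_{r-1}})-f_r^\star$. Write $f_r=f_{r-1}-\sum_{d\in G_r}\ell(\cdot,d)$. Each removed loss is $M$-Lipschitz, so $\nabla f_{r-1}(\theta_r^\star)$ has norm at most $g_rM$, and strong convexity gives $\|\theta_{r-1}^\star-\theta_r^\star\|\le g_rM/m$. The gap then splits into $\mathcal E_{r-1}$ plus a cross term plus $\frac L2(g_rM/m)^2=Bg_r^2$. The cross term would have to be handled with Young's inequality, and I expect the $\frac{1}{1-c^K}$ factor multiplying $g_s^2$ comes from exactly such a balancing, of the form $(a+b)^2\le a^2/c^K+b^2/(1-c^K)$ applied to distances.

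\textbf{Step 4: unrolling.} Plugging Corollary~\ref{cor:pay-as-you-go}(a), $\sigma_r^2\le h_r^2\Gamma/(1-c^{2K})$, into the noise term produces a term proportional to $Bc^{2K}p h_r^2$, up to constants involving $\eta M^2$ and $(1+c)/(1-c^{2K})$. These constants must be matched by working with the geometric sums explicitly rather than the crude $1/(1-c)$ bound; I would track $S_2(K)$ and the distance-based noise contribution $\frac L2\,2\eta p\sigma_r^2S_2(K)$ instead.

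\textbf{Main obstacle.} The hardest step is reconciling the $c^K$ rate in function value with the $c^{2K}$ prefactor and the exact $B$ constant. I would first try the distance recursion:
\[
\|\theta_{N_r}-\theta_r^\star\|\lesssim c^K\bigl(\|\theta_{N_{r-1}}-\theta_{r-1}^\star\|+g_rM/m\bigr).
\]
Here the $c^{2K}$ factor naturally multiplies $g_r^2$. Young's inequality with weight $c^K$ would turn $c^{2K}(a+b)^2$ into $c^Ka^2+\frac{c^{2K}}{1-c^K}b^2$, which is exactly the structure of~\eqref{eq:findbetterlabel}. Multiplying by $L/2$ then converts distances into $\mathcal E_r$. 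The noise term $2\eta p\sigma_r^2S_2(K)\le 2\eta p h_r^2\Gamma/(1-c^2)$, once simplified, should give $Bc^{2K}ph_r^2$. The remaining loose end is that $\mathcal E_r$ defined in function value is only upper bounded via distance; I would therefore state and unroll the recursion on $\frac L2\mathbb E\|\theta_{N_r}-\theta_r^\star\|^2$ and conclude by smoothness.
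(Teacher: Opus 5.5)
Your final plan is exactly the paper's proof, which is Lemma~\ref{lem:seq-recursion} with $\alpha=c^{-K}-1$: the per-episode mean-square bound $\E\|\theta_{N_r}-\theta_r^\star\|^2\le c^{2K}\E\|\theta_{N_{r-1}}-\theta_r^\star\|^2+2\eta p\sigma_r^2S_2(K)$, then Young's inequality with weight $c^{-K}$ (so the rate becomes $c^K$ and the drift carries $c^{2K}/(1-c^K)$), the minimizer shift $g_rM/m$, the noise bound of Corollary~\ref{cor:pay-as-you-go}(a), which via $1-c=\eta m$ indeed reduces to $2\eta p\sigma_r^2S_2(K)\le p\,h_r^2M^2c^{2K}/m^2$, and finally $L$-smoothness to pass to $\mathcal E_r$. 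The function-value/PL detour in Steps~2--3 is not needed and can be dropped.
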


The three terms of~\eqref{eq:findbetterlabel} correspond to the error after training, the drift of the minimizer after each
deletion (Lemma~\ref{lem:shift-minimizers}), and the injected noise. Old
requests are forgotten geometrically, so the risk does not accumulate. When $g_s\le g$ and
$h_s\le h$ for all $s$,
\[
\mathcal E_r\le c^{Kr}\mathcal E_0+\frac{B\,c^{2K}}{1-c^K}\Bigl(\frac{g^2}{1-c^K}+p\,h^2\Bigr).
\]
The first term vanishes as $r$ grows and the second does not depend on $r$: the bound
reaches a plateau, which decreases geometrically with the number $K$ of corrective steps.
The proof can be found in Appendix~\ref{sec:utility-proof}.

\section{Experiments}
\label{sec:experiments}
 
\subsection{Setting}
\label{sec:exp-setting}
 
We use the digits of FEMNIST~\cite{caldas2018leaf}, in which every image is attributed to
its writer, with features from a frozen ResNet-18 pretrained on ImageNet~\cite{he2016deep,deng2009imagenet}.
The $2{,}480$ private writers form the training set
($n=279{,}306$, at most $134$ records per writer), $300$ writers are used for
validation and $300$ for testing. 
A request deletes all the records of a user, drawn
uniformly at random, so that requests are independent
(Assumption~\ref{ass:independent-requests}). The model is a ridge-regularized multiclass logistic
regression trained without noise ($\sigma_0=0$). As the contraction factor $c$ of the gradient update
(see Assumption~\ref{ass:contraction}) is given by $c=1-m/L$, $\kappa:=L/m$ represents the natural
horizon of the training algorithm. For the setup under consideration, $\kappa \approx 51$ and we report the use
of $K\in\{0.1\kappa,0.3\kappa,0.6\kappa,\kappa\}$ corrective steps per request for CoSU, where $K=0.1\kappa \approx 5$
corresponds to a very small number of fine-tuning iterations while $K=\kappa \approx 51$ comes closer to full
retraining on the retain set. All unlearning requests are certified at level $(\varepsilon,\delta)=(0.5,1/n)$.

We compare CoSU with two training-time DP learners (Section~\ref{sec:no-provisioning}), which are trained by noisy
gradient descent according to Proposition~\ref{prop:dp-provisioned} (in Appendix~\ref{app:proof-provisioning})
to satisfy Definition~\ref{def:counterfactual_sequential_unlearning} at the same privacy
level as CoSU: \emph{DP, largest group},
calibrated for the largest user, and \emph{DP, bounded contributions}, which keeps at
most $g_0$ records per user \cite{amin2019bounding,epasto2020smoothly} and is calibrated for $g_0$. We also compare with
Descent-to-Delete~\cite{neel2020descenttodeletegradientbasedmethodsmachine} and Langevin
Unlearning~\cite{chien2025langevinunlearningnewperspective}, which use the cumulative-deletion
reference. All methods are
tuned on the validation writers. Full details are given in Appendix~\ref{app:experimental-details}.

\subsection{Privacy--utility trade-off}
\label{sec:exp-tradeoff}
 
\begin{figure}[t]
    \centering
    \includegraphics[width=\textwidth]{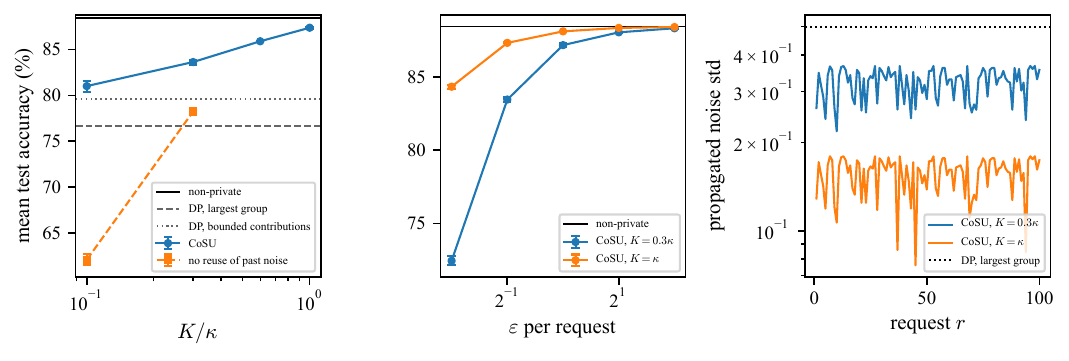}
    \caption{Privacy--utility trade-off of CoSU on $100$ writer-level requests. Left:
    accuracy averaged over the requests as a function of $K$ ($\varepsilon=0.5$),
    with and without reuse of past noise ($x_r=0$), and DP
    learners for reference. Middle: accuracy as a function of $\varepsilon$. Right:
    propagated noise standard deviation $\sqrt{W_r}$ required by CoSU (Theorem~\ref{thm:eps-delta-tracking}),
    and by DP calibrated for the largest user (Proposition~\ref{prop:dp-provisioned}).}
    \label{fig:tradeoff}
\end{figure}

\begin{table}[t]
\centering
\caption{Writer-level deletions ($100$ requests, $\varepsilon=0.5$): test accuracy (\%) before
any request, averaged over the requests (mean $\pm$ std over five runs), and at the worst
request. $^\dagger$After the last request.}
\label{tab:femnist-writers}
\begin{tabular*}{\textwidth}{@{\extracolsep{\fill}}lcccr}
\toprule
Method & before & mean & worst & steps / request \\
\midrule
Non-private (no guarantee) & 88.4 & -- & -- & -- \\
Exact retraining ($\varepsilon=0$) & -- & 88.3$^\dagger$ & -- & 200 ($100\%$) \\
\midrule
CoSU, $K=0.1\kappa$ & 88.4 & 81.0 $\pm$ 0.6 & 77.4 & 5 ($2.5\%$) \\
CoSU, $K=0.3\kappa$ & 88.4 & 83.6 $\pm$ 0.3 & 80.3 & 15 ($7.5\%$) \\
CoSU, $K=0.6\kappa$ & 88.4 & 85.9 $\pm$ $<$0.1 & 84.1 & 31 ($15.5\%$) \\
CoSU, $K=\kappa$ & 88.4 & \textbf{87.4} $\pm$ 0.1 & 86.6 & 51 ($25.5\%$) \\
\midrule
DP, largest group ($g_{\max}=134$) & 76.6 & 76.6 $\pm$ 1.1 & 76.6 & 0 \\
DP, bounded contributions ($g_0=34$) & 79.6 & 79.6 $\pm$ 1.4 & 79.6 & 0 \\
\bottomrule
\end{tabular*}
\end{table}

Figure~\ref{fig:tradeoff} and Table~\ref{tab:femnist-writers} report the accuracy of CoSU for $100$ writer-level
requests. Before any request, CoSU releases the non-private model ($88.4\%$); exact retraining maintains
this accuracy when using $T \geq 200$ retraining steps. The noise scale of a request decays
geometrically with the number of unlearning steps $K$ 
(Corollary~\ref{cor:pay-as-you-go}): at $\varepsilon=0.5$,
$K=\kappa$ ($51$ steps, $25.5\%$ of $T$) loses about one point, and
$5$ steps per request still give $81.0\%$.
As shown in the left panel, reusing the noise already injected is essential when $K$ is small: without it, the accuracy drops to
$62.1\%$ and $78.2\%$ at, respectively, $K=0.1\kappa$ and $0.3\kappa$.
Moreover, corrective steps can be traded for privacy: the noise scale of a request is proportional to
$c^{K}/\bar\mu$, so running $K=\kappa$ instead of $K=0.3\kappa$ allows $\varepsilon$ to be divided by two without losing accuracy (middle panel).
The right panel shows the propagated noise scale $\sqrt{W_r}$ required by CoSU: it follows the size of the current
request, does not grow with $r$ (Corollary~\ref{cor:pay-as-you-go}(a)), and stays about
$1.5$ and $3$ times below that of DP calibrated for the largest group at, respectively, $K=0.3\kappa$ and
$K=\kappa$.

Training-time DP requires no computation at deletion time, but loses accuracy from the
first release: it reaches $76.6\%$ when calibrated for the largest request,
and $79.6\%$ with contributions bounded to $g_0=34$ records per user (which keeps
$30\%$ of the data). CoSU matches the latter when using $5$ steps per unlearning request.

\begin{figure}[t]
    \centering
    \includegraphics[width=0.8\linewidth]{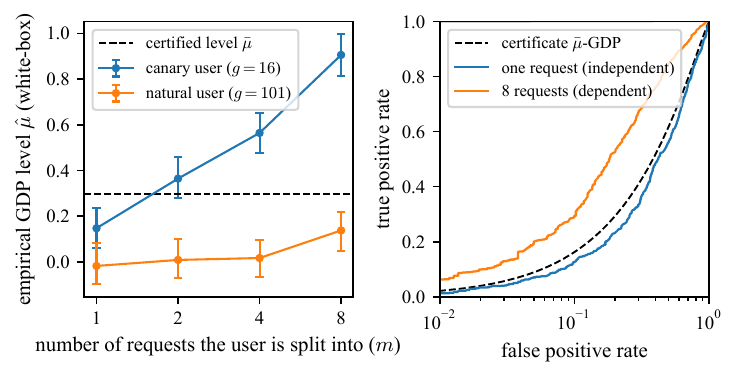}
    \caption{Membership inference on a user deleted in one request ($m=1$) or split into
    $m$ requests certified separately. Left: empirical Gaussian-DP level $\hat\mu$ of the white-box
    attack, against the certified level $\bar\mu$ (dashed). Right: trade-off curves of the attack
    on the canary user vs. certificate.}
    \label{fig:mia}
\end{figure}

\subsection{Membership inference: independent versus dependent requests}
\label{sec:exp-mia}
The certificate of a request states that the released model is indistinguishable, at
level $\bar\mu$, from the model that would have been released if the deleted group had
never been in the training set. We test this statement with a membership inference
attack (MIA). An attacker who knows the algorithm and all the other records observes the
released model and has to decide whether the records of a target user were in the
training set and then deleted, or were never there. The success of the attack is measured
by an empirical Gaussian-DP level $\hat\mu$, which cannot exceed $\bar\mu$ if the target
is protected at the certified level~\cite{dong2022gaussian}. We consider two settings. In
the first, the target is deleted in \emph{one} request, as our framework requires for the
records of a single user. In the second, its records are split into $m$ consecutive
requests, each certified at level $\bar\mu$ as if they were independent, which violates
Assumption~\ref{ass:independent-requests}. In both settings, the attack targets all the
records of the user, on the model released after the last unlearning request. We use a subset of $200$ writers, with
$K=0.1\kappa$ and $\varepsilon=1$, and run the pipeline $1{,}024$ times with the target and
$1{,}024$ times without it. The target is either a natural writer or a worst-case
\emph{canary} request (composed of $16$ identical copies of a single image with a wrong label).
The attacker has access to the parameters of the released model (white-box attack); the attack, a black-box
variant and the estimation of $\hat\mu$ are described in
Appendix~\ref{app:experimental-details}.

\begin{figure}[t]
    \centering
    \includegraphics[width=0.5\linewidth,trim=0 162 256 16,clip]{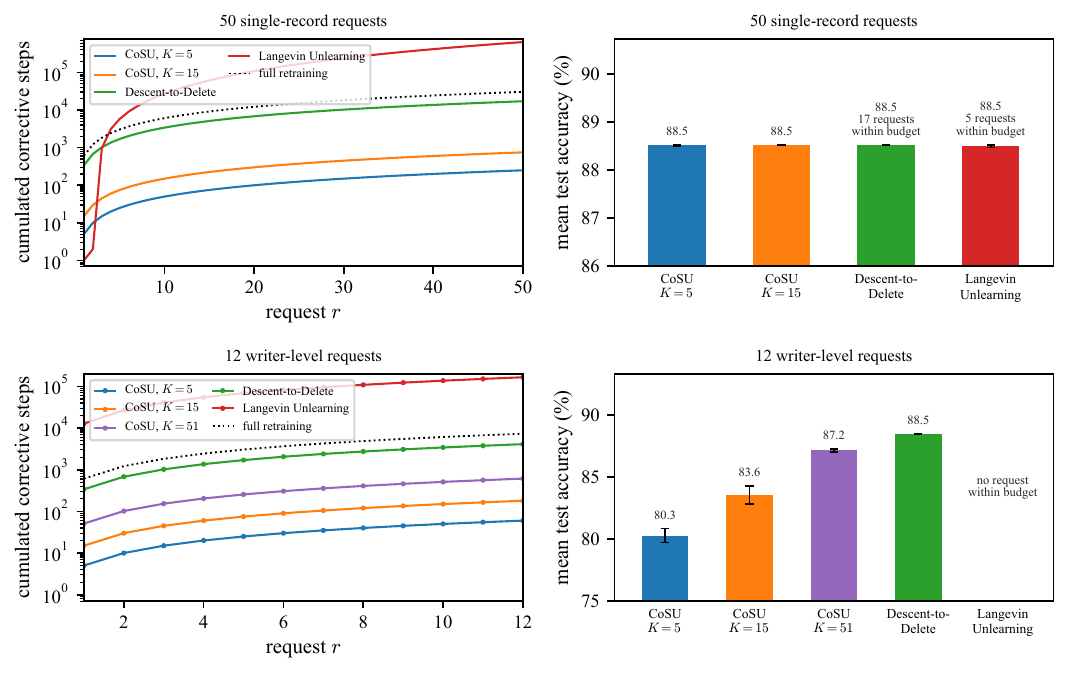}
    \caption{CoSU and certified first-order baselines ($\varepsilon=0.5$) on $50$
    single-record requests. Cumulative number of corrective steps up to request $r$; the
    dotted line is the cost of full retraining at every request.}
    \label{fig:baselines}
\end{figure}

The results are shown in Figure~\ref{fig:mia} and Table~\ref{tab:mia} in Appendix~\ref{app:experimental-details}.
When the canary is deleted in one request, the attack stays within the certificate:
$\hat\mu=0.15$ ($95\%$ interval $[0.06,0.24]$) for $\bar\mu=0.30$, and its ROC curve lies on
or below the curve allowed by the certificate. When the same user is split
into $m$ requests certified separately, every request is certified, but the user is not:
$\hat\mu$ grows to $0.36$, $0.56$ and $0.91$ for $m=2,4,8$, i.e., up to $3.1$ times
the certified level, and the AUC of the attack from $0.54$ to $0.74$. The black-box loss attack
gives the same AUCs (Table~\ref{tab:mia}). For the natural
writer, $\hat\mu$ stays between $-0.02$ and $0.14$ for every $m$: its gradients are
much smaller than the bound, so the certificate is loose for typical users. Treating
dependent requests as independent can thus invalidate the guarantee in the worst case,
whereas the attack on independent requests stays within the level certified by CoSU. 

\subsection{Comparison with sequential unlearning baselines}
\label{sec:exp-dp}

In contrast to CoSU, Descent-to-Delete and Langevin Unlearning use the cumulative-deletion
reference. Nonetheless, we tune them to the same $(\eps,\delta)$ level as CoSU, which leads to
comparable test accuracy, of about $88.5$\%, for single-record unlearning requests. However, Figure~\ref{fig:baselines}
shows that this is achieved at a much higher computational cost than for CoSU. Descent-to-Delete needs about
$340$ steps per request, while the cost of Langevin Unlearning grows by $474$ steps per request and
exceeds the cost of full retraining after the third request. By contrast, CoSU needs only $5$ steps to achieve the same
accuracy and, more importantly, its long-run computational cost stays far below that of
systematic retraining. More experimental results are reported in
Appendix~\ref{app:additional-experiments}.

\section{Conclusion}

For sequential unlearning, we proposed to certify each release against the run in which the current group never
participated, rather than against retraining without all past deletions. For independent
requests, this counterfactual reference preserves accuracy in the long run. In the smooth
strongly convex setting, CoSU implements it at a fixed cost per request, with noise set by
the request and not by its rank.
In our experiments, it requires fewer gradient steps per request than
existing certified first-order methods, and outperforms training-time DP.
The guarantee relies on the independence of the requests: membership inference attacks 
exceed the certified level when dependent requests are certified separately, and stay
within it when they are treated jointly.
Our analysis covers smooth strongly convex objectives with full-batch gradients: extending
it to stochastic gradients and possibly non-convex models is a natural next step.

\bibliographystyle{unsrtnat}
\bibliography{references2}

\clearpage
\appendix

\section{The two references}
\label{app:illustration}

Figure~\ref{fig:standard-vs-counterfactual-reference} illustrates the cumulative-deletion
reference (Section~\ref{sec:definition}) and the counterfactual reference
(Definition~\ref{def:counterfactual_sequential_unlearning}).

\begin{figure}[h]
    \centering
    \includegraphics[width=0.9\textwidth]{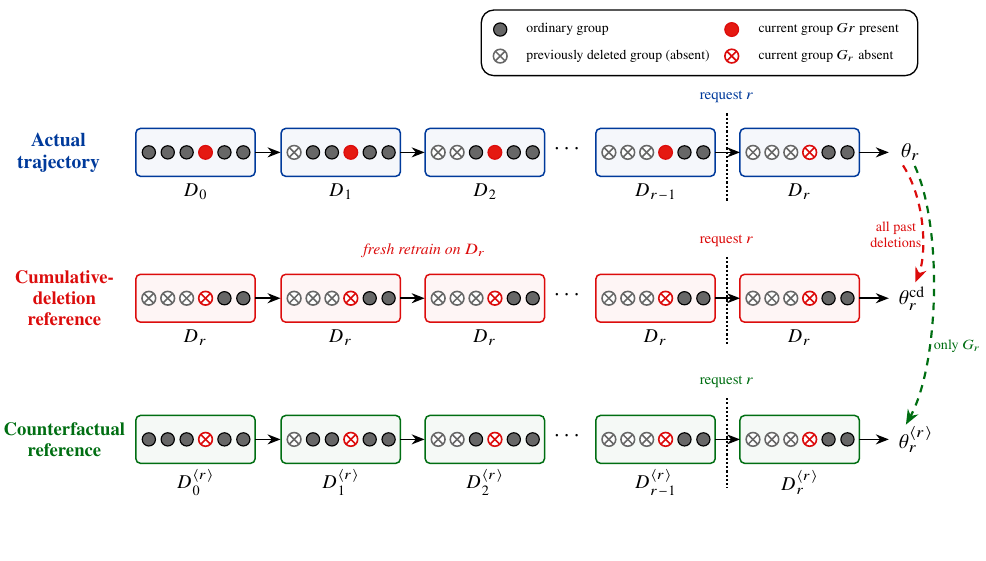}
    \caption{
    Cumulative-deletion and counterfactual references for the $r$-th sequential
    unlearning request. The cumulative-deletion reference compares the released model to
    a fresh retrain on \(D_r\), where all past deletions are already absent. The
    counterfactual reference runs the actual sequential pipeline, with only the current
    group \(G_r\) absent from the start.
    }
    \label{fig:standard-vs-counterfactual-reference}
\end{figure}

\section{Privacy preliminaries}
\label{app:privacy-preliminaries}

For two distributions \(P,Q\),
the trade-off function is
\[
\mathcal T(P,Q)(\alpha)
:=
\inf_{\phi:\, \mathbb E_P[\phi]\le \alpha}
\bigl(1-\mathbb E_Q[\phi]\bigr),
\qquad \alpha\in[0,1],
\]
where the infimum is over all randomized tests \(0\le \phi\le1\). Larger
trade-off functions correspond to harder distinguishability.

For \(\mu\ge0\), let
\[
\mathcal G_\mu(\alpha)
:=
\Phi\!\left(\Phi^{-1}(1-\alpha)-\mu\right),
\]
which is the trade-off function between \(\mathcal N(0,1)\) and
\(\mathcal N(\mu,1)\). We say that an ordered pair \((P,Q)\) satisfies
\(\mu\)-GDP if
\[
\mathcal T(P,Q)\ge \mathcal G_\mu.
\]

We use the following sufficient conversion from GDP to DP.

\begin{lemma}[Sufficient GDP-to-DP conversion]
\label{lem:gdp-to-dp}
Let \(\varepsilon>0\) and \(\delta\in(0,1)\). If \((P,Q)\) satisfies
\(\mu\)-GDP with
\[
\mu\le
\bar\mu_{\varepsilon,\delta}
:=
\sqrt{2\log(1/\delta)+2\varepsilon}
-
\sqrt{2\log(1/\delta)},
\]
then \(P\) and \(Q\) are \((\varepsilon,\delta)\)-indistinguishable.
\end{lemma}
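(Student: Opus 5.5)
The plan is to pass through the characterization of Gaussian DP by its privacy-loss random variable, or equivalently through the Dong--Roth--Su conversion of $\mu$-GDP into a family of $(\varepsilon,\delta(\varepsilon))$ guarantees, and then check that the stated threshold $\bar\mu_{\varepsilon,\delta}$ makes the resulting $\delta$ small enough. Concretely, $\mathcal T(P,Q)\ge\mathcal G_\mu$ is equivalent to $(\varepsilon,\delta_\mu(\varepsilon))$-DP for all $\varepsilon\ge0$, where
\[
\delta_\mu(\varepsilon)=\Phi\!\left(-\frac{\varepsilon}{\mu}+\frac{\mu}{2}\right)-e^{\varepsilon}\Phi\!\left(-\frac{\varepsilon}{\mu}-\frac{\mu}{2}\right).
\]
I would take this as the Dong et al. result~\cite{dong2022gaussian}. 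Because it is stated for the symmetric pair, I would also note that $\mathcal G_\mu$ is symmetric, so the trade-off bound transfers to $(Q,P)$. This gives both inequalities in Definition~\ref{def:indistinguishability}: the lower one follows by rearranging the bound for the reversed pair.

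Two monotonicity facts reduce the lemma to the case $\mu=\bar\mu$. First, $\mathcal G_\mu$ decreases in $\mu$, so $\mu$-GDP implies $\mu'$-GDP for every $\mu'\ge\mu$. Second, $\delta_\mu(\varepsilon)$ increases in $\mu$. It therefore suffices to show $\delta_{\bar\mu}(\varepsilon)\le\delta$.

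For this last inequality I would bound $\delta_\mu(\varepsilon)\le\Phi(-\varepsilon/\mu+\mu/2)$ by dropping the subtracted term, and then apply the Gaussian tail bound $\Phi(-t)\le e^{-t^2/2}$ for $t\ge0$. With $t=\varepsilon/\mu-\mu/2$, we need $t\ge\sqrt{2\log(1/\delta)}=:a$. Multiplying by $\mu>0$, this is $\mu^2/2+a\mu-\varepsilon\le0$, i.e.
\[
\mu\le -a+\sqrt{a^2+2\varepsilon}.
\]
This is exactly $\bar\mu_{\varepsilon,\delta}$, so the threshold comes out directly. The same inequality gives $t\ge a\ge0$, so the tail bound applies. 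Then $\delta_{\bar\mu}(\varepsilon)\le e^{-a^2/2}=\delta$.

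The main obstacle is justifying the tail bound $\Phi(-t)\le e^{-t^2/2}$ for $t\ge0$. It is a standard Chernoff bound: $\Pr(Z\ge t)\le e^{-\lambda t}\E e^{\lambda Z}$ with $\lambda=t$. If one prefers to avoid the $\delta_\mu(\varepsilon)$ formula, there is a more self-contained route. Argue directly that the privacy loss of $\mathcal N(0,1)$ versus $\mathcal N(\mu,1)$ is Gaussian, and use the standard sufficient condition $\Pr_P(\text{loss}>\varepsilon)\le\delta$ for $(\varepsilon,\delta)$-indistinguishability. Via the lossless post-processing characterization of trade-off functions, this leads to the same computation.
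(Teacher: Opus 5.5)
Your proposal is correct and follows the paper's proof essentially step for step. Both use the Dong--Roth--Su hockey-stick bound $\Phi(\mu/2-\varepsilon/\mu)-e^{\varepsilon}\Phi(-\mu/2-\varepsilon/\mu)$, drop the subtracted term, note that $\mu\le\sqrt{a^2+2\varepsilon}-a$ is equivalent to $\varepsilon/\mu-\mu/2\ge a$, and finish with $\Phi(-a)\le e^{-a^2/2}=\delta$. Your additions are harmless refinements: the symmetry of $\mathcal G_\mu$ for the reverse inequality (which the paper records after Theorem~\ref{thm:bok-shifted-certificate}), and the reduction to $\mu=\bar\mu_{\varepsilon,\delta}>0$, which also avoids dividing by $\mu=0$, though only one of your two monotonicity facts is needed for it.
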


\begin{proof}
Let
\[
\delta_{P,Q}(\varepsilon)
:=
\sup_A \{Q(A)-e^\varepsilon P(A)\}.
\]
By the standard variational relation between trade-off functions and
hockey-stick divergences, \(\mathcal T(P,Q)\ge \mathcal G_\mu\) implies
\[
\delta_{P,Q}(\varepsilon)
\le
\Phi\!\left(\frac{\mu}{2}-\frac{\varepsilon}{\mu}\right)
-
e^\varepsilon
\Phi\!\left(-\frac{\mu}{2}-\frac{\varepsilon}{\mu}\right)
\le
\Phi\!\left(\frac{\mu}{2}-\frac{\varepsilon}{\mu}\right).
\]
Let \(a=\sqrt{2\log(1/\delta)}\). The condition
\[
\mu\le \sqrt{a^2+2\varepsilon}-a
\]
is equivalent to
\[
\frac{\mu}{2}-\frac{\varepsilon}{\mu}\le -a .
\]
Therefore,
\[
\delta_{P,Q}(\varepsilon)
\le
\Phi(-a)
\le
e^{-a^2/2}
=
\delta ,
\]
which proves the claim. Since $\Phi(-a)\le\frac12e^{-a^2/2}$, the lemma is conservative
by a factor of two in $\delta$; the exact conversion of Dong et al.~\cite{dong2022gaussian}
removes this slack.
\end{proof}

Thus, throughout the proof it is enough to construct a GDP certificate with
parameter at most \(\bar\mu_{\varepsilon,\delta}\).

\section{Proofs and additional results for Section~\ref{sec:definition}}
\label{app:proofs-definition}

\subsection{Proofs of Propositions~\ref{prop:independence-reference} and~\ref{prop:persistence-postprocessing}}

\begin{proof}[Proof of Proposition~\ref{prop:independence-reference}]
Since the groups are disjoint, the other groups are contained in $D\setminus G_r$, and every
counterfactual dataset is obtained from $D\setminus G_r$ by removing some of them: for $s<r$,
\[
D_s^{\langle r\rangle}=\bigl(D\setminus G_r\bigr)\setminus(G_1\cup\cdots\cup G_s),
\]
so none of them involves the content of $G_r$. We conclude by induction on $s$. For $s=0$,
$\theta_0^{\langle r\rangle}\sim\A(D\setminus G_r)$, and $\A$ does not depend on the data
(Assumption~\ref{ass:independence}). If the claim holds at $s-1$, then
$\theta_s^{\langle r\rangle}\sim\U_s(\theta_{s-1}^{\langle r\rangle},D_s^{\langle r\rangle})$:
the kernel $\U_s$ depends on the requests only through their sizes and levels
(Assumption~\ref{ass:independence}), which do not change when the content of $G_r$ is
replaced; its two arguments do not involve this content (induction hypothesis and the
display above); and, by Assumption~\ref{ass:stateless}, no other data-dependent state is
carried across rounds, the randomness being fresh at every round. The claim follows at $s$,
hence at $s=r$.
\end{proof}

\begin{proof}[Proof of Proposition~\ref{prop:persistence-postprocessing}]
Fix $t>r$. From round $r$ on, both trajectories use the same datasets:
$D_s^{\langle r\rangle}=D_s$ for every $s\ge r$, since $G_r$ is absent from both after request
$r$. The later groups $G_{r+1},\ldots,G_t$ are the same in both worlds, since they are
independent of the released models (Assumption~\ref{ass:adaptive}). The kernels
$\U_{r+1},\ldots,\U_t$ are the same (Assumption~\ref{ass:independence}), and no other
data-dependent state is carried across rounds (Assumption~\ref{ass:stateless}). The updates
from round $r$ to round $t$ therefore define a common Markov kernel
\[
K_{r:t}:=\U_t(\cdot,D_t)\circ\cdots\circ\U_{r+1}(\cdot,D_{r+1}),
\] such that 
\[
\theta_t\sim K_{r:t}(\theta_r),\quad
\theta_t^{\langle r\rangle}\sim K_{r:t}(\theta_r^{\langle r\rangle}),
\]
and the post-processing property of differential
privacy~\cite[Proposition~2.1]{dwork2014algorithmic}, $X\simeq_{\varepsilon,\delta}X'
\Rightarrow K(X)\simeq_{\varepsilon,\delta}K(X')$ for every such common kernel $K$, gives
the claim.
\end{proof}

\subsection{Independent and dependent requests}

\paragraph{Request process}
Propositions~\ref{prop:semantics} and~\ref{prop:necessity} concern random requests. Fix a
rank $r$, a group $G_r\subseteq D$ and a level $(\varepsilon_r,\delta_r)$. The \emph{other
requests} $\rho:=(G_s,\varepsilon_s,\delta_s)_{s\neq r}$ are drawn by a \emph{request
mechanism}: a Markov kernel that takes the training set as input and whose randomness is
independent of that of $\A$ and $(\U_s)_{s\ge1}$ (Assumption~\ref{ass:adaptive}). In the
\emph{actual process}, the training set is $D$, the other requests are $\rho$, and request
$r$ deletes $G_r$. In the \emph{process in which $G_r$ was never in the training set}, the
training set is $D\setminus G_r$, the other requests $\rho^\circ$ are drawn by the same
mechanism run on $D\setminus G_r$, and request $r$ is still issued, as a void request with
the same size and privacy level. Assumption~\ref{ass:independent-requests} states that $\rho$ and
$\rho^\circ$ have the same law, which does not depend on the content of $G_r$.

\begin{proposition}[Protection of independent groups]
\label{prop:semantics}
Let Assumptions~\ref{ass:independence}--\ref{ass:stateless} and~\ref{ass:independent-requests}
hold, and let the pipeline
satisfy Definition~\ref{def:counterfactual_sequential_unlearning} at every request.
Fix a request $r$ and a time $t\ge r$. Let $\theta_t^\circ$ and $\pi_t^\circ$ be the
release and the sizes and privacy levels of the requests up to $t$ in the process in which $G_r$
was never in the training set, request $r$ being still issued as a void request, and let
$\pi_t:=(g_s,\varepsilon_s,\delta_s)_{s\le t}$ be their counterparts in the actual process.
Then
\[
(\theta_t,\pi_t)\ \simeq_{\varepsilon_r,\delta_r}\ (\theta_t^\circ,\pi_t^\circ).
\]
Conditionally on the other requests, $\theta_t\simeq_{\varepsilon_r,\delta_r}\theta_t^\circ$
holds even without Assumption~\ref{ass:independent-requests}.
\end{proposition}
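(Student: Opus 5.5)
The plan is to reduce the statement to Definition~\ref{def:counterfactual_sequential_unlearning} and Proposition~\ref{prop:persistence-postprocessing}, first conditionally on the other requests, and then to integrate over them using Assumption~\ref{ass:independent-requests}.

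\emph{Conditional claim.} Fix a realization $\rho$ of the other requests. In the actual process, the training set is $D$ and request $r$ deletes $G_r$. In the process where $G_r$ was never collected, the training set is $D\setminus G_r$, the same other requests $\rho$ are applied, and request $r$ is void.

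The first step is to check that, given $\rho$, the release $\theta_t^\circ$ of the never-collected process has exactly the law of the counterfactual trajectory $\theta_t^{\langle r\rangle}$.
\begin{itemize}
\item For $s<r$, the never-collected process runs on $(D\setminus G_r)\setminus(G_1\cup\cdots\cup G_s)=D_s^{\langle r\rangle}$.
\item At round $r$ the request is void, so the retained set stays $D_{r-1}^{\langle r\rangle}=D_r$.
\item For $s>r$, the retained set is $D_s$.
\end{itemize}
The kernels $\U_s$ coincide in both processes because, by Assumption~\ref{ass:independence}, they see only the public sizes and levels, and the void request carries the same $(g_r,\varepsilon_r,\delta_r)$. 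By Assumption~\ref{ass:stateless}, no other data-dependent state is carried. Hence $\theta_t^\circ \overset{d}{=} \theta_t^{\langle r\rangle}$ conditionally on $\rho$.

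The definition then gives $\theta_r\simeq_{\varepsilon_r,\delta_r}\theta_r^{\langle r\rangle}$. Proposition~\ref{prop:persistence-postprocessing} extends this to $t>r$, since the later groups are fixed by the conditioning. This yields the conditional statement, which does not use Assumption~\ref{ass:independent-requests}. The definition is quantified over every request sequence, so the conditional inequality holds for each $\rho$.

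\emph{Unconditional claim.} Here $\pi_t$ is a function of $\rho$ together with the fixed $(g_r,\varepsilon_r,\delta_r)$. Likewise $\pi_t^\circ$ is the same function of $\rho^\circ$, using that the void request carries the same size and level. Assumption~\ref{ass:independent-requests} gives $\rho\overset{d}{=}\rho^\circ$, and Assumption~\ref{ass:adaptive} makes the request randomness independent of the algorithm's randomness. So for any measurable $S$ of pairs $(\theta,\pi)$ we can write
\[
\Pr\bigl((\theta_t,\pi_t)\in S\bigr)=\int \Pr\bigl(\theta_t\in S_{\pi(\rho)}\mid\rho\bigr)\,\nu(d\rho),
\]
where $\nu$ is the common law of $\rho$ and $\rho^\circ$, and $S_\pi$ denotes the section of $S$ at $\pi$. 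The same identity holds for the never-collected process with the same $\nu$. Applying the conditional inequality $\Pr(\theta_t\in A\mid\rho)\le e^{\varepsilon_r}\Pr(\theta_t^\circ\in A\mid\rho)+\delta_r$ inside the integral and integrating gives the upper bound. The lower bound follows symmetrically, because $\delta_r$ and the factors $e^{\pm\varepsilon_r}$ are constants in $\rho$; they are fixed for request $r$.

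\emph{Main obstacle.} The delicate step is the identification $\theta_t^\circ\overset{d}{=}\theta_t^{\langle r\rangle}$ given $\rho$. One has to argue carefully that a void request is processed by the same kernel $\U_r(\cdot,D_r)$, and that the mixture over $\rho$ uses a single measure $\nu$ for both processes. The second point is exactly where Assumption~\ref{ass:independent-requests} enters, and why the unconditional claim fails for dependent requests.
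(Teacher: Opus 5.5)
Your proposal is correct and follows the paper's proof essentially step for step. Conditionally on the other requests $\rho$, you identify the never-collected process with the counterfactual trajectory and invoke Definition~\ref{def:counterfactual_sequential_unlearning} together with Proposition~\ref{prop:persistence-postprocessing}; you then average over the common law of $\rho$ and $\rho^\circ$ through the sections $S_{\pi(\rho)}$, which is exactly where Assumption~\ref{ass:independent-requests} enters, and your round-by-round check of the retained sets is just a more detailed version of the paper's identification $D_s\setminus G_r=D_s^{\langle r\rangle}$.
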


\begin{proof}[Proof of Proposition~\ref{prop:semantics}]
Fix a request $r$, a time $t\ge r$ and the size and privacy level of request $r$. Let $\rho$
denote the other requests (their groups, sizes and privacy levels). The public information
$\pi_t$ is a function $\pi(\rho)$ of $\rho$ and of the size and privacy level of request $r$, and
the same function gives $\pi_t^\circ$ from the other requests $\rho^\circ$ of the process
without $G_r$, since the void request keeps its size and privacy level. By
Assumption~\ref{ass:adaptive}, the requests are independent of the internal randomness of
the kernels, so we may condition on $\rho$ without changing the law of this randomness.

\emph{Conditional statement.} Fix $\rho$ in the support of the request process. In the actual
process, $\theta_t$ is produced by the pipeline on $D$ with this sequence. In the process
without $G_r$, the other requests are $\rho$ and request $r$ is void with the same size
and privacy level, so the kernels are the same (Assumption~\ref{ass:independence}) and the dataset
used at round $s$ is $D_s\setminus G_r=D_s^{\langle r\rangle}$. This is the counterfactual
trajectory, so $\theta_t^\circ\mid\rho$ has the law of $\theta_t^{\langle r\rangle}\mid\rho$.
Definition~\ref{def:counterfactual_sequential_unlearning} gives
$\theta_r\simeq_{\varepsilon_r,\delta_r}\theta_r^{\langle r\rangle}$ given $\rho$, and
Proposition~\ref{prop:persistence-postprocessing} extends it to time $t$. This does not use
Assumption~\ref{ass:independent-requests}.

\emph{Averaging.} For a measurable set $S$ of pairs, write $S_\pi:=\{\theta:(\theta,\pi)\in S\}$.
Then
\begin{align*}
\Pr\bigl((\theta_t,\pi_t)\in S\bigr)
&=\E_\rho\bigl[\Pr(\theta_t\in S_{\pi(\rho)}\mid\rho)\bigr]\\
&\le\E_\rho\bigl[e^{\varepsilon_r}\Pr(\theta_t^\circ\in S_{\pi(\rho)}\mid\rho)+\delta_r\bigr]\\
&=e^{\varepsilon_r}\Pr\bigl((\theta_t^\circ,\pi_t^\circ)\in S\bigr)+\delta_r,
\end{align*}
where the last equality uses that $\rho$ and $\rho^\circ$ have the same law given the size
and privacy level of request $r$ (Assumption~\ref{ass:independent-requests}). The reverse
inequality is identical.
\end{proof}

\begin{proposition}[Dependent requests must be merged]
\label{prop:necessity}
If request $q$ is issued if and only if $G_r$ is in the training set, then, for any
pipeline, the actual process and the process in which $G_r$ was never in the training set
are perfectly distinguishable from what is public about the requests alone (their number,
sizes and levels). If the other requests depend neither on $G_r$ nor on $G_q$, this public
information has the same law in the actual process and in the process in which $G_r\cup G_q$ was never in the training set.
\end{proposition}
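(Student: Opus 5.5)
The statement has two parts. I would prove them separately, and both reduce to comparing the law of the public request information $\pi:=(\text{number of requests},(g_s,\varepsilon_s,\delta_s)_s)$ across two processes.

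\emph{First part.} Request $q$ is issued if and only if $G_r$ is in the training set. In the actual process, $G_r$ belongs to $D$, so request $q$ is issued, and it contributes an entry $(g_q,\varepsilon_q,\delta_q)$ with $g_q=|G_q|\ge1$ to the public sequence. In the process where $G_r$ was never collected, request $r$ is still issued as a void request with the same size and level. Request $q$, however, is not issued.

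The test I would use is whether the public record contains the entry associated with $q$, or simply how many requests have been issued by the relevant time. On every realization this count differs by exactly one between the two worlds. So the indicator $\phi=\mathbf 1\{\text{request } q \text{ appears}\}$ has $\E_P[\phi]=1$ and $\E_Q[\phi]=0$, and the trade-off function is identically zero.

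No property of $\A$ or $(\U_s)$ enters the argument, because $\phi$ is a function of public request data only; this is why the claim holds for any pipeline. The one point to check is that the index of $q$ is well defined. If the position of $q$ could shift, I would use the total count of requests up to a fixed horizon, which deterministically differs by one between the two processes. I expect this to be the main, though mild, obstacle: the issue is stating precisely what "perfectly distinguishable" means when requests arrive online. I would fix a horizon $t$ after $q$ would have been issued and compare the finite sequences $\pi_t$.

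\emph{Second part.} Now compare the actual process with the process in which $G_r\cup G_q$ was never present, where both requests $r$ and $q$ are kept as void requests with the same sizes and levels, mirroring the convention of Proposition~\ref{prop:semantics} for the merged group. The public information is a deterministic function of the sizes and levels of requests $r$ and $q$ (fixed and equal in both worlds) and of the other requests $\rho$. By hypothesis, the request mechanism producing $\rho$ depends neither on $G_r$ nor on $G_q$. Hence $\rho$ run on $D$ and $\rho^\circ$ run on $D\setminus(G_r\cup G_q)$ have the same law, exactly as in Assumption~\ref{ass:independent-requests} applied to the merged group. Pushing both laws forward through the same map $\pi(\cdot)$ then gives equal laws for the public information. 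This is the same averaging step as in the proof of Proposition~\ref{prop:semantics}, now with $G_r\cup G_q$ treated as a single group.
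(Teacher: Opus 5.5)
Your proposal is correct and follows the paper's own argument. In the first part, the event that request $q$ is issued is public and has probability one in the actual process and zero in the process without $G_r$. In the second part, the public information is a fixed function of the other requests, whose law is unchanged when $G_r\cup G_q$ is removed and both requests are kept as void.

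The one soft spot is your fallback through the total request count. That count differs deterministically by one only if the other requests are the same in the two worlds, which the first part does not assume. As the paper does, rely on the issuance of request $q$ itself being public.
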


\begin{proof}[Proof of Proposition~\ref{prop:necessity}]
In the actual process, the records of $G_r$ are in the training set, so request $q$ is
issued, which is public. In the process in which $G_r$ was never in the training set, the
request mechanism is run on a training set without $G_r$, so request $q$ is not issued;
only request $r$ is kept, as a void request with the same size and level. The event
``request $q$ is issued'' has probability one in the first process and zero in the second,
so no pair $(\varepsilon,\delta)$ with $\delta<1$ can make the two processes
indistinguishable, whatever the released models. In the process in which $G_r\cup G_q$
never entered the training set, both requests are kept as void requests with the same
sizes and privacy levels, and, if the other requests depend neither on $G_r$ nor on $G_q$, the
number, sizes and privacy levels of the requests have the same law in both processes.
\end{proof}

\subsection{Training-time differential privacy}

\begin{proposition}[DP and provisioning]
\label{prop:no-provisioning}
If $\U_s(\theta,\cdot)$ is the point mass at $\theta$ for every $s$, the pipeline satisfies
Definition~\ref{def:counterfactual_sequential_unlearning} if and only if
$\A(D)\simeq_{\varepsilon,\delta}\A(D\setminus G)$ for every $D$ and every request
$(G,\varepsilon,\delta)$. In particular, a learner that is
$(\varepsilon,\delta)$-DP for groups of at most $g_{\max}$ records satisfies it for all
requests of size at most $g_{\max}$ and privacy level at least $(\varepsilon,\delta)$.
\end{proposition}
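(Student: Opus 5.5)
The plan is to show that, when every unlearning kernel is the identity, the counterfactual reference trajectory collapses to a single run of the learner on $D\setminus G_r$, and then read off both directions of the equivalence directly from Definition~\ref{def:counterfactual_sequential_unlearning}.

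First, we unroll both trajectories. Since each $\U_s(\theta,\cdot)=\delta_\theta$, induction on $s$ gives $\theta_r=\theta_0\sim\A(D_0)=\A(D)$ for every $r$. The same induction applied to the counterfactual trajectory gives $\theta_r^{\langle r\rangle}=\theta_0^{\langle r\rangle}\sim\A(D_0^{\langle r\rangle})=\A(D\setminus G_r)$. The intermediate datasets $D_s^{\langle r\rangle}$ play no role, because the kernels ignore their data argument. Consequently, the condition $\theta_r\simeq_{\varepsilon_r,\delta_r}\theta_r^{\langle r\rangle}$ in Definition~\ref{def:counterfactual_sequential_unlearning} is literally $\A(D)\simeq_{\varepsilon_r,\delta_r}\A(D\setminus G_r)$.

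Next, we match the quantifiers.

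\emph{Sufficiency.} Assume the displayed condition holds for every $D$ and every request $(G,\varepsilon,\delta)$. Take any dataset $D$, any sequence $(G_1,\dots,G_r)$, and the level $(\varepsilon_r,\delta_r)$. Applying the hypothesis to $(D,G_r,\varepsilon_r,\delta_r)$ gives the certificate at request $r$.

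\emph{Necessity.} Given any $D$ and any admissible request $(G,\varepsilon,\delta)$, we consider the sequence consisting of that single request as $G_1$ (so $r=1$), or any sequence whose $r$-th element is $G$. The definition then yields $\A(D)\simeq_{\varepsilon,\delta}\A(D\setminus G)$. The one point needing care is that the definition quantifies over all datasets and all request sequences, so every pair $(D,G)$ with $G\subseteq D$ is realized by some sequence; taking $r=1$ makes this immediate.

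For the ``in particular'' clause, we take $(\varepsilon,\delta)$-DP for groups of at most $g_{\max}$ records to mean indistinguishability of $\A$ on any two datasets differing by removal of at most $g_{\max}$ records. Then for a request $G$ with $|G|\le g_{\max}$, the pair $D$ and $D\setminus G$ is such a pair, so $\A(D)\simeq_{\varepsilon,\delta}\A(D\setminus G)$. If the request's level $(\varepsilon_r,\delta_r)$ satisfies $\varepsilon_r\ge\varepsilon$ and $\delta_r\ge\delta$, monotonicity of Definition~\ref{def:indistinguishability} finishes the argument. This monotonicity holds because $e^{\varepsilon}\Pr(Y\in S)+\delta$ is nondecreasing in both parameters, and likewise the lower bound $e^{-\varepsilon}(\Pr(Y\in S)-\delta)$ is nonincreasing in both. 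Combining this with the sufficiency direction gives the claim.

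There is no real obstacle here. The only subtle points are two formal ones: checking that the quantifier over request sequences covers every single-group removal, and checking the monotonicity of $\simeq_{\varepsilon,\delta}$ in its parameters.
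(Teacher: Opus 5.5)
Your proposal is correct and follows the paper's proof: with identity kernels, both trajectories collapse to $\theta_r=\theta_0\sim\A(D)$ and $\theta_r^{\langle r\rangle}=\theta_0^{\langle r\rangle}\sim\A(D\setminus G_r)$, and the equivalence and the DP clause are read off directly, while your explicit treatment of the quantifiers (taking $r=1$) and of monotonicity in $(\varepsilon,\delta)$ fills in what the paper leaves implicit. One minor slip: $e^{-\varepsilon}(\Pr(Y\in S)-\delta)$ is nonincreasing in $\varepsilon$ only when $\Pr(Y\in S)\ge\delta$; otherwise the lower bound is negative and holds trivially, so the monotonicity you need remains valid.
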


\begin{proof}[Proof of Proposition~\ref{prop:no-provisioning}]
If no update is performed, $\theta_r=\theta_0\sim\A(D)$ and, for the counterfactual
trajectory of request $r$, $\theta_r^{\langle r\rangle}=\theta_0^{\langle r\rangle}\sim
\A(D\setminus G_r)$. The definition therefore holds at request $r$ if and only if
$\A(D)\simeq_{\varepsilon_r,\delta_r}\A(D\setminus G_r)$, which gives the equivalence.
If $\A$ is $(\varepsilon,\delta)$-DP for groups of at most $g_{\max}$ records, then
$\A(D)\simeq_{\varepsilon,\delta}\A(D\setminus G_r)$ whenever $g_r\le g_{\max}$, and this
implies the guarantee at any level $(\varepsilon_r,\delta_r)\ge(\varepsilon,\delta)$; no
parameter accumulates since each request is compared with its own reference.
\end{proof}

\section{Dependent requests arriving at different times}
\label{app:dependent}
 
When dependent requests are issued at different times, they cannot be merged into a
single request. Let the requests be partitioned into public \emph{blocks}, such that
Assumption~\ref{ass:independent-requests} holds for blocks instead of single requests (the
requests of a block may depend on each other; requests of different blocks may not), and
let $\mathcal B_r\subseteq\{1,\ldots,r\}$ be the set of requests of the block of $r$ issued
up to $r$, which contains $r$. Let $G_{\mathcal B_r}:=\bigcup_{s\in\mathcal B_r}G_s$. The
\emph{joint counterfactual reference} of request $r$ runs the same pipeline, with the same
kernels, on the data from which $G_{\mathcal B_r}$ has been removed from the start:
\[
\theta_0^{\langle\mathcal B_r\rangle}\sim\A\bigl(D_0\setminus G_{\mathcal B_r}\bigr),
\qquad
\theta_s^{\langle\mathcal B_r\rangle}\mid\theta_{s-1}^{\langle\mathcal B_r\rangle}
\sim\U_s\bigl(\theta_{s-1}^{\langle\mathcal B_r\rangle},D_s\setminus G_{\mathcal B_r}\bigr),
\quad s\ge1,
\]
the requests of $\mathcal B_r$ being void. The requirement becomes
\[
\theta_r\simeq_{\varepsilon_s,\delta_s}\theta_r^{\langle\mathcal B_r\rangle}
\qquad\text{for every } s\in\mathcal B_r .
\]
When every block is a single request, $\mathcal B_r=\{r\}$ and this is
Definition~\ref{def:counterfactual_sequential_unlearning}.
Propositions~\ref{prop:independence-reference}
and~\ref{prop:persistence-postprocessing} hold with the same proofs, $G_r$ being replaced
by $G_{\mathcal B_r}$.
 
\paragraph{Calibration}
The kernels may now depend on the partition into blocks, which is public. For
$0\le q\le r$, let
\[
A_r(q):=\sum_{s\in\mathcal B_r}g_s\,c^{K(r-q_s)}I_{q_s},\qquad q_s:=\min(q,s-1),
\]
be the influence accumulated by the block up to the end of episode $q$ (a group $G_s$ is
present until the end of episode $s-1$), propagated to the end of episode $r$. The
calibration of Theorem~\ref{thm:eps-delta-tracking} is modified as follows:
\[
\bar\mu_r^{\mathcal B}:=\min_{s\in\mathcal B_r}\bar\mu_s,\quad
b_r:=A_r(r),\quad
\bar x_r:=P_r\min_{0\le q<r,\ W_q>0}\frac{A_r(q)}{c^{2K(r-q)}W_q},\quad
x_r:=\min\Bigl\{\frac{(\bar\mu_r^{\mathcal B})^2P_r}{b_r},\ \bar x_r\Bigr\},
\]
with $x_r:=0$ if $P_r=0$; $P_r=c^{2K}W_{r-1}$ and
$W_r=P_r+2\eta\sigma_r^2S_2(K)$ are unchanged. For $\mathcal B_r=\{r\}$,
$A_r(q)=g_rc^{K(r-q)}I_q$ for $q<r$ and $A_r(r)=g_rc^KI_{r-1}$, which are the quantities of
Section~\ref{sec:theory}.
 
\begin{proposition}[Calibration for blocks]
\label{prop:blocks}
Let Assumptions~\ref{ass:independence}, \ref{ass:adaptive}, \ref{ass:stateless},
\ref{ass:single-release} and~\ref{ass:contraction} hold, and let
\[
\sigma_r^2=\frac{(b_r-x_r)^2}{2\eta S_2(K)\,\bigl((\bar\mu_r^{\mathcal B})^2-x_r^2/P_r\bigr)}
\qquad(\sigma_r:=0\ \text{if}\ x_r=b_r),
\]
with the quantities above. Then, for every dataset, every request sequence and every
$r$,
$\mathcal T\bigl(\theta_r,\theta_r^{\langle\mathcal B_r\rangle}\bigr)\ge
\mathcal G_{\bar\mu_r^{\mathcal B}}$, and therefore
$\theta_r\simeq_{\varepsilon_s,\delta_s}\theta_r^{\langle\mathcal B_r\rangle}$ for every
$s\in\mathcal B_r$. Moreover,
$2\eta\sigma_r^2S_2(K)\le b_r^2/(\bar\mu_r^{\mathcal B})^2$.
\end{proposition}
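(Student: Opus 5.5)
We follow the proof of Theorem~\ref{thm:eps-delta-tracking}, replacing the single-group comparison by the comparison between the actual trajectory and the joint counterfactual trajectory $\theta^{\langle\mathcal B_r\rangle}$. The tool is the shifted-interpolation argument of Bok et al.\ for contractive noisy gradient iterations in Gaussian DP. Both trajectories are driven by the same kernels, since the kernels depend only on public quantities, the block partition included (Assumption~\ref{ass:independence}). They differ only through the gradients of the records of $G_{\mathcal B_r}$. A group $G_s$ with $s\in\mathcal B_r$ is present in the actual run during episodes $0,\ldots,s-1$ and in neither run afterwards.

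\textbf{Step 1: deterministic gap.} Couple the two runs with the same Gaussian noise and bound the gap at the end of each episode $q$. At every step, the gradient maps differ by at most $\eta M\sum_{s\in\mathcal B_r,\,s>q'}g_s$, where $q'$ is the current episode, and each gradient map is $c$-contractive (Assumption~\ref{ass:contraction}). Unrolling the recursion gives a gap bounded by $\sum_{s}g_s\,\eta M\,(1-c^{N_{q_s}})/(1-c)$, which is $\sum_s g_sI_{q_s}$ with $q_s=\min(q,s-1)$. Contraction over the later episodes multiplies this by $c^{K(r-q_s)}$, so the gap propagated to the end of episode $r$ is bounded by $A_r(q)$ (up to checking the convention $q_s=s-1$). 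In particular, the gap at the release is bounded by $b_r=A_r(r)$. For singleton blocks this is exactly the bound of Theorem~\ref{thm:eps-delta-tracking}.

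\textbf{Step 2: shifted interpolation with noise allocation.} Apply the shifted-interpolation lemma: for noisy contractive steps, the trade-off function is at least $\mathcal G_\mu$ with
\[
\mu^2=\sum_k z_k^2/(2\eta\sigma_{(k)}^2),
\]
where the shifts $z_k$ must absorb the total gap. We choose to absorb part $x_r$ of the gap with the past noise, whose variance is $P_r=c^{2K}W_{r-1}$, and the remainder $b_r-x_r$ with the fresh noise of episode $r$, whose variance is $2\eta\sigma_r^2S_2(K)$. This gives
\[
\mu^2\le \frac{x_r^2}{P_r}+\frac{(b_r-x_r)^2}{2\eta\sigma_r^2S_2(K)}.
\]
The displayed $\sigma_r^2$ makes this equal to $(\bar\mu_r^{\mathcal B})^2$. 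The constraint $x_r\le(\bar\mu^{\mathcal B}_r)^2P_r/b_r$ ensures that the denominator is positive, and it also yields $2\eta\sigma_r^2S_2(K)\le b_r^2/(\bar\mu^{\mathcal B}_r)^2$, because the map $x\mapsto (b-x)^2/(\mu^2-x^2/P)$ is nonincreasing on that range.

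\textbf{Step 3: validity of the past-noise allocation (main obstacle).} The amount $x_r$ of past noise may only be reused if the past noise has not already been spent on the shifts needed earlier, when the block's gap was accumulating. The plan is to prove by induction on $q$ that, at the end of episode $q$, the noise of variance $W_q$ is at least proportional to the gap $A_r(q)$ still to be covered. The ratio constraint in $\bar x_r$, with its factor $c^{2K(r-q)}$ for the variance and $c^{K(r-q)}$ already inside $A_r(q)$, is what guarantees that the interpolated shifts at every earlier episode stay within the available budget. I would first redo the single-block bookkeeping of Theorem~\ref{thm:eps-delta-tracking} with the block gap $A_r(q)$ in place of $g_rc^{K(r-q)}I_q$, checking monotonicity in $q$ where the minimum is attained. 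The delicate point is that $A_r(q)$ keeps growing until the last group of the block leaves, unlike the singleton case.

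\textbf{Step 4: conclusion.} Lemma~\ref{lem:gdp-to-dp} converts the GDP bound into $(\varepsilon_s,\delta_s)$-indistinguishability for each $s\in\mathcal B_r$, since $\bar\mu_r^{\mathcal B}\le\bar\mu_s$ by definition of the minimum.
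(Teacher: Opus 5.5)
Your Steps 1, 2 and 4 follow the paper's route: the block discrepancy $s_k=\eta M\sum_{s\in\mathcal B_r:\,k<H_s}g_s$, the propagated sums $\sum_{\ell<N_q}c^{N_r-1-\ell}s_\ell=A_r(q)$, the split $\mu^2=x^2/P_r+(b_r-x)^2/Q_r$, and the final conversion. The gap is Step~3, which is the only genuinely new part. You leave it unproved, and the plan points the wrong way.

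The obstruction is not that past noise may already have been ``spent''. The certificate of request $r$ is a fresh application of Theorem~\ref{thm:bok-shifted-certificate} from time $0$ to $N_r$, and a noisy step can absorb any shift at quadratic cost. The actual constraint is causal, $z_k\ge0$ for every $k$: the shifts placed up to step $k$, propagated to $N_r$, may not exceed the discrepancy accumulated up to step $k$. With past shifts proportional to the variance at rate $x/P_r$, this reads at the end of episode $q<r$
\[
F(N_q)=A_r(q)-\frac{x}{P_r}\,c^{2K(r-q)}W_q\ \ge\ 0 .
\]
This is an \emph{upper} bound on the noise used relative to the gap, and it is exactly the $q$-th term of $\bar x_r$. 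No induction is involved. The inequality you propose to prove, that $W_q$ is at least proportional to $A_r(q)$, is reversed and false in general: $W_0=0<A_r(0)$ when $\sigma_0=0$.

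What is missing is the reduction from all steps $k$ to phase endpoints. The exit times $H_s=N_{s-1}$ are phase boundaries, so $s_\ell$ and the variance are constant on each phase. Hence the increment $c^{N_r-1-\ell}(s_\ell-\rho\,c^{N_r-1-\ell})$ of $F$ changes sign at most once, from positive to negative, and $F$ attains its minimum over each phase at an endpoint. The ``delicate point'' you raise is not one: $A_r(q)$ also increases up to $q=r-1$ for singleton blocks, and no monotonicity in $q$ is used.

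In Step~2, the constraint $x_r\le(\bar\mu_r^{\mathcal B})^2P_r/b_r$ alone does not make the denominator positive. Nor does it make $x\mapsto(b_r-x)^2/((\bar\mu_r^{\mathcal B})^2-x^2/P_r)$ nonincreasing on that range. For instance, if $(\bar\mu_r^{\mathcal B})^2P_r>b_r^2$ and $x_r=(\bar\mu_r^{\mathcal B})^2P_r/b_r$, then $x_r^2/P_r>(\bar\mu_r^{\mathcal B})^2$.

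You also need $x_r\le b_r$. It comes from the term $q=r-1$ of the cap, which equals $P_rA_r(r-1)/(c^{2K}W_{r-1})=A_r(r-1)=b_r$, because $q_s=s-1$ for every $s\in\mathcal B_r$ once $q\ge r-1$. With $x_r<b_r$ the denominator is positive and $\mu=\bar\mu_r^{\mathcal B}$. With $x_r=b_r$ one takes $\sigma_r=0$ and $\gamma=0$, so no shift falls on noiseless steps, and gets $\mu^2=b_r^2/P_r\le(\bar\mu_r^{\mathcal B})^2$. Your displayed $\mu^2$ divides by $\sigma_r^2$ and does not cover this case. The final bound $2\eta\sigma_r^2S_2(K)\le b_r^2/(\bar\mu_r^{\mathcal B})^2$ likewise rests on monotonicity only on $[0,x_r]\subseteq[0,\min\{b_r,(\bar\mu_r^{\mathcal B})^2P_r/b_r\}]$.
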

 
\begin{proof}
We follow the proof of Theorem~\ref{thm:eps-delta-tracking}
(Appendix~\ref{sec:appendix-proof-CoSU}) and indicate the changes. We apply
Theorem~\ref{thm:bok-shifted-certificate} with $N=N_r$ to the actual trajectory and to the
joint counterfactual trajectory; they start from the same point and use the same schedule
and the same variances, which are functions of public quantities only.
 
\emph{Discrepancy.} At step $k$, the two gradient maps are those of two objectives that
differ by the losses of the groups $G_s$, $s\in\mathcal B_r$, still present in the actual
training set, that is, such that $k<H_s$. Both maps are $c$-Lipschitz
(Assumption~\ref{ass:contraction}), so~\eqref{eq:CoSU-sk} becomes
\[
s_k=\eta M\sum_{s\in\mathcal B_r:\ k<H_s}g_s ,\qquad 0\le k<N_r .
\]
Since $H_s=N_{s-1}$, the sum is over all $s\in\mathcal B_r$ during training and over the
$s\in\mathcal B_r$ with $s>j$ during episode $j$: $s_k$ is constant on each phase, and
vanishes on episode $r$. For $0\le q\le r$, since $\min(N_q,H_s)=N_{q_s}$ and
$\eta M\sum_{\ell<N_j}c^{N_r-1-\ell}=c^{K(r-j)}I_j$,
\[
\sum_{\ell=0}^{N_q-1}c^{N_r-1-\ell}s_\ell
=\eta M\sum_{s\in\mathcal B_r}g_s\sum_{\ell<\min(N_q,H_s)}c^{N_r-1-\ell}
=\sum_{s\in\mathcal B_r}g_s\,c^{K(r-q_s)}I_{q_s}=A_r(q),
\]
which replaces~\eqref{eq:CoSU-available}. In particular $A_r(r-1)=A_r(r)=b_r$, since
$q_s=s-1$ for every $s\in\mathcal B_r$ as soon as $q\ge r-1$: the terminal
condition~\eqref{eq:CoSU-linear} holds with this $b_r$, and~\eqref{eq:CoSU-partial} is
unchanged.
 
\emph{Allocation and admissibility.} We keep the allocation~\eqref{eq:CoSU-profile}, so
that~\eqref{eq:CoSU-two-param} holds with the new $b_r$. The proof of
Lemma~\ref{lem:admissible} only uses that $s_\ell$ is constant on each phase, which still
holds: it is enough to check~\eqref{eq:CoSU-partial} at the ends of the phases. With the
notation of that proof, for $q<r$,
\[
F(N_q)=A_r(q)-\frac{x}{P_r}\,c^{2K(r-q)}W_q ,
\]
which is nonnegative when $x\le\bar x_r$, and $F(N_r)=b_r-x-\gamma Q_r=0$.
 
\emph{Budget.} The term $q=r-1$ of the cap is $P_rA_r(r-1)/(c^{2K}W_{r-1})=b_r$, so
$x_r\le b_r$. The proof of Theorem~\ref{thm:eps-delta-tracking} only uses $x_r\le b_r$ and
$x_r\le\bar\mu^2P_r/b_r$; with $\bar\mu:=\bar\mu_r^{\mathcal B}$, it shows that $\sigma_r$ is
well defined and that the allocation is admissible with $\mu\le\bar\mu_r^{\mathcal B}$.
Theorem~\ref{thm:bok-shifted-certificate} gives the Gaussian-DP bound, and
Lemma~\ref{lem:gdp-to-dp} gives the level $(\varepsilon_s,\delta_s)$ of every
$s\in\mathcal B_r$, since $\bar\mu_r^{\mathcal B}\le\bar\mu_s$. The last bound is the
argument of Corollary~\ref{cor:pay-as-you-go}(a): $Q_r$ is nonincreasing in $x$ on
$[0,x_r]$, so $Q_r\le b_r^2/(\bar\mu_r^{\mathcal B})^2$.
\end{proof}
 
\paragraph{Price of a joint counterfactual certificate}
Let $\sigma_0=0$, and let all requests have a common size $g$ and level $\bar\mu$ and form
a single block. Then $b_r=g\sum_{s\le r}c^{K(r-s+1)}I_{s-1}$ tends to
$gc^KI_\infty/(1-c^K)$, with $I_\infty:=\eta M/(1-c)$, instead of $gc^KI_\infty$ for
independent requests. When the cap is not active, which is what we observe numerically
in this case, $W_r=b_r^2/\bar\mu^2$ as in the proof of
Corollary~\ref{cor:pay-as-you-go}(b), so the propagated noise standard deviation is
asymptotically $1/(1-c^K)$ times that of independent requests: this is the price of
treating requests jointly. This joint certificate is still counterfactual: its reference
replays the pipeline without the block, and is not the fresh retraining $\A(D_r)$ of the
cumulative-deletion reference (Definition~\ref{def:naive_sequential_unlearning}). The numerical check of
Appendix~\ref{app:additional-experiments} covers this extension.

\section{Proofs of Theorem~\ref{thm:eps-delta-tracking}, Corollary~\ref{cor:pay-as-you-go} and Proposition~\ref{prop:training-noise}}
\label{sec:appendix-proof-CoSU}

\subsection{Shifted-interpolation with varying noise scales}

\begin{theorem}[Shifted-interpolation, after Bok et al.~\cite{bok2024shiftedinterpolationdifferentialprivacy}]
\label{thm:bok-shifted-certificate}
Consider two noisy trajectories
\[
X_{k+1}=\varphi_{k+1}(X_k)+\xi_{k+1},
\qquad
X'_{k+1}=\varphi'_{k+1}(X'_k)+\xi'_{k+1},
\qquad
0\le k<N,
\]
started from the same point, where
\[
\xi_{k+1},\xi'_{k+1}\sim
\mathcal N(0,2\eta\sigma_{k+1}^2I),\qquad\sigma_{k+1}\ge0,
\]
are independent of the past of their trajectory.
Assume that \(\varphi_{k+1}\) and \(\varphi'_{k+1}\) are \(c\)-Lipschitz, and let
\[
\sup_\theta
\norm{\varphi_{k+1}(\theta)-\varphi'_{k+1}(\theta)}
\le s_k.
\]
If there exist deterministic sequences \(z_k\ge0\) and \(a_{k+1}\ge0\) such that
\begin{equation}
\label{eq:bok-recursion}
z_0=0,
\qquad
z_{k+1}=cz_k+s_k-a_{k+1},
\qquad
0\le a_{k+1}\le cz_k+s_k,
\qquad
z_N=0,
\end{equation}
with $a_{k+1}=0$ whenever $\sigma_{k+1}=0$, then
\begin{equation}
\label{eq:bok-gdp}
\mathcal T(X_N,X'_N)
\ge
\mathcal G_\mu,
\qquad
\mu^2
:=
\sum_{k:\,\sigma_{k+1}>0}
\frac{a_{k+1}^2}{2\eta\sigma_{k+1}^2}.
\end{equation}
\end{theorem}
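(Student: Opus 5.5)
We prove this shifted-interpolation statement in the $f$-DP language by induction along the trajectory, following Bok et al. The idea is to define auxiliary shifted iterates $\tilde X_k$ that interpolate between the two trajectories and track the remaining distance $z_k$ at each step. Set $\tilde X_0:=X_0=X'_0$. For $0\le k<N$, define
\[
\tilde X_{k+1}:=\lambda_{k+1}\varphi_{k+1}(X_k)+(1-\lambda_{k+1})\varphi'_{k+1}(\tilde X_k)+\xi'_{k+1},
\]
with $\lambda_{k+1}\in[0,1]$ chosen so that $\|X_{k+1}-\tilde X_{k+1}\|\le z_{k+1}$. Concretely, the inductive hypothesis is $\|X_k-\tilde X_k\|\le z_k$ almost surely under the coupling in which $\xi_{k+1}=\xi'_{k+1}$. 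Then
\[
\|\varphi_{k+1}(X_k)-\varphi'_{k+1}(\tilde X_k)\|\le cz_k+s_k,
\]
by $c$-Lipschitzness of $\varphi_{k+1}$ and the uniform bound $s_k$ on $\varphi_{k+1}-\varphi'_{k+1}$. Choosing $\lambda_{k+1}:=a_{k+1}/(cz_k+s_k)$ (or $0$ when this quantity vanishes) moves the shifted point a distance at most $a_{k+1}$ toward $X'$'s map. The residual gap is then at most $cz_k+s_k-a_{k+1}=z_{k+1}$. The constraint $0\le a_{k+1}\le cz_k+s_k$ is exactly what makes $\lambda_{k+1}\in[0,1]$.

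The privacy accounting compares, at each step, the law of $\tilde X_{k+1}$ with that of the honest update $\varphi'_{k+1}(\tilde X_k)+\xi'_{k+1}$. The two means differ by at most $a_{k+1}$, and the noise is Gaussian with variance $2\eta\sigma_{k+1}^2$. The Gaussian shift lemma (trade-off between $\mathcal N(0,v)$ and $\mathcal N(u,v)$ is $\mathcal G_{\|u\|/\sqrt v}$) gives a per-step cost $\mathcal G_{a_{k+1}/\sqrt{2\eta\sigma_{k+1}^2}}$. When $\sigma_{k+1}=0$, the hypothesis $a_{k+1}=0$ makes the step a common deterministic post-processing at no cost. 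Steps can therefore be restricted to $\sigma_{k+1}>0$.

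To chain the steps, we use the $f$-DP "shifted composition" of Bok et al. Consider the pair $(\tilde X_{k}, X'_k)$. At each step both $\tilde X$ and $X'$ apply the same kernel $\varphi'_{k+1}(\cdot)+\xi'_{k+1}$, plus the shift. Adaptive composition of Gaussian trade-offs then gives $\mathcal T(\tilde X_N,X'_N)\ge\mathcal G_{\mu}$ with $\mu^2=\sum a_{k+1}^2/(2\eta\sigma_{k+1}^2)$, since $\mathcal G_{\mu_1}\otimes\mathcal G_{\mu_2}=\mathcal G_{\sqrt{\mu_1^2+\mu_2^2}}$. Finally, $z_N=0$ forces $\tilde X_N=X_N$ almost surely under the coupling, so $\mathcal T(X_N,X'_N)\ge\mathcal G_\mu$.

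The main obstacle is making the composition rigorous. The shift at step $k+1$ depends on $X_k$, not only on $\tilde X_k$, so the process is not a plain Markov chain in $\tilde X$ alone. The fix is to run the composition on the joint process $(X_k,\tilde X_k)$ versus a process $(X_k,X'_k)$-like reference. Alternatively, following Bok et al., one can state the induction directly as $\mathcal T(\tilde X_k,X'_k)\ge\mathcal G_{\mu_k}$ and use the fact that a Gaussian shift of size at most $a$ with a possibly data-dependent direction still costs at most $\mathcal G_{a/\sqrt{v}}$. This step needs the contraction/post-processing property of trade-off functions under the common Lipschitz map $\varphi'_{k+1}$. The one genuine change from Bok et al. is allowing $\sigma_{k+1}$ to vary with $k$, including zero; this affects only the per-step constant, so their argument goes through term by term.
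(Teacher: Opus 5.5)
Your proposal is correct and is essentially the paper's proof: it uses the same interpolated sequence with $\lambda_{k+1}=a_{k+1}/(cz_k+s_k)$, the invariant $\|X_k-\tilde X_k\|\le z_k$, post-processing on noiseless steps, Gaussian tensorization, and $z_N=0$ to identify $\tilde X_N$ with $X_N$; your coupling $\xi_{k+1}=\xi'_{k+1}$ amounts to the paper's choice of driving $\tilde X$ with the noise of the first trajectory. The ``fact'' you invoke for a bounded shift with data-dependent direction is exactly the paper's one-noisy-step lemma. The paper proves it by observing that, given $Y=y$, the law of $V+\xi$ is a mixture of $\mathcal N(u,vI)$ over $\|u\|\le a$, so a test of level $\alpha$ under the mixture has level at most $\alpha$ under some component; composition and post-processing by $(y,w)\mapsto y+w$ then finish it.
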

 
Each step with Gaussian noise of variance $2\eta\sigma_{k+1}^2$ absorbs a shift $a_{k+1}$ at
cost $a_{k+1}^2/(2\eta\sigma_{k+1}^2)$, the remaining discrepancy $z_k$ being propagated by
the $c$-Lipschitz maps. Theorem~C.3 of Bok et
al.~\cite{bok2024shiftedinterpolationdifferentialprivacy} is the case of a constant noise
scale. We need noise scales that change from one step to the next, and steps without
noise, on which no shift may be placed (the condition $a_{k+1}=0$); we therefore give the
proof, which follows theirs.
 
\paragraph{Trade-off functions}
For $f=\mathcal T(P,Q)$ and $g=\mathcal T(P',Q')$, the tensor product is
$f\otimes g:=\mathcal T(P\times P',Q\times Q')$, and $\mathrm{Id}:=\mathcal T(P,P)$. We use
the following properties \cite{dong2022gaussian}; (P1) and (P2) are Lemmas~2.5 and~2.7 of
Bok et al.~\cite{bok2024shiftedinterpolationdifferentialprivacy}.
\begin{itemize}
\item[(P1)] \emph{Post-processing.} $\mathcal T(h(X),h(X'))\ge\mathcal T(X,X')$ for every
measurable map $h$.
\item[(P2)] \emph{Composition.} If $Y\mid X=x\sim\mathsf K(x)$ and
$Y'\mid X'=x\sim\mathsf K'(x)$ for two Markov kernels such that
$\mathcal T(\mathsf K(x),\mathsf K'(x))\ge g$ for every $x$, then
$\mathcal T\bigl((X,Y),(X',Y')\bigr)\ge\mathcal T(X,X')\otimes g$.
\item[(P3)] \emph{Tensor product.} It is commutative, associative and nondecreasing in
each argument, and $f\otimes\mathrm{Id}=f$.
\item[(P4)] \emph{Gaussian noise.} For $v>0$,
$\mathcal T\bigl(\mathcal N(u,vI),\mathcal N(0,vI)\bigr)=\mathcal G_{\|u\|/\sqrt v}$, and
$\mathcal G_{\mu_1}\otimes\mathcal G_{\mu_2}=\mathcal G_{\sqrt{\mu_1^2+\mu_2^2}}$.
\end{itemize}
 
\begin{lemma}[One noisy step]
\label{lem:one-noisy-step}
Let $Y$ and $V$ be random vectors defined on the same probability space, with
$\|V\|\le a$ almost surely, and let $Y'$ be a random vector. Let $v>0$ and
$\xi,\xi'\sim\mathcal N(0,vI)$, with $\xi$ independent of $(Y,V)$ and $\xi'$ independent
of $Y'$. Then
\[
\mathcal T\bigl(Y+V+\xi,\ Y'+\xi'\bigr)\ \ge\ \mathcal T(Y,Y')\otimes\mathcal G_{a/\sqrt v}.
\]
\end{lemma}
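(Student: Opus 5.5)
The plan is to condition on the pair $(Y,V)$, reduce to a pure Gaussian shift, and then compose. The first step uses post-processing (P1) to pass from the triple $(Y,V,Y+V+\xi)$ to $Y+V+\xi$. We set $W:=Y+V+\xi$ and $W':=Y'+\xi'$. We couple the second world with a dummy $V':=0$, which satisfies $\|V'\|\le a$ trivially. Then $W=h(Y,V,\xi)$, and it suffices to lower-bound $\mathcal T\bigl((Y,W),(Y',W')\bigr)$, because $W$ is the image of $(Y,W)$ under the projection. Concretely, by (P1),
\[
\mathcal T(W,W')\ \ge\ \mathcal T\bigl((Y,W),(Y',W')\bigr).
\]

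The second step bounds this pair by (P2) applied to the kernels $\mathsf K(y)$ = law of $W$ given $Y=y$, and $\mathsf K'(y)$ = law of $W'$ given $Y'=y$, which is $\mathcal N(y,vI)$. The difficulty is that $V$ is random and correlated with $Y$. Given $Y=y$, the kernel $\mathsf K(y)$ is a mixture over $V\mid Y=y$ of the laws $\mathcal N(y+u,vI)$ with $\|u\|\le a$. We must show that $\mathcal T\bigl(\mathsf K(y),\mathcal N(y,vI)\bigr)\ge\mathcal G_{a/\sqrt v}$ for every $y$.

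For a single $u$, the bound follows from (P4) and the monotonicity of $\mathcal G_\mu$ in $\mu$: we get $\mathcal G_{\|u\|/\sqrt v}\ge\mathcal G_{a/\sqrt v}$. For mixtures, we use the convexity property of trade-off lower bounds: if $\mathcal T(P_u,Q)\ge g$ for every $u$, then $\mathcal T\bigl(\int P_u\,d\nu(u),Q\bigr)\ge g$ when the second argument $Q$ is fixed. This holds because, for a test $\phi$, the constraint $\E_{\int P_u}[\phi]\le\alpha$ is an average of the values $\alpha_u:=\E_{P_u}[\phi]$. The type II error against $Q$ is common, so we get $1-\E_Q[\phi]\ge g(\alpha_u)$ for every $u$, hence $1-\E_Q[\phi]\ge\int g(\alpha_u)\,d\nu\ge g(\alpha)$, using the convexity and monotonicity of $g$ together with Jensen. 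The ordering convention in $\mathcal T(P,Q)$ has to be tracked, with the mixture placed on the side where this argument works. If the mixture must sit on the other side, we use the symmetry of $\mathcal G_\mu$ (it is its own inverse, in the sense $\mathcal T(Q,P)=\mathcal T(P,Q)^{-1}$) and swap the roles. I expect this mixture step to be the main obstacle, since it is where the random, $Y$-dependent shift is handled. A measurability check on the regular conditional law of $V$ given $Y$ is also needed there.

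The last step combines the pieces. With the pointwise bound $g=\mathcal G_{a/\sqrt v}$, (P2) gives
\[
\mathcal T\bigl((Y,W),(Y',W')\bigr)\ge\mathcal T(Y,Y')\otimes\mathcal G_{a/\sqrt v}.
\]
Chaining this with the post-processing inequality from the first step yields the claim. The independence of $\xi$ from $(Y,V)$ and of $\xi'$ from $Y'$ is exactly what makes the conditional laws of $W$ given $Y$ and of $W'$ given $Y'$ the Gaussian mixtures used above.
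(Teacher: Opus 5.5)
Your proposal is correct and follows the paper's proof. You condition on $Y$, show that the Gaussian-mixture kernel satisfies $\mathcal T(\mathsf K(y),\mathsf K'(y))\ge\mathcal G_{a/\sqrt v}$ for every $y$, and then apply (P2) and (P1); the paper pairs $Y$ with $V+\xi$ and post-processes with $h(y,w)=y+w$ where you pair it with $W$ and project, which is equivalent. In the mixture step the paper avoids Jensen by picking some $u$ with $\E_{\mathcal N(u,vI)}[\phi]\le\alpha$ and using only monotonicity of $\mathcal G_\mu$, while your convexity argument is equally valid; and since the mixture already sits in the first argument, as in the statement, no swap by symmetry is needed.
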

 
\begin{proof}
Given $Y=y$, the law $\mathsf K(y)$ of $V+\xi$ is the mixture
$\int\mathcal N(u,vI)\,\nu_y(du)$, where $\nu_y$ is the conditional law of $V$, supported
on the ball of radius $a$. Given $Y'=y$, the law of $\xi'$ is
$\mathsf K'(y):=\mathcal N(0,vI)$. Let $\phi$ be a test with
$\mathbb E_{\mathsf K(y)}[\phi]\le\alpha$. Since
$\mathbb E_{\mathsf K(y)}[\phi]=\int\mathbb E_{\mathcal N(u,vI)}[\phi]\,\nu_y(du)$, there
exists $u$ with $\|u\|\le a$ and $\mathbb E_{\mathcal N(u,vI)}[\phi]\le\alpha$. By (P4),
and since trade-off functions are nonincreasing and $\mathcal G_\mu$ decreases with $\mu$,
\[
1-\mathbb E_{\mathcal N(0,vI)}[\phi]\ \ge\ \mathcal G_{\|u\|/\sqrt v}(\alpha)\ \ge\
\mathcal G_{a/\sqrt v}(\alpha).
\]
Hence $\mathcal T(\mathsf K(y),\mathsf K'(y))\ge\mathcal G_{a/\sqrt v}$ for every $y$, and
(P2) gives
$\mathcal T\bigl((Y,V+\xi),(Y',\xi')\bigr)\ge\mathcal T(Y,Y')\otimes\mathcal G_{a/\sqrt v}$.
(P1) with $h(y,w):=y+w$ concludes.
\end{proof}
 
\begin{proof}[Proof of Theorem~\ref{thm:bok-shifted-certificate}]
Let $v_{k+1}:=2\eta\sigma_{k+1}^2$, and let
$\lambda_{k+1}:=a_{k+1}/(cz_k+s_k)\in[0,1]$, with $\lambda_{k+1}:=0$ if $cz_k+s_k=0$. On
the probability space of $(X_k)_k$, define the interpolated sequence
\[
\tilde X_0:=X_0,\qquad
\tilde X_{k+1}:=\varphi'_{k+1}(\tilde X_k)+V_{k+1}+\xi_{k+1},\qquad
V_{k+1}:=\lambda_{k+1}\bigl(\varphi_{k+1}(X_k)-\varphi'_{k+1}(\tilde X_k)\bigr),
\]
which uses the noise of the first trajectory.
 
\emph{Step 1: $\|X_k-\tilde X_k\|\le z_k$ and $\|V_{k+1}\|\le a_{k+1}$ almost surely.} This
holds at $k=0$ since $z_0=0$. If $\|X_k-\tilde X_k\|\le z_k$, then
\[
\bigl\|\varphi_{k+1}(X_k)-\varphi'_{k+1}(\tilde X_k)\bigr\|
\le\bigl\|\varphi_{k+1}(X_k)-\varphi_{k+1}(\tilde X_k)\bigr\|
+\bigl\|\varphi_{k+1}(\tilde X_k)-\varphi'_{k+1}(\tilde X_k)\bigr\|
\le cz_k+s_k ,
\]
so $\|V_{k+1}\|\le\lambda_{k+1}(cz_k+s_k)=a_{k+1}$, and, since
$X_{k+1}-\tilde X_{k+1}=(1-\lambda_{k+1})\bigl(\varphi_{k+1}(X_k)-\varphi'_{k+1}(\tilde X_k)\bigr)$,
\[
\|X_{k+1}-\tilde X_{k+1}\|\le(1-\lambda_{k+1})(cz_k+s_k)=cz_k+s_k-a_{k+1}=z_{k+1}.
\]
 
\emph{Step 2: one step.} If $\sigma_{k+1}>0$, we apply Lemma~\ref{lem:one-noisy-step} with
$Y:=\varphi'_{k+1}(\tilde X_k)$, $V:=V_{k+1}$, $Y':=\varphi'_{k+1}(X'_k)$, $a:=a_{k+1}$ and
$v:=v_{k+1}$; $\xi_{k+1}$ is independent of $(X_k,\tilde X_k)$, hence of $(Y,V)$. With (P1)
for the map $\varphi'_{k+1}$ and (P3),
\[
\mathcal T(\tilde X_{k+1},X'_{k+1})
\ge\mathcal T\bigl(\varphi'_{k+1}(\tilde X_k),\varphi'_{k+1}(X'_k)\bigr)
\otimes\mathcal G_{a_{k+1}/\sqrt{v_{k+1}}}
\ge\mathcal T(\tilde X_k,X'_k)\otimes\mathcal G_{a_{k+1}/\sqrt{v_{k+1}}} .
\]
If $\sigma_{k+1}=0$, then $a_{k+1}=0$, so $V_{k+1}=0$ and there is no noise:
$\tilde X_{k+1}=\varphi'_{k+1}(\tilde X_k)$ and $X'_{k+1}=\varphi'_{k+1}(X'_k)$, and (P1)
gives $\mathcal T(\tilde X_{k+1},X'_{k+1})\ge\mathcal T(\tilde X_k,X'_k)$.
 
\emph{Step 3: conclusion.} Since $\tilde X_0=X_0$ and $X'_0$ are the same point,
$\mathcal T(\tilde X_0,X'_0)=\mathrm{Id}$. By induction on $k$ with Step~2, (P3) and (P4),
\[
\mathcal T(\tilde X_N,X'_N)\ \ge\
\bigotimes_{k:\,\sigma_{k+1}>0}\mathcal G_{a_{k+1}/\sqrt{v_{k+1}}}=\mathcal G_\mu .
\]
Finally, $z_N=0$ and Step~1 give $\tilde X_N=X_N$ almost surely, which
is~\eqref{eq:bok-gdp}.
\end{proof}
 
The bound is stated for the ordered pair $(X_N,X'_N)$. The Gaussian trade-off function is
symmetric, so $\mathcal T(P,Q)\ge\mathcal G_\mu$ implies $\mathcal T(Q,P)\ge\mathcal G_\mu$,
which gives the two inequalities defining $\simeq_{\varepsilon,\delta}$.

\subsection{Reduction to a one-parameter allocation}

Fix a request $r$ and apply Theorem~\ref{thm:bok-shifted-certificate} with \(N=N_r\),
\(X_N=\theta_{N_r}\) and \(X'_N=\theta_{N_r}^{\langle r\rangle}\). Both trajectories start from
$\theta_{\mathrm{init}}$ and use the same public schedule \((T,K)\) and the same variances,
$2\eta\sigma_0^2$ during training and $2\eta\sigma_j^2$ during episode $j$, which are
functions of the sizes and privacy levels of the requests only (Assumption~\ref{ass:independence}).

\paragraph{Discrepancy}
Before time \(H_r=N_{r-1}\), the actual gradient map is that of an objective containing the
records of $G_r$, and the counterfactual one is that of the same objective without them;
the other groups are removed at the same times in both trajectories. By
Assumption~\ref{ass:contraction}, both maps are $c$-Lipschitz, and
\(
\sup_\theta\|\eta\sum_{d\in G_r}\nabla\ell(\theta,d)\|\le \eta M g_r .
\)
From time $H_r$ on, both trajectories use $f_r$. Hence
\begin{equation}
\label{eq:CoSU-sk}
s_k=\eta Mg_r
\quad (0\le k<H_r),
\qquad
s_k=0
\quad (H_r\le k<N_r).
\end{equation}
Unrolling \eqref{eq:bok-recursion} gives
\begin{equation}
\label{eq:CoSU-unrolled}
c^{N_r-k}z_k
=
\sum_{\ell=0}^{k-1}c^{N_r-1-\ell}s_\ell
-
\sum_{\ell=0}^{k-1}c^{N_r-1-\ell}a_{\ell+1}.
\end{equation}
Since $\eta M\sum_{\ell<N_q}c^{N_r-1-\ell}=c^{K(r-q)}I_q$ for $0\le q<r$, the
discrepancy accumulated up to the end of episode $q$ (of training for $q=0$) and
propagated to $N_r$ is
\begin{equation}
\label{eq:CoSU-available}
\sum_{\ell=0}^{N_q-1}c^{N_r-1-\ell}s_\ell=g_r\,c^{K(r-q)}I_q .
\end{equation}
In particular, the terminal condition $z_{N_r}=0$ reads
\begin{equation}
\label{eq:CoSU-linear}
\sum_{k=0}^{N_r-1}c^{N_r-1-k}a_{k+1}=g_rc^KI_{r-1}=b_r,
\end{equation}
and the conditions $z_k\ge0$ for all $k$ are equivalent to
\begin{equation}
\label{eq:CoSU-partial}
\sum_{\ell=0}^{k-1}c^{N_r-1-\ell}a_{\ell+1}
\le
\sum_{\ell=0}^{k-1}c^{N_r-1-\ell}s_\ell,
\qquad 0\le k\le N_r .
\end{equation}
Once $z_k\ge0$ holds for all $k$, the constraint $a_{k+1}\le cz_k+s_k$ follows from
$z_{k+1}\ge0$.

\paragraph{Allocation}
For parameters $\beta\ge0$ and $\gamma\ge0$, we place shifts proportional to the noise
variance on every past step, training included, at a common rate $\beta$, and at rate
$\gamma$ on the current episode:
\begin{equation}
\label{eq:CoSU-profile}
a_{k+1}
=
\begin{cases}
\beta\,(2\eta\sigma_0^2)\,c^{N_r-1-k}, & 0\le k<T,\\[2pt]
\beta\,(2\eta\sigma_j^2)\,c^{N_r-1-k}, & H_j\le k<N_j,\ 1\le j<r,\\[2pt]
\gamma\,(2\eta\sigma_r^2)\,c^{N_r-1-k}, & H_r\le k<N_r .
\end{cases}
\end{equation}
In particular, no shift is placed on a step without noise. Since
\[
\sum_{k=H_j}^{N_j-1}2\eta\sigma_j^2c^{2(N_r-1-k)}=2\eta\sigma_j^2c^{2K(r-j)}S_2(K)
\quad\text{and}\quad
\sum_{k=0}^{T-1}2\eta\sigma_0^2c^{2(N_r-1-k)}=c^{2Kr}W_0,
\]
the propagated variance of
all the past steps is $c^{2K}W_{r-1}=P_r$, that of the steps up to the end of episode $q$
is $c^{2K(r-q)}W_q$, and that of the current episode is $Q_r:=2\eta\sigma_r^2S_2(K)$.
Writing $x:=\beta P_r$ for the amount hidden by the past steps,
the constraint~\eqref{eq:CoSU-linear} and the GDP parameter~\eqref{eq:bok-gdp} become
\begin{equation}
\label{eq:CoSU-two-param}
x+\gamma Q_r=b_r,
\qquad
\mu^2=\frac{x^2}{P_r}+\frac{(b_r-x)^2}{Q_r} .
\end{equation}
For a given $x$, spreading it over the past steps in proportion to their variance
minimizes their cost $\sum a_{k+1}^2/(2\eta\sigma_{k+1}^2)$, by Cauchy--Schwarz, which is
why we use the profile~\eqref{eq:CoSU-profile}.

\begin{lemma}[Admissibility]
\label{lem:admissible}
The allocation~\eqref{eq:CoSU-profile} satisfies~\eqref{eq:CoSU-partial} as soon as
$0\le x\le\bar x_r$ and $\gamma Q_r=b_r-x$ (with $\gamma=0$ when $x=b_r$).
\end{lemma}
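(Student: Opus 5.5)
We must show that, for every $0\le k\le N_r$, the function
\[
F(k):=\sum_{\ell=0}^{k-1}c^{N_r-1-\ell}\bigl(s_\ell-a_{\ell+1}\bigr)
\]
is nonnegative. This is exactly~\eqref{eq:CoSU-partial}.

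\emph{Reduction to phase endpoints.} The plan is to show that $F$ is piecewise monotone between the phase endpoints $N_0=T,N_1,\dots,N_r$. This reduces the condition to finitely many checks.

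On each phase the two ingredients behave simply. By~\eqref{eq:CoSU-sk}, $s_\ell$ is constant on the phase, equal to $\eta Mg_r$ before $H_r$ and to $0$ on episode $r$. By~\eqref{eq:CoSU-profile}, $a_{\ell+1}$ is a constant times $c^{N_r-1-\ell}$. Hence the increment of $F$ at step $\ell$ is
\[
c^{N_r-1-\ell}\bigl(s-\kappa c^{N_r-1-\ell}\bigr),
\]
with $s,\kappa\ge0$ constant on the phase. Since $c^{N_r-1-\ell}$ increases with $\ell$, the bracket is nonincreasing in $\ell$. So on each phase the increments are first nonnegative and then nonpositive, and $F$ is unimodal (increasing, then decreasing) there. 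Its minimum over the phase is therefore attained at one of the two endpoints. It thus suffices to check $F\ge0$ at $k=0$ (where $F=0$) and at $k=N_q$ for $0\le q\le r$.

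\emph{Intermediate endpoints $q<r$.} For $q<r$, the $s$-part of $F(N_q)$ is $g_rc^{K(r-q)}I_q$ by~\eqref{eq:CoSU-available}. The $a$-part only involves past steps, where $a_{k+1}=\beta v_{k+1}c^{N_r-1-k}$ with $v_{k+1}$ the step variance. Summing these gives $\beta$ times the propagated variance up to $N_q$, namely $\beta c^{2K(r-q)}W_q$, with $\beta=x/P_r$. Hence
\[
F(N_q)=g_rc^{K(r-q)}I_q-\frac{x}{P_r}\,c^{2K(r-q)}W_q .
\]
If $W_q=0$ this is trivially nonnegative. Otherwise, nonnegativity is equivalent to
\[
x\le \frac{P_r\,g_rI_q}{c^{K(r-q)}W_q}.
\]
This is guaranteed by $x\le\bar x_r$ and the definition of $\bar x_r$ as a minimum over such $s=q$. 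One should double-check the power $c^{K(r-q)}$ in the denominator: it comes from cancelling $c^{K(r-q)}$ in the ratio $g_rc^{K(r-q)}I_q/(c^{2K(r-q)}W_q)$, and this matches the statement.

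\emph{Final endpoint.} At $k=N_r$, $F(N_r)=b_r-x-\gamma Q_r=0$ by~\eqref{eq:CoSU-linear} and the choice $\gamma Q_r=b_r-x$. The degenerate case $\gamma=0$ (when $x=b_r$) is consistent with this.

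The $q=r-1$ endpoint deserves care. There we need $F(N_{r-1})=b_r-x\ge0$, and one must confirm that $x\le b_r$ follows from the cap. Indeed, the $s=r-1$ term of $\bar x_r$ equals $P_rg_rI_{r-1}/(c^KW_{r-1})=g_rc^KI_{r-1}=b_r$, using $P_r=c^{2K}W_{r-1}$.

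\emph{Main obstacle.} The main obstacle is the unimodality argument. One must verify that within the current episode (where $s=0$) the increments are nonpositive, so $F$ decreases from $b_r-x\ge0$ to $0$ and stays nonnegative. One must also handle phases with zero noise, where $\kappa=0$ and $F$ is simply nondecreasing. Both are special cases of the endpoint-minimum argument, but they should be stated explicitly.
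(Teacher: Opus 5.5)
Your proof is correct and follows essentially the same route as the paper: you show $F$ is unimodal (increasing then decreasing) on each constant-variance phase, reduce to the phase endpoints, check $F(N_q)=g_rc^{K(r-q)}I_q-\frac{x}{P_r}c^{2K(r-q)}W_q\ge0$ for $q<r$ via the cap $\bar x_r$, and get $F(N_r)=b_r-x-\gamma Q_r=0$. Your separate check at $q=r-1$ is already covered by the general $q<r$ case, since the $s=r-1$ term of the cap equals $b_r$, and you may want to rename your phase constant $\kappa$, which clashes with the paper's $\kappa=L/m$.
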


\begin{proof}
Let $F(k)$ be the difference between the right-hand side and the left-hand side
of~\eqref{eq:CoSU-partial}, so that $F(0)=0$ and we must show $F(k)\ge0$ for all $k$.
Within a phase of constant variance (the training phase, or an episode $j\le r$), the
increment of $F$ at step $\ell$ is
\[
F(\ell+1)-F(\ell)=c^{N_r-1-\ell}\bigl(s_\ell-\rho\,c^{N_r-1-\ell}\bigr),
\]
with $\rho\ge0$ constant on the phase and $s_\ell$ constant on the phase by
\eqref{eq:CoSU-sk}. Since $c^{N_r-1-\ell}$ increases with $\ell$, the increments change
sign at most once, from positive to negative: $F$ first increases, then decreases, on each
phase, and its minimum over a phase is attained at one of its two ends. It is therefore
enough to check $F\ge0$ at the ends of the phases, i.e., at $k=N_q$ for $0\le q\le r$
(with $N_0=T$ and $F(0)=0$). For $q<r$, by~\eqref{eq:CoSU-available} and the computation
of the propagated variances,
\[
F(N_q)=g_rc^{K(r-q)}I_q-\frac{x}{P_r}\,c^{2K(r-q)}W_q ,
\]
which is nonnegative since $x\le\bar x_r\le P_r\,g_rI_q/(c^{K(r-q)}W_q)
=g_rc^{K(r-q)}I_q\,P_r/(c^{2K(r-q)}W_q)$ when
$W_q>0$, and trivially when $W_q=0$. For $q=r$, $F(N_r)=b_r-x-\gamma Q_r=0$.
\end{proof}

\subsection{Proof of Theorem~\ref{thm:eps-delta-tracking}}

Write $b:=b_r$, $P:=P_r$, $x:=x_r$ and $\bar\mu:=\bar\mu_r$. If $P=0$, then $x=0$, no shift
is placed on the past steps, and $Q_r=b^2/\bar\mu^2$ gives $\mu^2=\bar\mu^2$. Let $P>0$, so
that $W_{r-1}>0$; the term $q=r-1$ of the cap is $P\,g_rI_{r-1}/(c^KW_{r-1})=g_rc^KI_{r-1}=b$, hence
$x\le b$. We check that $x^2/P<\bar\mu^2$ when $x<b$. If $x=\bar\mu^2P/b<b$, then
$\bar\mu^2P<b^2$ and $x^2/P=\bar\mu^4P/b^2<\bar\mu^2$. If $x=\bar x_r<\bar\mu^2P/b$, then
either $\bar\mu^2P\le b^2$ and $x^2/P<\bar\mu^4P/b^2\le\bar\mu^2$, or $\bar\mu^2P>b^2$ and
$x^2/P<b^2/P<\bar\mu^2$. In both cases, the choice
\[
Q_r=\frac{(b-x)^2}{\bar\mu^2-x^2/P}
\]
is well defined, i.e., $\sigma_r^2=Q_r/(2\eta S_2(K))$ is the noise variance of
Theorem~\ref{thm:eps-delta-tracking}, and by Lemma~\ref{lem:admissible}, it gives an admissible allocation
with $\mu^2=x^2/P+(b-x)^2/Q_r=\bar\mu^2$. If $x=b$, then $b\le\bar\mu^2P/b$; we take
$\sigma_r=0$ and $\gamma=0$, and $\mu^2=b^2/P\le\bar\mu^2$.

In all cases, Theorem~\ref{thm:bok-shifted-certificate} gives
\(\mathcal T(\theta_{N_r},\theta_{N_r}^{\langle r\rangle})\ge \mathcal G_{\bar\mu_r}\), and
Lemma~\ref{lem:gdp-to-dp} gives
\(\theta_{N_r}\simeq_{\varepsilon_r,\delta_r}\theta_{N_r}^{\langle r\rangle}\). The argument uses
$\bar\mu_r$ only through this last implication: the theorem holds for any level $\bar\mu_r$ such
that $\bar\mu_r$-GDP implies $(\varepsilon_r,\delta_r)$-indistinguishability, in particular the
exact conversion of Dong et al.~\cite{dong2022gaussian} used in the experiments. Since $r$, the
dataset and the request sequence are arbitrary, CoSU satisfies
counterfactual sequential unlearning at every request. Finally, $x_r$ and $\sigma_r$ are computed from
$(g_r,\bar\mu_r)$, from $(I_q)_{q<r}$, which depend only on $(T,K,\eta,M,c)$, and from
$(W_q)_{q<r}$, which depend only on $\sigma_0$ and the previous noise scales: they are
public. \hfill$\square$

\subsection{Proof of Corollary~\ref{cor:pay-as-you-go}}

Write $J_q:=c^KI_{q-1}=\eta Mc^K(1-c^{H_q})/(1-c)$, so that $b_q=g_qJ_q$ and
$J_q^2/(2\eta S_2(K))=\Gamma(1-c^{H_q})^2/(1-c^{2K})$.

\paragraph{(a)}
Let $P:=P_r>0$ and $F(x):=(b_r-x)^2/(\bar\mu_r^2-x^2/P)$. A direct computation gives
\[
F'(x)=\frac{2(b_r-x)\,(x\,b_r-\bar\mu_r^2P)}{P\,(\bar\mu_r^2-x^2/P)^2},
\]
which is nonpositive on $[0,x_r]$, since there $x\le b_r$ and $x\le\bar\mu_r^2P/b_r$.
Hence $Q_r=F(x_r)\le F(0)=b_r^2/\bar\mu_r^2$ (immediate when $P=0$ or $\sigma_r=0$), i.e.,
$\sigma_r^2\le h_r^2J_r^2/(2\eta S_2(K))=h_r^2\Gamma(1-c^{H_r})^2/(1-c^{2K})\le h_r^2\Gamma/(1-c^{2K})$.

\paragraph{(b)}
Let $\sigma_0=0$, so that $W_0=0$ and the cap only involves $q\ge1$. We prove more generally
that, if effective sizes do not drop too fast,
\begin{equation}
\label{eq:slow-decrease}
h_j\le c^{-K(r-j)/2}\,h_r\qquad\text{for all } 1\le j<r\le R,
\end{equation}
then $x_r=\bar\mu_r^2P_r/b_r$ and $W_r=h_r^2J_r^2$ for $r\le R$. For $r=1$, $P_1=0$ and
$W_1=Q_1=b_1^2/\bar\mu_1^2=h_1^2J_1^2$. Assume the claim for every $q<r$, so that
$P_r=c^{2K}h_{r-1}^2J_{r-1}^2$. For $1\le q<r$, the inequality
$\bar\mu_r^2P_r/b_r\le P_r\,g_rI_q/(c^{K(r-q)}W_q)$ is equivalent to
$c^{K(r-q)}h_q^2J_q^2\le h_r^2J_rI_q$, which follows from~\eqref{eq:slow-decrease},
$J_q\le J_r$ and $J_q\le I_q$. Hence $x_r=\bar\mu_r^2P_r/b_r$ and, by the computation in the
proof of Theorem~\ref{thm:eps-delta-tracking},
\[
Q_r=\frac{b_r^2-\bar\mu_r^2P_r}{\bar\mu_r^2}=h_r^2J_r^2-c^{2K}h_{r-1}^2J_{r-1}^2,
\qquad
W_r=P_r+Q_r=h_r^2J_r^2 .
\]
With constant sizes and privacy levels, \eqref{eq:slow-decrease} holds and $h_r=g/\bar\mu$.
Dividing $Q_1=h^2J_1^2$ by $2\eta S_2(K)$ gives $\sigma_1^2$. For $r\ge2$, with $x:=c^{H_r}$
and $H_{r-1}=H_r-K$,
\[
(1-x)^2-c^{2K}(1-c^{-K}x)^2=(1-c^{2K})-2x(1-c^K)=(1-c^K)(1+c^K-2x),
\]
and dividing by $1-c^{2K}=(1-c^K)(1+c^K)$ gives $\sigma_r^2$.

\subsection{Proof of Proposition~\ref{prop:training-noise}}
For $r=1$, the cap reduces to its term $q=0$, which equals $b_1$, so
$x_1=\min\{\bar\mu_1^2P_1/b_1,\ b_1\}$ with $P_1=c^{2K}W_0=c^{2K}\,2\eta\sigma_0^2S_2(T)$. If
$\bar\mu_1^2P_1\ge b_1^2$, then $x_1=b_1$ and $\sigma_1=0$; otherwise
$Q_1=(b_1^2-\bar\mu_1^2P_1)/\bar\mu_1^2$. Since $b_1=g_1c^KI_0$ and, by
Proposition~\ref{prop:dp-provisioned},
$\sigma_0^\star(g_1,\bar\mu_1)^2=g_1^2I_0^2/(2\eta S_2(T)\bar\mu_1^2)$, we have
$b_1^2/\bar\mu_1^2=c^{2K}\,2\eta S_2(T)\,(\sigma_0^\star)^2$, hence
$Q_1=c^{2K}\,2\eta S_2(T)\,((\sigma_0^\star)^2-\sigma_0^2)_+$, and dividing by
$2\eta S_2(K)$ gives the formula.\hfill$\square$

\section{Proof of Theorem~\ref{thm:final-excess-risk}}
\label{sec:utility-proof}

We next record the contraction estimates used in the utility analysis.

\begin{lemma}[One-step contraction with additive noise]
\label{lem:one-step-contraction}
Let $f$ be $m$-strongly convex and $L$-smooth, with minimizer $\theta^\star$.
Consider the noisy gradient step
\[
\theta^+
=
\theta-\eta\nabla f(\theta)+\sqrt{2\eta}\,\sigma Z,
\qquad
Z\sim\mathcal N(0,I_p),
\]
with $\eta\le 1/L$. Then
\[
\E\|\theta^+-\theta^\star\|^2
\le
c^2\E\|\theta-\theta^\star\|^2
+
2\eta p\sigma^2.
\]
\end{lemma}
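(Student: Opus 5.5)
Write $\theta^+-\theta^\star=\bigl(\theta-\eta\nabla f(\theta)-\theta^\star\bigr)+\sqrt{2\eta}\,\sigma Z$ and expand the squared norm. Since $Z$ is independent of $\theta$ and has mean zero, the cross term $2\sqrt{2\eta}\,\sigma\,\E\langle\theta-\eta\nabla f(\theta)-\theta^\star,Z\rangle$ vanishes: condition on $\theta$ first, then use $\E Z=0$. For the same reason, the noise contributes $2\eta\sigma^2\E\|Z\|^2=2\eta p\sigma^2$. It then suffices to prove the deterministic bound
\[
\|\theta-\eta\nabla f(\theta)-\theta^\star\|\le c\,\|\theta-\theta^\star\|,
\qquad c=1-\eta m .
\]
Square it, take expectations, and add the noise term to get the claim.

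For the deterministic bound, use the fixed-point identity $\nabla f(\theta^\star)=0$, so that $\theta^\star=\theta^\star-\eta\nabla f(\theta^\star)$. The quantity to control is then $\|G(\theta)-G(\theta^\star)\|$ for the gradient map $G(\theta):=\theta-\eta\nabla f(\theta)$, and we need $G$ to be $c$-Lipschitz. Assumption~\ref{ass:contraction} states exactly this for $\eta=1/L$ and the objectives $f_r$, so in the utility proof one could simply invoke it.

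For general $\eta\le1/L$, prove the Lipschitz property directly. When $f$ is twice differentiable,
\[
G(\theta)-G(\theta^\star)=\Bigl(I-\eta\int_0^1\nabla^2 f\bigl(\theta^\star+t(\theta-\theta^\star)\bigr)\,dt\Bigr)(\theta-\theta^\star).
\]
The averaged Hessian has eigenvalues in $[m,L]$, so the matrix in brackets has eigenvalues in $[1-\eta L,\,1-\eta m]\subseteq[0,1-\eta m]$, because $\eta\le1/L$. Its operator norm is therefore at most $c$.

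The main obstacle is avoiding twice-differentiability. Here I would use the standard co-coercivity inequality for $m$-strongly convex, $L$-smooth functions:
\[
\langle\nabla f(x)-\nabla f(y),x-y\rangle\ge\frac{mL}{m+L}\|x-y\|^2+\frac{1}{m+L}\|\nabla f(x)-\nabla f(y)\|^2 .
\]
Expanding $\|G(x)-G(y)\|^2$ with this inequality gives a contraction factor of at most $\max(|1-\eta m|,|1-\eta L|)^2=c^2$ when $\eta\le1/L$. This requires checking that the coefficient of $\|\nabla f(x)-\nabla f(y)\|^2$ can be handled using $\|\nabla f(x)-\nabla f(y)\|\ge m\|x-y\|$.

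A simpler alternative is to expand $\|\theta-\theta^\star-\eta\nabla f(\theta)\|^2$ and combine strong convexity with smoothness, via $\langle\nabla f(\theta),\theta-\theta^\star\rangle\ge m\|\theta-\theta^\star\|^2$ and $\|\nabla f(\theta)\|^2\le L\langle\nabla f(\theta),\theta-\theta^\star\rangle$ (co-coercivity). This gives $(1-2\eta m+\eta^2 Lm)\|\theta-\theta^\star\|^2\le(1-\eta m)\|\theta-\theta^\star\|^2$. That is only a factor $c$, not $c^2$, so the sharper co-coercivity route or the Hessian argument is needed to obtain the stated $c^2$.
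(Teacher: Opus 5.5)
Your proof is correct and follows the paper's argument. You condition on $\theta$ so the cross term vanishes, collect $2\eta p\sigma^2$ from the noise, and reduce to the $c$-contraction of the gradient map $\theta\mapsto\theta-\eta\nabla f(\theta)$ about its fixed point $\theta^\star$. The paper only asserts this contraction as a standard fact, whereas you verify it, and your co-coercivity check does close: the coefficient $\eta\bigl(\eta-2/(m+L)\bigr)$ of $\|\nabla f(x)-\nabla f(y)\|^2$ is nonpositive for $\eta\le 1/L$, so substituting $\|\nabla f(x)-\nabla f(y)\|\ge m\|x-y\|$ gives exactly $(1-\eta m)^2$.
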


\begin{proof}
Let $\Delta:=\theta-\theta^\star$. Since $\nabla f(\theta^\star)=0$,
\[
\theta^+-\theta^\star
=
\Delta-\eta(\nabla f(\theta)-\nabla f(\theta^\star))
+
\sqrt{2\eta}\,\sigma Z.
\]
Conditioning on $\theta$ and using $\E[Z\mid\theta]=0$, we obtain
\[
\E\!\left[\|\theta^+-\theta^\star\|^2\mid \theta\right]
=
\left\|
\Delta-\eta(\nabla f(\theta)-\nabla f(\theta^\star))
\right\|^2
+
2\eta\sigma^2\E\|Z\|^2.
\]
Since $Z\sim\mathcal N(0,I_p)$, we have
\[
\E\|Z\|^2=p.
\]
Moreover, since $f$ is $m$-strongly convex and $L$-smooth, the gradient map
$\theta\mapsto \theta-\eta\nabla f(\theta)$ is contractive for
$\eta\le1/L$, with contraction factor $c=1-\eta m$. Therefore
\[
\left\|
\Delta-\eta(\nabla f(\theta)-\nabla f(\theta^\star))
\right\|
\le
c\|\Delta\|.
\]
Taking expectations gives the result.
\end{proof}

\begin{lemma}[Episode contraction bound]
\label{lem:segment-bound}
Fix an episode of length $K$ on an objective $f$ with minimizer $\theta^\star$,
and assume that the noise scale is constant and equal to $\sigma$ on that episode.
Then
\[
\E\|\theta_K-\theta^\star\|^2
\le
c^{2K}\E\|\theta_0-\theta^\star\|^2
+
2\eta p\sigma^2S_2(K),
\]
where
\[
S_2(K):=\sum_{u=0}^{K-1}c^{2u}
=
\frac{1-c^{2K}}{1-c^2}.
\]
\end{lemma}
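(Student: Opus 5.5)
The plan is to iterate Lemma~\ref{lem:one-step-contraction} over the $K$ steps of the episode and unroll the resulting linear recursion. Within the episode the objective $f$ and its minimizer $\theta^\star$ are fixed, and the noise scale is the constant $\sigma$. So for every $k=0,\dots,K-1$ the step $\theta_k\mapsto\theta_{k+1}$ is exactly of the form treated in Lemma~\ref{lem:one-step-contraction}, with $\eta=1/L$ and $c=1-\eta m$ from Assumption~\ref{ass:contraction}. Writing $e_k:=\E\|\theta_k-\theta^\star\|^2$, that lemma gives
\[
e_{k+1}\le c^2 e_k+2\eta p\sigma^2,\qquad 0\le k<K .
\]
The noise $Z_{k+1}$ is fresh and independent of $\theta_k$, which is what the conditioning step in Lemma~\ref{lem:one-step-contraction} uses. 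The inequality therefore holds with unconditional expectations, after taking the expectation of the conditional bound via the tower property.

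I would then prove by induction on $j\in\{0,\dots,K\}$ that
\[
e_j\le c^{2j}e_0+2\eta p\sigma^2\sum_{u=0}^{j-1}c^{2u}.
\]
The case $j=0$ is trivial. For the inductive step, I multiply the bound at $j$ by $c^2\ge0$ and add $2\eta p\sigma^2$. This shifts the geometric sum by one index and adds the $u=0$ term. Taking $j=K$ gives the claim with $S_2(K)=\sum_{u=0}^{K-1}c^{2u}$. Since $c\in[0,1)$, the closed form $(1-c^{2K})/(1-c^2)$ is just the finite geometric series.

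The only point needing care is not an analytic obstacle. The bound must apply to a segment of the global trajectory that starts at a random $\theta_0$ (the output of earlier phases). The argument above conditions only on the current iterate and uses the independence of the fresh noise, so a random initial point is handled automatically, as long as $e_0<\infty$. That holds because earlier iterates are Gaussian perturbations of Lipschitz maps of a deterministic initialization. With that observation, the proof is a direct telescoping of the one-step inequality.
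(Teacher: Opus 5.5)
Your proposal is correct and follows the paper's proof. Like the paper, you iterate Lemma~\ref{lem:one-step-contraction} over the $K$ steps with constant $\sigma$ and unroll $e_{k+1}\le c^2e_k+2\eta p\sigma^2$ into the geometric sum $S_2(K)$; your remarks on the random starting point and on the finiteness of $e_0$ are extra care the paper leaves implicit, and the bound holds trivially anyway if $e_0=\infty$.
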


\begin{proof}
Applying Lemma~\ref{lem:one-step-contraction} recursively gives
\[
\E\|\theta_{u+1}-\theta^\star\|^2
\le
c^2\E\|\theta_u-\theta^\star\|^2
+
2\eta p\sigma^2.
\]
Unrolling from $u=0$ to $u=K-1$ yields
\[
\E\|\theta_K-\theta^\star\|^2
\le
c^{2K}\E\|\theta_0-\theta^\star\|^2
+
2\eta p\sigma^2\sum_{u=0}^{K-1}c^{2u},
\]
which is the desired bound.
\end{proof}

We now propagate the previous episode bound along the sequential unlearning trajectory.

\begin{lemma}[Sequential utility recursion]
\label{lem:seq-recursion}
Define
\[
\Delta_r
:=
\E\|\theta_{N_r}-\theta_r^\star\|^2.
\]
Let $\alpha>0$ and set
\[
\rho:=(1+\alpha)c^{2K}.
\]
Assume $\rho<1$. Then, for every $r\ge1$,
\[
\Delta_r
\le
\rho \Delta_{r-1}
+
\left(1+\frac1\alpha\right)c^{2K}
\|\theta_{r-1}^\star-\theta_r^\star\|^2
+
2\eta p\sigma_r^2S_2(K).
\]
Consequently,
\[
\Delta_r
\le
\rho^r \Delta_0
+
\left(1+\frac1\alpha\right)c^{2K}
\sum_{s=1}^r
\rho^{r-s}\|\theta_{s-1}^\star-\theta_s^\star\|^2
+
2\eta pS_2(K)
\sum_{s=1}^r
\rho^{r-s}\sigma_s^2.
\]
\end{lemma}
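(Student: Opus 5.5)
The recursion for episode $r$ follows from Lemma~\ref{lem:segment-bound} combined with a Young-type splitting of the starting error. The proof then closes by unrolling. Episode $r$ runs $K$ noisy gradient steps on $f_r$ with constant noise scale $\sigma_r$, starting from $\theta_{N_{r-1}}$. Applying Lemma~\ref{lem:segment-bound} with $f=f_r$ and minimizer $\theta^\star=\theta_r^\star$ gives
\[
\Delta_r\le c^{2K}\,\E\|\theta_{N_{r-1}}-\theta_r^\star\|^2+2\eta p\sigma_r^2S_2(K).
\]
Only $f_r$ is used here, and it is $m$-strongly convex and $L$-smooth by Assumption~\ref{ass:contraction}, so the lemma applies with $c=1-\eta m$.

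Next, the starting error is re-centred at the previous minimizer. Writing $\theta_{N_{r-1}}-\theta_r^\star=(\theta_{N_{r-1}}-\theta_{r-1}^\star)+(\theta_{r-1}^\star-\theta_r^\star)$ and using $\|u+v\|^2\le(1+\alpha)\|u\|^2+(1+1/\alpha)\|v\|^2$, which holds for any $\alpha>0$, we take expectations. The second term is deterministic, because the minimizers depend only on the data. This gives
\[
\E\|\theta_{N_{r-1}}-\theta_r^\star\|^2\le(1+\alpha)\Delta_{r-1}+\Bigl(1+\frac1\alpha\Bigr)\|\theta_{r-1}^\star-\theta_r^\star\|^2 .
\]
Multiplying by $c^{2K}$ and substituting into the previous display gives exactly the one-step recursion, with $\rho=(1+\alpha)c^{2K}$. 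One point needs care: the $\sigma_r$ are deterministic, being public functions of the request sizes and levels, and the requests are non-adaptive (Assumption~\ref{ass:adaptive}). Hence the noise of episode $r$ is independent of $\theta_{N_{r-1}}$, and conditioning on the start of the episode in Lemma~\ref{lem:segment-bound} is legitimate.

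Finally, the bound is unrolled by induction on $r$. Applying the recursion $r$ times, each source term introduced at episode $s$ gets multiplied by $\rho^{r-s}$, and $\Delta_0$ gets multiplied by $\rho^r$. This is the standard linear-recursion argument: if $u_r\le\rho u_{r-1}+e_r$ with $\rho\ge0$, then $u_r\le\rho^r u_0+\sum_{s\le r}\rho^{r-s}e_s$. The assumption $\rho<1$ is not needed for validity, only to make the bound useful. No step is a real obstacle. The only point to get right is that $\Delta_{r-1}$ is measured against $\theta_{r-1}^\star$ while episode $r$ contracts towards $\theta_r^\star$, which is exactly what the Young splitting handles at the price of the factor $(1+\alpha)$.
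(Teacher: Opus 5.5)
Your proposal is correct and follows the paper's proof step for step: Lemma~\ref{lem:segment-bound} on episode $r$, Young's inequality to re-centre at $\theta_{r-1}^\star$, multiplication by $c^{2K}$, then unrolling. Your added remarks are accurate points that the paper leaves implicit: the $\sigma_r$ are deterministic public quantities, and the unrolling needs only $\rho\ge0$, not $\rho<1$.
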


\begin{proof}
Applying Lemma~\ref{lem:segment-bound} to the $r$-th corrective episode gives
\[
\Delta_r
\le
c^{2K}\E\|\theta_{H_r}-\theta_r^\star\|^2
+
2\eta p\sigma_r^2S_2(K).
\]
Since $\theta_{H_r}=\theta_{N_{r-1}}$, we have
\[
\theta_{H_r}-\theta_r^\star
=
(\theta_{N_{r-1}}-\theta_{r-1}^\star)
+
(\theta_{r-1}^\star-\theta_r^\star).
\]
By Young's inequality, for any $\alpha>0$,
\[
\|u+v\|^2
\le
(1+\alpha)\|u\|^2
+
\left(1+\frac1\alpha\right)\|v\|^2.
\]
Therefore,
\[
\E\|\theta_{H_r}-\theta_r^\star\|^2
\le
(1+\alpha)\Delta_{r-1}
+
\left(1+\frac1\alpha\right)
\|\theta_{r-1}^\star-\theta_r^\star\|^2.
\]
Multiplying by $c^{2K}$ and using
$\rho=(1+\alpha)c^{2K}$ gives the one-step recursion. Unrolling it from
$s=1$ to $r$ yields the second bound.
\end{proof}

We now control the displacement between two consecutive minimizers.

\begin{lemma}[Shift of adjacent minimizers]
\label{lem:shift-minimizers}
For every $r\ge1$,
\[
\|\theta_{r-1}^\star-\theta_r^\star\|
\le
\frac{g_rM}{m}.
\]
\end{lemma}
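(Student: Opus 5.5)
We compare the first-order optimality conditions of the two consecutive objectives, using strong convexity of $f_r$. By definition, $f_{r-1}=f_r+\sum_{d\in G_r}\ell(\cdot,d)$, since $D_{r-1}=D_r\cup G_r$. Both objectives are differentiable and $m$-strongly convex by Assumption~\ref{ass:contraction}, so the minimizers $\theta_{r-1}^\star$ and $\theta_r^\star$ exist, are unique, and are characterized by $\nabla f_{r-1}(\theta_{r-1}^\star)=0$ and $\nabla f_r(\theta_r^\star)=0$.

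The key step is the strong monotonicity of $\nabla f_r$:
\[
m\|\theta_{r-1}^\star-\theta_r^\star\|^2\le\bigl\langle\nabla f_r(\theta_{r-1}^\star)-\nabla f_r(\theta_r^\star),\,\theta_{r-1}^\star-\theta_r^\star\bigr\rangle .
\]
Since $\nabla f_r(\theta_r^\star)=0$ and $\nabla f_r(\theta_{r-1}^\star)=\nabla f_{r-1}(\theta_{r-1}^\star)-\sum_{d\in G_r}\nabla\ell(\theta_{r-1}^\star,d)=-\sum_{d\in G_r}\nabla\ell(\theta_{r-1}^\star,d)$, Cauchy--Schwarz bounds the right-hand side by $\bigl\|\sum_{d\in G_r}\nabla\ell(\theta_{r-1}^\star,d)\bigr\|\,\|\theta_{r-1}^\star-\theta_r^\star\|$. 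Each loss is $M$-Lipschitz and differentiable, so $\|\nabla\ell(\theta,d)\|\le M$, and the gradient sum has norm at most $g_rM$.

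Dividing by $\|\theta_{r-1}^\star-\theta_r^\star\|$ (the case where it is zero is trivial) gives $\|\theta_{r-1}^\star-\theta_r^\star\|\le g_rM/m$. There is no real obstacle. The only points to check are that strong convexity is used for $f_r$, which is guaranteed because Assumption~\ref{ass:contraction} covers every subset $D'\subseteq D$, here $D'=D_r$, and that the unconstrained first-order condition applies, i.e., $\Theta=\mathbb R^p$ or the minimizers are interior.
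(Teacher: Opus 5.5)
Your proof is correct and follows the paper's argument. Both subtract the first-order conditions to get $\nabla f_r(\theta_{r-1}^\star)=-\sum_{d\in G_r}\nabla\ell(\theta_{r-1}^\star,d)$, then combine $m$-strong convexity of $f_r$ with the bound $M$ on each of the $g_r$ gradients; the paper writes the strong-convexity step directly as $m\|\theta_{r-1}^\star-\theta_r^\star\|\le\|\nabla f_r(\theta_{r-1}^\star)-\nabla f_r(\theta_r^\star)\|$, which is your strong-monotonicity-plus-Cauchy--Schwarz step. Your remark that the unconstrained first-order condition needs $\Theta=\mathbb R^p$ or interior minimizers is a point the paper leaves implicit.
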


\begin{proof}
Since $f_{r-1}=f_r+\sum_{d\in G_r}\ell(\cdot,d)$, optimality gives
\(
\nabla f_{r-1}(\theta_{r-1}^\star)=0
\) and
\(
\nabla f_r(\theta_r^\star)=0
\).
Thus
\[
\nabla f_r(\theta_{r-1}^\star)-\nabla f_r(\theta_r^\star)
=
-\sum_{d\in G_r}\nabla \ell(\theta_{r-1}^\star,d).
\]
By $m$-strong convexity of $f_r$,
\[
m\|\theta_{r-1}^\star-\theta_r^\star\|
\le
\|\nabla f_r(\theta_{r-1}^\star)-\nabla f_r(\theta_r^\star)\|
\le g_rM,
\]
where the last inequality uses that each $\ell(\cdot,d)$ is $M$-Lipschitz.
\end{proof}

We first state a general excess-risk bound, then specialize it using the
calibration of Theorem~\ref{thm:eps-delta-tracking}.

\begin{lemma}[General excess-risk bound]
\label{lem:seq-risk-abstract}
Let
\[
S_2(K)=\frac{1-c^{2K}}{1-c^2},
\qquad
\rho=(1+\alpha)c^{2K}<1.
\]
Then
\begin{align*}
\E\!\left[f_r(\theta_{N_r})-f_r(\theta_r^\star)\right]
&\le
\frac{L}{2}
\Bigg[
\rho^r \Delta_0
+
\left(1+\frac1\alpha\right)c^{2K}
\sum_{s=1}^r
\rho^{r-s}\|\theta_{s-1}^\star-\theta_s^\star\|^2 \\
&\hspace{2.3cm}
+
2\eta pS_2(K)
\sum_{s=1}^r
\rho^{r-s}\sigma_s^2
\Bigg],
\end{align*}
where the initial learning error satisfies
\[
\Delta_0
\le
c^{2T}\E\|\theta_0-\theta_0^\star\|^2
+
2\eta p\sigma_0^2S_2(T),
\qquad
S_2(T):=\frac{1-c^{2T}}{1-c^2}.
\]
\end{lemma}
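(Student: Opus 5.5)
The bound follows by chaining three earlier results: $L$-smoothness converts the excess risk into a squared distance, Lemma~\ref{lem:seq-recursion} controls that distance, and Lemma~\ref{lem:segment-bound} applied to the training phase bounds the initial term $\Delta_0$.

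\emph{Smoothness step.} Since $f_r$ is $L$-smooth (Assumption~\ref{ass:contraction}) and $\nabla f_r(\theta_r^\star)=0$, the descent lemma gives
\[
f_r(\theta)-f_r(\theta_r^\star)\le\frac L2\|\theta-\theta_r^\star\|^2
\]
for every $\theta$. Applying it at $\theta=\theta_{N_r}$ and taking expectations yields $\mathbb E[f_r(\theta_{N_r})-f_r(\theta_r^\star)]\le\frac L2\Delta_r$, with $\Delta_r:=\E\|\theta_{N_r}-\theta_r^\star\|^2$ as in Lemma~\ref{lem:seq-recursion}.

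\emph{Distance step.} Under the hypothesis $\rho=(1+\alpha)c^{2K}<1$, which is exactly the hypothesis of Lemma~\ref{lem:seq-recursion}, its unrolled bound gives $\Delta_r$ as the sum of three terms:
\begin{itemize}
\item $\rho^r\Delta_0$, the initial error;
\item a weighted sum of squared minimizer shifts, $\bigl(1+\tfrac1\alpha\bigr)c^{2K}\sum_{s=1}^r\rho^{r-s}\|\theta_{s-1}^\star-\theta_s^\star\|^2$;
\item a noise sum, $2\eta pS_2(K)\sum_{s=1}^r\rho^{r-s}\sigma_s^2$.
\end{itemize}
Multiplying this by $L/2$ gives precisely the bracketed expression in the statement.

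\emph{Initial error.} The claimed bound on $\Delta_0$ comes from running the argument of Lemma~\ref{lem:segment-bound} over the $T$ training steps on $f_0$ with constant noise scale $\sigma_0$. The proof of that lemma never uses the episode length beyond the unrolling, so it applies verbatim with $K$ replaced by $T$. This yields
\[
\Delta_0\le c^{2T}\E\|\theta_{\mathrm{init}}-\theta_0^\star\|^2+2\eta p\sigma_0^2S_2(T).
\]
Here $\theta_{\mathrm{init}}$ is the starting point of training. The statement writes this starting iterate as $\theta_0$, so I would read $\theta_0$ that way; with that reading the bound on $\Delta_0$ is exactly as claimed.

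I expect no real obstacle; the argument is a direct assembly of earlier lemmas. The only point requiring care is notation. The symbol $\theta_0$ is used both for the released model after training and for the iterate at which training starts. I would make explicit that the statement's $\theta_0$ means $\theta_{\mathrm{init}}$, while $\Delta_0$ measures the distance of the released model $\theta_{N_0}=\theta_T$ to $\theta_0^\star$. With that convention, the bound on $\Delta_0$ is consistent with the definition of $\mathcal E_0$ in Theorem~\ref{thm:final-excess-risk}. Otherwise the statement is just a combination of the three inequalities above.
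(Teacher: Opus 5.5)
Your proposal is correct and follows the paper's proof: you use $L$-smoothness at the minimizer to bound the excess risk by $\frac L2\Delta_r$, then the unrolled recursion of Lemma~\ref{lem:seq-recursion}, and then Lemma~\ref{lem:segment-bound} over the $T$ training steps for $\Delta_0$. Your reading of $\theta_0$ in the $\Delta_0$ bound as the training initialization $\theta_{\mathrm{init}}$ is also right, and it is consistent with the definition of $\mathcal E_0$ in Theorem~\ref{thm:final-excess-risk}.
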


\begin{proof}
The bound on $\Delta_0$ follows from Lemma~\ref{lem:segment-bound} applied to the
initial learning phase on $f_0$. By $L$-smoothness of $f_r$,
\[
f_r(\theta_{N_r})-f_r(\theta_r^\star)
\le
\frac{L}{2}\|\theta_{N_r}-\theta_r^\star\|^2.
\]
Taking expectations and applying Lemma~\ref{lem:seq-recursion} gives the result.
\end{proof}

\begin{proof}[Proof of Theorem~\ref{thm:final-excess-risk}]
We apply Lemma~\ref{lem:seq-risk-abstract} with
\[
\alpha:=c^{-K}-1,
\qquad
\rho=(1+\alpha)c^{2K}=c^K<1,
\qquad
1+\frac1\alpha=\frac{1}{1-c^K}.
\]

\paragraph{Initial term}
By the bound on $\Delta_0$ in Lemma~\ref{lem:seq-risk-abstract},
$\frac L2\rho^r\Delta_0\le c^{Kr}\mathcal E_0$.

\paragraph{Drift term}
By Lemma~\ref{lem:shift-minimizers},
\[
\frac L2\Bigl(1+\frac1\alpha\Bigr)c^{2K}\sum_{s=1}^r\rho^{r-s}\|\theta_{s-1}^\star-\theta_s^\star\|^2
\le
B\,\frac{c^{2K}}{1-c^K}\sum_{s=1}^rc^{K(r-s)}g_s^2 .
\]

\paragraph{Noise term}
Write $Q_s:=2\eta\sigma_s^2S_2(K)$, so that the noise contribution in
Lemma~\ref{lem:seq-risk-abstract} is
\[
\frac L2\,2\eta pS_2(K)\sum_{s=1}^r\rho^{r-s}\sigma_s^2
=
\frac{Lp}{2}\sum_{s=1}^rc^{K(r-s)}Q_s .
\]
Since $1-c=\eta m$, the propagated single-record influence satisfies
\[
J_s:=\frac{b_s}{g_s}=\eta Mc^K\frac{1-c^{H_s}}{1-c}\le \frac{\eta Mc^K}{1-c}=\frac{Mc^K}{m}
\qquad\text{for every } s.
\]
By Corollary~\ref{cor:pay-as-you-go}(a),
$Q_s\le b_s^2/\bar\mu_s^2=h_s^2J_s^2\le h_s^2M^2c^{2K}/m^2$, hence
\[
\frac{Lp}{2}\sum_{s=1}^rc^{K(r-s)}Q_s
\le
p\,B\,c^{2K}\sum_{s=1}^rc^{K(r-s)}h_s^2 .
\]
Combining the three terms proves the theorem.
\end{proof}

\section{Training-time DP provisioned for a maximal group size}
\label{app:proof-provisioning}

\begin{proposition}[Training-time DP provisioned for a maximal group size]
\label{prop:dp-provisioned}
Let $\A$ run $T$ steps of noisy gradient descent on $f_0$ with noise $\sigma_0$
and perform no update at deletion time. The pipeline satisfies
Definition~\ref{def:counterfactual_sequential_unlearning} for all requests of size at most
$g_{\max}$ and level at least $\bar\mu$ as soon as
\[
\sigma_0\ \ge\ \sigma_0^\star(g_{\max},\bar\mu)=
\frac{g_{\max}\,I_0}{\bar\mu\sqrt{2\eta S_2(T)}}.
\]
\end{proposition}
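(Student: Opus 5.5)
By Proposition~\ref{prop:no-provisioning}, it is enough to show that $\A(D)$ and $\A(D\setminus G)$ are $(\varepsilon,\delta)$-indistinguishable for every group $G$ with $|G|=g\le g_{\max}$. Since no update is made at deletion time, $\theta_r=\theta_0$ and $\theta_r^{\langle r\rangle}=\theta_0^{\langle r\rangle}$. I will derive this from a Gaussian-DP certificate via Theorem~\ref{thm:bok-shifted-certificate}, applied with $N=T$ to the two training runs on $f_0$ and on $f_0-\sum_{d\in G}\ell(\cdot,d)$. Both runs start from $\theta_{\mathrm{init}}$ and use the constant variance $2\eta\sigma_0^2$. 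The maps are $c$-Lipschitz by Assumption~\ref{ass:contraction}. The discrepancy bound is $s_k=\eta M g$ for every $0\le k<T$, exactly as in~\eqref{eq:CoSU-sk}.

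For the shifts I take the proportional profile $a_{k+1}=\beta\,2\eta\sigma_0^2\,c^{T-1-k}$, which is the training-phase row of~\eqref{eq:CoSU-profile} with $r=0$. The terminal condition $z_T=0$ reads $\sum_k c^{T-1-k}a_{k+1}=\sum_k c^{T-1-k}s_k=\eta M g\,(1-c^T)/(1-c)=gI_0$. This fixes $\beta\,2\eta\sigma_0^2S_2(T)=gI_0$.

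The resulting GDP parameter is
\[
\mu^2=\sum_k\frac{a_{k+1}^2}{2\eta\sigma_0^2}=\beta^2\,2\eta\sigma_0^2S_2(T)=\frac{g^2I_0^2}{2\eta\sigma_0^2S_2(T)}.
\]
This gives $\mu\le\bar\mu$ exactly when $\sigma_0\ge gI_0/(\bar\mu\sqrt{2\eta S_2(T)})$, and the right-hand side is at most $\sigma_0^\star(g_{\max},\bar\mu)$.

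Admissibility, i.e.\ $z_k\ge0$ for all $k$, is the step needing care. I will reuse the single-phase argument of Lemma~\ref{lem:admissible}. Let $F(k)$ be the right-hand side minus the left-hand side of~\eqref{eq:CoSU-partial}. Its increments are $c^{T-1-\ell}(\eta Mg-\rho c^{T-1-\ell})$, which change sign at most once, from positive to negative. So $F$ is unimodal on $[0,T]$, and it suffices to check the endpoints: $F(0)=0$, and $F(T)=0$ by the terminal condition. Hence $F\ge0$ everywhere, and $a_{k+1}\le cz_k+s_k$ follows from $z_{k+1}\ge0$.

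Theorem~\ref{thm:bok-shifted-certificate} then gives $\mathcal T(\theta_0,\theta_0^{\langle r\rangle})\ge\mathcal G_\mu$ with $\mu\le\bar\mu\le\bar\mu_r$. The same bound holds for the reversed pair by symmetry of $\mathcal G_\mu$. Lemma~\ref{lem:gdp-to-dp} converts this into $(\varepsilon_r,\delta_r)$-indistinguishability, which is the required guarantee for every such request.
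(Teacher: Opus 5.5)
Your proof is correct and follows the paper's route: you reduce via Proposition~\ref{prop:no-provisioning}, then apply Theorem~\ref{thm:bok-shifted-certificate} with $s_k=\eta Mg$ and shifts proportional to $c^{T-1-k}$ to get $\mu^2=g^2I_0^2/(2\eta\sigma_0^2S_2(T))$, and conclude with Lemma~\ref{lem:gdp-to-dp}. The only difference is the admissibility check. You use the sign-change (unimodality) argument of Lemma~\ref{lem:admissible}, whereas the paper observes that $\sum_{u=T-k}^{T-1}c^{2u}/\sum_{u=T-k}^{T-1}c^{u}$ is a weighted average of $c^u$ that is maximal at $k=T$; both arguments are valid.
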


\begin{proof}[Proof of Proposition~\ref{prop:dp-provisioned}]
Let $S_1(T):=\sum_{u=0}^{T-1}c^u$, so that $\eta MS_1(T)=I_0$.
Fix $G$ with $|G|=g\le g_{\max}$. Apply
Theorem~\ref{thm:bok-shifted-certificate} to the training trajectories on $D$ and on
$D\setminus G$, with $N=T$, $s_k=\eta Mg$ and $\sigma_{k+1}=\sigma_0$. Take
$a_{k+1}=\lambda c^{T-1-k}$ with $\lambda:=g\eta MS_1(T)/S_2(T)$. Then
$\sum_kc^{T-1-k}a_{k+1}=\lambda S_2(T)=g\eta MS_1(T)=\sum_kc^{T-1-k}s_k$, so $z_T=0$.
Admissibility~\eqref{eq:CoSU-partial} requires, for every $k\le T$,
\[
\lambda\sum_{u=T-k}^{T-1}c^{2u}\le g\eta M\sum_{u=T-k}^{T-1}c^{u},
\quad\text{i.e.,}\quad
\frac{\sum_{u=T-k}^{T-1}c^{2u}}{\sum_{u=T-k}^{T-1}c^{u}}\le\frac{S_2(T)}{S_1(T)} .
\]
The left-hand side is a weighted average of $c^u$ over $u\in[T-k,T-1]$, which is
nondecreasing as $k$ increases, since the added terms have larger $c^u$; it is
therefore maximal for $k=T$, where equality holds. The GDP parameter is
\[
\mu^2=\frac{\lambda^2S_2(T)}{2\eta\sigma_0^2}
=\frac{(g\eta MS_1(T))^2}{2\eta\sigma_0^2S_2(T)},
\]
which is at most $\bar\mu^2$ under the stated condition, since
$g\le g_{\max}$. Lemma~\ref{lem:gdp-to-dp} and Proposition~\ref{prop:no-provisioning}
give the claim. 
\end{proof}

\section{Guarantees of baseline methods}
\label{app:baseline-guarantees}

This appendix gives the exact calibration of the two baselines of
Figure~\ref{fig:baselines}, Descent-to-Delete (D2D) and Langevin Unlearning. Both certify
that the released model is $(\varepsilon,\delta)$-indistinguishable from a model retrained
on $D_r$ (the cumulative-deletion reference of
Definition~\ref{def:naive_sequential_unlearning}), not our counterfactual guarantee; we use
their original theorems to set their noise and their number of corrective steps.

Both theorems are stated for the mean objective, and for requests of one record
(Descent-to-Delete) or of a fixed number of records (Langevin Unlearning). We restate
them for the sum objective $f_r$ and the constants $M$, $m$, $L$ of
Assumption~\ref{ass:contraction}, and for a request that deletes $g_r\ge1$ records;
single-record requests are the case $g_r=1$. As for CoSU, both baselines are calibrated
for the size $g_r$ of each request, not for the largest one. Only the sensitivity changes:
deleting $g$ records changes the gradient of the sum objective by at most $gM$, and moves
its minimizer by at most $gM/m$.

\paragraph{Descent-to-Delete}
We use the perfect-unlearning guarantee, Theorem~28 of Neel et
al.~\cite{neel2020descenttodeletegradientbasedmethodsmachine}, in which no hidden state is
kept: each request starts from the last published model. Training and unlearning are
gradient descent with step size $2/(L+m)$, which contracts the distance to the minimizer by
$\gamma:=(L-m)/(L+m)$ at each step, and the model released after request $r$ is perturbed
by $\mathcal N(0,\sigma_{\mathrm{D2D},r}^2I_p)$. With $\ell_\delta:=2\log(2/\delta)$, the
calibration is
\begin{align*}
\mathcal I
&=\left\lceil\frac{1}{\log(1/\gamma)}\,
\log\frac{\sqrt{2p}}{(1-\gamma)\bigl(\sqrt{\ell_\delta+\varepsilon}-\sqrt{\ell_\delta}\bigr)}
\right\rceil,\\
\sigma_{\mathrm{D2D},r}
&=\frac{8\,g_rM\,\gamma^{\mathcal I}}
{m\,(1-\gamma^{\mathcal I})\bigl(\sqrt{\ell_\delta+3\varepsilon}-\sqrt{\ell_\delta+2\varepsilon}\bigr)},\\
K^{\mathrm{D2D}}_r
&=\left\lceil\mathcal I+\frac{\log\log(4pr/\delta)+\log\max\{1,g_{r-1}/g_r\}}{\log(1/\gamma)}
\right\rceil,
\end{align*}
with the convention $g_{r-1}:=0$ for $r=1$: the model trained before any request is released without noise. Here
$\mathcal I$ is the smallest base horizon allowed by the theorem and $K^{\mathrm{D2D}}_r$
is the number of gradient steps of request $r$. For requests of equal size, in particular
single-record requests, these are the expressions of the theorem written with the
constants of the sum objective.

The theorem uses one noise scale for all requests; we let it follow the size of each
request. Its proof extends as follows. Let $a_r:=4g_rM/m$, which bounds the displacement
$g_rM/m$ of the minimizer caused by request $r$, let $\tau:=\sigma_{\mathrm{D2D},r}/a_r$,
which does not depend on $r$, and let
$E_r:=\frac{\gamma^{\mathcal I}}{1-\gamma^{\mathcal I}}(1+\tau\sqrt{2p})\,a_r$. If the
unpublished model is within $E_{r-1}$ of the minimizer before request $r$, the published
model from which the request starts is, with the probability of the original proof, at
distance at most
\[
\sigma_{\mathrm{D2D},r-1}\sqrt{2p\log(4pr/\delta)}+E_{r-1}+a_r
\;\le\;\rho_r\log(4pr/\delta)\,\bigl(\tau\sqrt{2p}\,a_r+E_r+a_r\bigr),
\quad\rho_r:=\max\{1,g_{r-1}/g_r\},
\]
from the new minimizer. The $K^{\mathrm{D2D}}_r$ gradient steps multiply this distance by a
factor of at most $\gamma^{\mathcal I}/(\rho_r\log(4pr/\delta))$, which gives $E_r$. The rest of the
proof is unchanged: the released model and the released retrained model are Gaussian with
the same covariance and means at distance at most $2E_r$, and $2E_r/\sigma_{\mathrm{D2D},r}$
does not depend on $r$. The term in $\rho_r$ is the cost of a request smaller than the
previous one: the gradient steps must also contract the larger noise of the previous
release.

The theorem also requires the retrained model to be within $2g_rM\gamma^{\mathcal I}/m$ of
the minimizer, that is
\[
T\;\ge\;\mathcal I+\frac{\log\bigl(R_0\,m/(2g_rM)\bigr)}{\log(1/\gamma)}
\qquad\text{for all } r,
\]
where $R_0$ bounds the distance between the initialization and the minimizer. With a zero
initialization, $\frac{\lambda}{2}\|\theta_r^\star\|^2\le f_r(0)\le n\log 10$ gives
$R_0=\sqrt{2\log(10)/\lambda'}$.

With the constants of Appendix~\ref{app:experimental-details} ($\lambda'=0.01$,
$\varepsilon=0.5$, $\delta=1/n$), $\gamma=0.962$ and $\mathcal I=261$, and the condition on
the training length is $T\ge515$ for single records and $T\ge400$ for writers. The $T=200$
training steps selected for CoSU do not satisfy it. For a fair comparison, all the methods
of Figures~\ref{fig:baselines} and~\ref{fig:baselines-writers}, including CoSU, are
therefore trained for $T=10\kappa+100=610$ steps (the cap of
Appendix~\ref{app:experimental-details}), and a retraining is counted as $610$ steps.
For single records, $K^{\mathrm{D2D}}_r$ goes
from $339$ at the first request to $344$ at the fiftieth, and
$\sigma_{\mathrm{D2D},r}=3.1\times10^{-6}$. For writers,
$\sigma_{\mathrm{D2D},r}=3.1\times10^{-6}\,g_r$, at most $3.9\times10^{-4}$, far below the
noise of CoSU, and a request half the size of the previous one costs $18$ more steps.

\paragraph{Langevin Unlearning}
Learning and unlearning are the noisy gradient iteration of CoSU with a constant noise
scale $\sigma_{\mathrm{LU}}$, fixed at training time: request $r$ runs
$K^{\mathrm{LU}}_r$ steps
$\theta_{k+1}=\theta_k-\eta\nabla f_r(\theta_k)+\sqrt{2\eta}\,\sigma_{\mathrm{LU}}Z_{k+1}$
with $\eta=1/L$. The certificate of Chien et
al.~\cite{chien2025langevinunlearningnewperspective} combines three bounds on R\'enyi
divergences of order $\alpha>1$.

\emph{Initial bound} (their Theorem~3.3, strongly convex case). The models trained on two
datasets that differ by $g$ records are at divergence at most
\[
\varepsilon_0^{(g)}(\alpha)=\frac{\alpha\,g^2M^2}{m\,\sigma_{\mathrm{LU}}^2}.
\]
The original bound, $4\alpha S^2M^2/(m\sigma^2n^2)$, is for the mean objective and the
replacement of $S$ records, whose gradient sensitivity is $2SM/n$; the deletion of $g$
records has sensitivity $gM$ under the sum objective. We drop its factor
$1-e^{-m\eta T}\le1$.

\emph{Decay} (their Theorem~3.2, strongly convex case). $K$ steps on the updated objective
multiply the divergence of order $\alpha$ to the retrained model by $e^{-KR/\alpha}$, with
$R=2\sigma_{\mathrm{LU}}^2\eta/C_{\mathrm{LSI}}$, for a log-Sobolev constant such that
$C_{\mathrm{LSI}}>\sigma_{\mathrm{LU}}^2/m$ and
$\eta\le\min\{\tfrac{2}{m}(1-\tfrac{\sigma_{\mathrm{LU}}^2}{mC_{\mathrm{LSI}}}),\tfrac1L\}$.
We take $C_{\mathrm{LSI}}=2\sigma_{\mathrm{LU}}^2/m$, the value used by Chien et al.\ in
their experiments, for which the condition on the step size is
$\eta\le\min\{1/m,1/L\}$ and
\[
R=\eta m .
\]

\emph{Sequential recursion} (their Corollary~3.4). Let $\varepsilon^{(r)}(\alpha)$ bound
the divergence of order $\alpha$ between the model released after request $r$ and a model
retrained on $D_r$. Then
\begin{align*}
\varepsilon^{(1)}(\alpha)
&=e^{-K^{\mathrm{LU}}_1R/\alpha}\,\varepsilon_0^{(g_1)}(\alpha),\\
\varepsilon^{(r)}(\alpha)
&=e^{-K^{\mathrm{LU}}_rR/\alpha}\,\frac{\alpha-1/2}{\alpha-1}
\Bigl(\varepsilon_0^{(g_r)}(2\alpha)+\varepsilon^{(r-1)}(2\alpha)\Bigr),
\qquad r\ge2 .
\end{align*}
The proof of the corollary is a weak triangle inequality between the released model, the
model retrained on $D_{r-1}$ and the model retrained on $D_r$; it applies unchanged to
requests of different sizes, the divergence between the two retrained models being bounded
by $\varepsilon_0^{(g_r)}$.

Request $r$ is certified when
$\min_{\alpha}\bigl\{\varepsilon^{(r)}(\alpha)+\log(1/\delta)/(\alpha-1)\bigr\}\le\varepsilon$,
and $K^{\mathrm{LU}}_r$ is the smallest integer for which this holds given
$K^{\mathrm{LU}}_1,\dots,K^{\mathrm{LU}}_{r-1}$; the minimum is taken over a grid of orders
between $2$ and $5\times10^4$. This is the procedure of Chien et al.\ (their
Algorithms~6--8) with the constants of the sum objective and the size of each request. As
they do, we treat the model obtained after the $610$ training steps as a sample of the
stationary distribution ($c^{610}\approx6\times10^{-6}$).

The recursion evaluates the bound of request $r-1$ at order $2\alpha$, that of request
$r-2$ at order $4\alpha$, and so on, and corrective steps contract little at high orders.
As a result, $K^{\mathrm{LU}}_r$ grows linearly with $r$, by approximately
$\alpha^\star\log 2/(2R)$ steps per request, where
$\alpha^\star\approx1+\log(1/\delta)/\varepsilon$ is the order imposed by the conversion:
about $460$ steps per request with our constants ($474$ measured on single-record
requests), whatever $\sigma_{\mathrm{LU}}$.

\section{Additional experiments}
\label{app:additional-experiments}

All experiments use the setting of Section~\ref{sec:exp-setting}; the notebook reproduces
every figure and table.

\paragraph{Numerical check of the calibration}
For $100$ random request streams mixing group sizes $g_r\in\{1,2,5,20,100\}$ and privacy levels
$\bar\mu_r\in\{0.1,0.3,1,2\}$, with training noise $\sigma_0\in\{0,0.3,3\}$, half of them
with requests grouped at random into blocks of dependent requests
(Appendix~\ref{app:dependent}), we rebuild the shift allocation of the proof of
Theorem~\ref{thm:eps-delta-tracking}, training steps included, and check the constraints
of Theorem~\ref{thm:bok-shifted-certificate} step by step. All allocations are admissible,
the Gaussian-DP parameter equals the target privacy level up to $10^{-12}$, the new noise satisfies
the bound of Corollary~\ref{cor:pay-as-you-go}(a), the constant-size case coincides with
the closed form (b), and the first request is free exactly when $\sigma_0$ reaches the
level of Proposition~\ref{prop:dp-provisioned}, as stated in
Proposition~\ref{prop:training-noise}.

\begin{figure}[t]
    \centering
    \includegraphics[width=0.85\textwidth,trim=0 0 0 162,clip]{femnist_baselines.pdf}
    \caption{CoSU and certified first-order baselines ($\varepsilon=0.5$) on $12$
    writer-level requests. Left: cumulative number of corrective steps, with the lines of
    Figure~\ref{fig:baselines}. Right: test accuracy averaged over the requests (mean $\pm$
    std over five runs); Langevin Unlearning processes no request within the budget of
    $10$ retrainings.}
    \label{fig:baselines-writers}
\end{figure}

\paragraph{Certified first-order baselines}
We compare CoSU with Descent-to-Delete and Langevin Unlearning
($\sigma_{\mathrm{LU}}=1$), calibrated for the size of each request as in
Appendix~\ref{app:baseline-guarantees} ($\lambda'=0.01$, $\varepsilon=0.5$). They certify
indistinguishability from a model retrained on $D_r$ and not our counterfactual
guarantee, so Figures~\ref{fig:baselines} and~\ref{fig:baselines-writers} compare costs rather than guarantees. For a
fair comparison, all methods, including CoSU, are trained for $T=610$ steps in this
experiment, as required by the certificate of Descent-to-Delete
(Appendix~\ref{app:baseline-guarantees}), and a retraining is counted as $610$ steps. A
method stops before its cumulative number of steps exceeds a budget of $10$ retrainings
($6{,}100$ steps).

We first consider $50$ requests of one record drawn at random. The three methods lose no accuracy on the
requests they process ($88.5\%$ for CoSU, for Descent-to-Delete and for Langevin
Unlearning), and differ in cost. CoSU uses $K=5$ steps per request,
$0.4$ retraining for the whole stream. Descent-to-Delete needs $339$ to $344$ steps per
request, more than half a retraining, and $28.1$ retrainings in total: its base horizon
$\mathcal I=261$ depends on the dimension and on $(\varepsilon,\delta)$, not on the request.
Langevin Unlearning needs one step for each
of the first two requests; its number of steps then grows by $474$ per request,
exceeds a retraining from the third request and reaches $24{,}681$ at the fiftieth,
$1{,}053$ retrainings in total. 
A
larger noise delays this growth without changing its slope: for
$\sigma_{\mathrm{LU}}=10$, the cost exceeds a retraining from request $10$.

On $12$ writer-level requests ($94$ to $129$ records), Descent-to-Delete needs
$339$ to $349$ steps per request and keeps $88.5\%$; CoSU reaches $87.2\%$
with $51$ steps, and $83.6\%$ and $80.3\%$ with $15$ and $5$ steps, operating points
that the horizon of Descent-to-Delete does not allow. Langevin Unlearning needs $12{,}781$
to $14{,}342$ steps per request, more than $20$ retrainings, from the first request: it
processes no request within the budget.

\begin{figure}[t]
    \centering
    \includegraphics[width=0.7\linewidth]{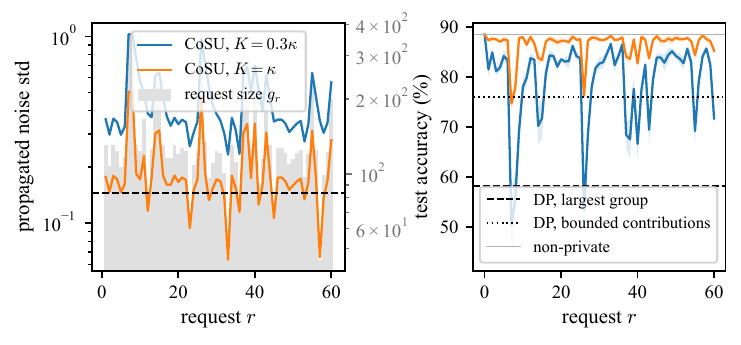}
    \caption{Groups of variable size. Left: request sizes $g_r$ 
    (gray) and propagated noise standard deviation $\sqrt{W_r}$, for Zipf$(3)$
    organizations. Right: test accuracy.}
    \label{fig:femnist-heterogeneity}
\end{figure}

\begin{table}[t]
\centering
\caption{Groups of variable size: accuracy (\%) averaged over the requests (mean $\pm$ std
over five runs) for writers ($100$ requests) and for organizations of Zipf$(a)$ writers who
    leave together ($60$ requests).}
\label{tab:femnist-heterogeneity}
\small
\begin{adjustbox}{max width=\linewidth}
\begin{tabular}{lcccccc}
\toprule
Groups & $\frac{\max}{\text{mean}}$ & $g_{\max}$ & \multicolumn{2}{c}{CoSU} & \multicolumn{2}{c}{DP} \\
 & & & $K=0.3\kappa$ & $K=\kappa$ & largest & bounded \\
\midrule
Writers & 1.2 & 134 & 83.6 $\pm$ 0.3 & \textbf{87.4} $\pm$ 0.1 & 76.6 $\pm$ 1.1 & 79.6 $\pm$ 1.4 \\
Zipf$(4)$ & 3.0 & 375 & 82.4 $\pm$ 0.4 & \textbf{87.2} $\pm$ 0.1 & 58.9 $\pm$ 3.7 & 78.0 $\pm$ 1.7 \\
Zipf$(3)$ & 2.8 & 386 & 79.1 $\pm$ 0.4 & \textbf{86.4} $\pm$ 0.1 & 58.1 $\pm$ 3.8 & 75.9 $\pm$ 2.0 \\
Zipf$(2.5)$ & 2.5 & 390 & 76.7 $\pm$ 0.4 & \textbf{86.0} $\pm$ 0.1 & 57.8 $\pm$ 3.9 & 73.9 $\pm$ 2.4 \\
\bottomrule
\end{tabular}
\end{adjustbox}
\end{table}

\paragraph{Groups of coordinated users}
We group the private writers into organizations whose number of writers follows a Zipf
law of parameter $a$, truncated at $3$ writers. The writers of an organization leave
together: their requests are dependent, so they form a single request, and organizations request independently of each other.
Groups then contain up to $390$ records, about three times the largest writer.
Table~\ref{tab:femnist-heterogeneity} and
Figure~\ref{fig:femnist-heterogeneity} report the results. DP provisioned for the largest
group falls to about $58\%$; contribution bounding, the strongest DP competitor, falls
to $78.0\%$, $75.9\%$ and $73.9\%$. CoSU with $K=\kappa$ keeps $87.2\%$, $86.4\%$ and $86.0\%$,
as its noise follows each request: a large group causes a transient drop of accuracy
(to about $75\%$ at $K=\kappa$, $50\%$ at $0.3\kappa$), whereas the DP learners pay a constant price at every
release. The gap with contribution bounding grows from about $8$ points (writers) to $12$
(Zipf$(2.5)$).

\paragraph{Privacy levels chosen at deletion time}
Each writer-level request chooses its level: $\varepsilon_r\in\{0.25,1,4\}$ with
probabilities $(0.2,0.5,0.3)$, $K=0.3\kappa$, five runs (Table~\ref{tab:femnist-eps}). A DP learner
must be provisioned for the strictest level, $\varepsilon_{\min}=0.25$, and so must an
unlearning procedure that ignores the requested levels. Certifying each request at its own level,
CoSU reaches $81.3\%$, against $72.6\%$ when every request is certified at $\varepsilon_{\min}$:
only the requests that ask for the strictest level pay for it. Both DP learners stay below $70\%$.

\begin{table}[t]
\centering
\caption{Levels chosen at deletion time ($\varepsilon_r\in\{0.25,1,4\}$ with probabilities
$0.2,0.5,0.3$): accuracy (\%) averaged over requests, mean $\pm$ std over five runs.}
\label{tab:femnist-eps}
\small
\begin{adjustbox}{max width=\linewidth}
\begin{tabular}{lc}
\toprule
Method & accuracy \\
\midrule
CoSU, each request at its level $\varepsilon_r$ & \textbf{81.3} $\pm$ 0.4 \\
CoSU, every request at $\varepsilon_{\min}$ & 72.6 $\pm$ 0.3 \\
DP, largest group, $\varepsilon_{\min}$ & 68.4 $\pm$ 1.9 \\
DP, bounded contributions ($g_0=67$), $\varepsilon_{\min}$ & 69.9 $\pm$ 1.1 \\
\bottomrule
\end{tabular}
\end{adjustbox}
\end{table}

\begin{figure}[t]
    \centering
    \includegraphics[width=0.9\textwidth]{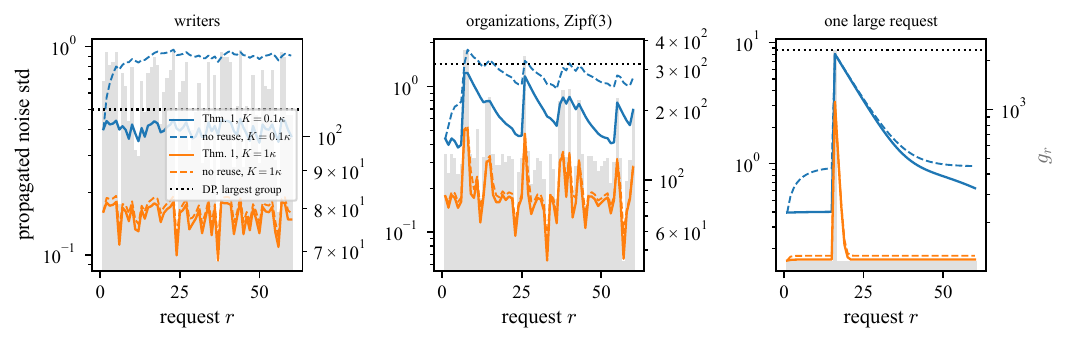}
    \caption{Propagated noise standard deviation $\sqrt{W_r}$ of CoSU (solid) and of its
    variant without reuse of past noise (dashed), with $K=0.1\kappa$ and $K=\kappa$, and of DP
    provisioned for the largest group (dotted), for three size profiles (gray bars: $g_r$).}
    \label{fig:sigma-profiles}
\end{figure}

\paragraph{Noise profiles}
Figure~\ref{fig:sigma-profiles} shows the propagated noise for writers, for Zipf$(3)$
organizations, and for writer-sized requests with a single request twenty times larger.
The noise follows the current request: a large request produces a spike that decays
geometrically and does not raise the noise of the following small requests. When
$K=0.1\kappa$, reusing past noise divides the noise by $2.2$ for writers, $1.8$ for
organizations and $1.2$ for the single large request (median ratios); it matters little
when $K=\kappa$ (median ratio $1.07$).

\begin{figure}[t]
    \centering
    \includegraphics[width=0.6\linewidth]{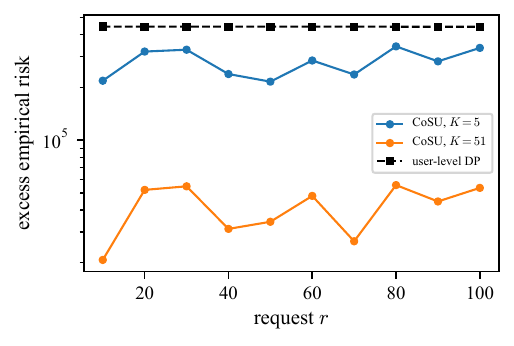}
    \caption{Excess empirical risk $f_r(\theta_r)-f_r(\theta_r^\star)$ along the writer-level
    stream, at the common regularization $\lambda'=0.01$ (five runs).}
    \label{fig:writers-excess}
\end{figure}

\paragraph{Excess risk at a common regularization}
To compare CoSU and DP independently of the selected regularization,
Figure~\ref{fig:writers-excess} reports the excess empirical risk at $\lambda'=0.01$. DP
provisioned for the largest writer stays at $4.4\times10^5$ (for a sum objective over
$279{,}306$ records), CoSU with $K=0.1\kappa$ between $2.2\times10^5$ and $3.4\times10^5$, and
CoSU with $K=\kappa$ between $2.1\times10^4$ and $5.6\times10^4$: the ordering is the same as for
the accuracy.

\begin{figure}[t]
    \centering
    \includegraphics[width=0.6\linewidth]{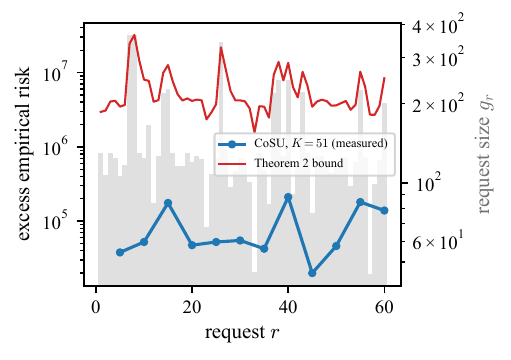}
    \caption{Excess empirical risk along a stream of Zipf$(3)$ organizations ($K=\kappa$,
    five runs), and the bound of Theorem~\ref{thm:final-excess-risk} (gray bars: $g_r$).}
    \label{fig:excess-bound}
\end{figure}

\paragraph{Excess risk and Theorem~\ref{thm:final-excess-risk}}
Along a stream of Zipf$(3)$ organizations (Figure~\ref{fig:excess-bound}), the excess risk of
CoSU follows the recent requests: it rises after the largest
ones (up to $2.1\times10^5$) and returns to its level (median $5.2\times10^4$) within a few requests. The bound
of Theorem~\ref{thm:final-excess-risk} has the same profile and is about $80$ times larger
(median ratio), as expected from a worst-case analysis.

\paragraph{Training noise}
Table~\ref{tab:sigma0} varies the training noise $\sigma_0$, as a fraction of the level
$\sigma_0^\star$ that makes training differentially private for the largest writer
(Proposition~\ref{prop:dp-provisioned}), with $K=\kappa$. The calibration accounts for the training noise: at
$\sigma_0=\sigma_0^\star$, the first request is free, as predicted by
Proposition~\ref{prop:training-noise}. Later requests do not benefit, because the training
noise has contracted by the time they arrive and the previous episodes were run with their
records: the accuracy averaged over the requests is unchanged ($87.3$--$87.4\%$), while the
accuracy before any request drops from $88.4\%$ to $78.8\%$. With
$K=\kappa$, training noise therefore brings nothing, and we use $\sigma_0=0$ throughout.

\begin{table}[t]
\centering
\caption{Training noise $\sigma_0$ (writer-level requests, $K=\kappa$, five runs): accuracy (\%)
before any request and averaged over the requests, and noise of the first and of the later
requests.}
\label{tab:sigma0}
{%
\begin{tabular}{ccccc}
\toprule
$\sigma_0/\sigma_0^\star$ & before & mean over requests & $\sigma_1$ & mean $\sigma_r$, $r\ge2$ \\
\midrule
0 & 88.4 & 87.4 $\pm$ 0.1 & 9.50 & 8.16 \\
0.5 & 85.7 & 87.3 $\pm$ 0.1 & 8.00 & 8.16 \\
1 & 78.8 & 87.4 $\pm$ 0.1 & 0 & 8.16 \\
\bottomrule
\end{tabular}
}
\end{table}

\section{Experimental details}
\label{app:experimental-details}

\paragraph{Data}
We use the digits (classes $0$--$9$) of FEMNIST~\cite{caldas2018leaf}, in the version
distributed with writer identifiers (\texttt{flwrlabs/femnist}). Writers are split
uniformly at random (seed $0$) into $500$ public writers ($56{,}384$ images), $300$ test
writers ($33{,}366$ images), $300$ validation writers and $2{,}480$ private writers
($279{,}306$ images, $113$ records per writer on average, at most $134$). The public writers
are never deleted and are only used to fit the feature map; the validation writers are
only used for hyperparameter selection.

\paragraph{Feature map}
We use the ResNet-18 of torchvision pretrained on ImageNet (\texttt{IMAGENET1K\_V1}), frozen.
The $28\times28$ gray-level images are resized to $64\times64$ (bilinear), replicated on the
three channels and normalized with the ImageNet statistics; the features are the spatial
average of the output of the second residual stage ($128$ dimensions). Each coordinate is
centered and divided by its standard
deviation on the public writers, and every feature vector is normalized to unit $\ell_2$
norm. No FEMNIST image is used to train the network, and the standardization uses public
writers only, so the feature map does not depend on the private data. The non-private
model reaches $88.4\%$ test accuracy. 

\paragraph{Model and constants}
The certified model is a multiclass softmax regression with parameter
$\theta\in\mathbb R^{128\times10}$ ($p=1280$) and full-batch sum objective
\[
f_r(\theta)=\sum_{i\in D_r}\ell(\theta;x_i,y_i)+\frac{\lambda}{2}\|\theta\|_F^2,
\qquad \lambda=\lambda'n,
\]
with $n$ the (public) size of the initial training set. The ridge penalty gives
$m=\lambda$. For a single example,
$\nabla^2\ell=(x_ix_i^\top)\otimes(\operatorname{diag}(\pi_i)-\pi_i\pi_i^\top)$, where
$\pi_i$ is the vector of predicted probabilities, with
$\operatorname{diag}(\pi_i)-\pi_i\pi_i^\top\preceq\frac12I$; since $\|x_i\|_2=1$ and
$|D_r|\le n$, every retained objective is $L$-smooth with $L=n/2+\lambda$. This bound
only depends on public quantities, unlike the tighter $\lambda_{\max}(X^\top X)/2+\lambda$,
which would make the calibration depend on the private records. Since
$\|\nabla_\theta\ell\|_F=\|x_i\|_2\|\pi_i-e_{y_i}\|_2\le\sqrt2$, we take $M=\sqrt2$.
This gives $\kappa=1/(2\lambda')+1$ ($\kappa=51$ for the selected $\lambda'=0.01$) and
$c=1-1/\kappa$. The number of training steps $T$, which is also the cost of a retraining,
is measured for each $\lambda'$ on the validation writers: it is the smallest number of
steps, on a fixed grid of checkpoints, whose validation accuracy without noise is within
$0.1$ point of the best accuracy reached within $10\kappa+100$ steps ($T=200$ for
$\lambda'=0.01$, about $4\kappa$). This cap is our choice: gradient descent with step size
$1/L$ contracts the distance to the minimizer by $c=1-1/\kappa$ at each
step~\cite{nesterov2004introductory}, hence by about $e^{-10}$ after $10\kappa$ steps, and
the additional $100$ steps cover small values of $\kappa$. Training is deterministic
($\sigma_0=0$) unless stated otherwise, and CoSU uses a number $K$ of corrective steps per
request proportional to $\kappa$, rounded to an integer ($K=5$, $15$, $31$ and $51$ for
$0.1\kappa$, $0.3\kappa$, $0.6\kappa$ and $\kappa$).

\paragraph{Privacy parameters}
Requests are certified at $(\varepsilon,\delta)=(0.5,1/n)$ unless stated otherwise. The Gaussian-DP level is the largest $\mu$ such that
$\mu$-GDP implies $(\varepsilon,\delta)$-DP, obtained by inverting the exact
conversion of Dong et al.~\cite{dong2022gaussian} ($\mu=0.133$ for $\varepsilon=0.5$ and
$\mu=0.253$ for $\varepsilon=1$, with $\delta=1/n$).
DP learners use the same conversion.

\paragraph{Competitors}
By Proposition~\ref{prop:no-provisioning}, training-time DP satisfies our definition for
the requests it is provisioned for. We compare with two DP learners, trained by
noisy gradient descent calibrated by Proposition~\ref{prop:dp-provisioned} (the same
shifted-interpolation analysis as CoSU) for the number of steps $T$ of their own
$\lambda'$, and performing no update at deletion time:
\emph{DP, largest group}, provisioned for the largest
user of the training set, and \emph{DP, bounded contributions}
\cite{amin2019bounding,epasto2020smoothly}, which keeps at most $g_0$ records per user and
is provisioned for $g_0$. We also compare with Langevin Unlearning and Descent-to-Delete 
in Figure~\ref{fig:baselines} and Appendix~\ref{app:additional-experiments}.
Results are averaged over five runs (mean $\pm$ standard
deviation) unless stated otherwise.

\paragraph{Streams and users}
Requests are drawn uniformly at random without replacement among users (seed $1$),
independently of the data and of the models, so that requests are independent
(Assumption~\ref{ass:independent-requests}).
Organizations are formed by drawing, for each organization, a number of writers
from a Zipf distribution of parameter $a$, truncated at $3$, and assigning writers
in a random order. Contribution bounding keeps, for each user, a uniformly random
subset of at most $g_0$ records.

\paragraph{Hyperparameter selection}
All hyperparameters are selected on the validation writers, disjoint from the training and
test writers. The grid is $\lambda'\in\{0.001,0.003,0.01,0.03,0.1,0.3\}$, with the training
length $T$ of each value ($1{,}000$, $500$, $200$, $75$, $25$ and $7$ steps). For CoSU, one
$\lambda'$ is selected once and used in all experiments: CoSU is run at $\varepsilon=0.5$
on a separate stream of $30$ writer-level requests with $50$ steps per request, close to
the largest budget we use ($K=\kappa=51$), and scored by its validation accuracy averaged over the requests and
over three noise draws. This selects $\lambda'=0.01$ ($88.1\%$, against $84.7\%$ for
$\lambda'=0.03$ and $81.3\%$ for $\lambda'=0.003$). The DP learners are trained on the
private writers and evaluated on the validation writers (three noise draws): for every
experiment, they select their own $\lambda'$ in the same grid extended with $\lambda'=1$,
hence their own $T$, and contribution bounding selects its bound
$g_0\in\{8,17,34,67\}$ (from one sixteenth to one half of the largest writer), extended to
$\{134,268\}$ for organizations. The selected values are given in
Table~\ref{tab:tuning}.

The comparison with the DP learners depends on the privacy level. With their
hyperparameters selected again at each level, and $\lambda'=0.01$ kept for CoSU, the best
DP learner reaches $69.9\%$, $79.6\%$ and $87.5\%$ at
$\varepsilon=0.25$, $0.5$, $1$, against $84.3\%$, $87.4\%$ and $88.1\%$ for CoSU with 
$K=\kappa$ (Figure~\ref{fig:tradeoff}, middle). CoSU is $14.4$
and $7.8$ points ahead at $\varepsilon=0.25$ and $0.5$, and on par at $\varepsilon=1$.

\begin{table}[t]
\centering
\caption{Hyperparameters selected on validation writers: ridge parameter $\lambda'$, number of training
steps $T$ and, for DP with bounded contributions, bound $g_0$.}
\label{tab:tuning}
\begin{tabular}{lccc}
\toprule
Groups & CoSU $(\lambda',T)$ & DP largest group $(\lambda',T)$ & DP bounded $(\lambda',T,g_0)$ \\
\midrule
Writers & (0.01, 200) & (0.03, 75) & (0.003, 500, 34) \\
Zipf$(4)$ & (0.01, 200) & (0.1, 25) & (0.003, 500, 34) \\
Zipf$(3)$ & (0.01, 200) & (0.1, 25) & (0.003, 500, 34) \\
Zipf$(2.5)$ & (0.01, 200) & (0.1, 25) & (0.003, 500, 34) \\
Writers, $\varepsilon_r$ at deletion time & (0.01, 200) & (0.1, 25) & (0.03, 75, 67) \\
\bottomrule
\end{tabular}
\end{table}

\paragraph{Membership inference}
We audit the certificate of a target user in the strongest threat model: the attacker
knows all the other records, the algorithms and the sizes and privacy levels of the requests, and
must decide from the final release whether the target's records were in the training set
and deleted (world IN) or never there (world OUT, where the same requests are issued but
delete nothing). To afford many runs, the training set is a sub-population of $200$ private
writers drawn at random ($n=22{,}460$), and all constants are computed for this dataset,
with $\lambda'=0.01$ ($\kappa=51$, $T=200$), $K=0.1\kappa$ and $\varepsilon=1$ ($\bar\mu=0.296$).
The target is deleted either as \emph{one} request, the correct treatment of a user, or
split into $m\in\{2,4,8\}$ requests issued consecutively and each calibrated as if it were
independent: the requests of the same user are dependent, and this treatment violates
Assumption~\ref{ass:independent-requests}. We run the pipeline $1{,}024$ times in each
world, in batches of $64$ runs with independent noise.

An attack assigns a score to each
release and compares it with a threshold. The \emph{white-box} attack uses the parameters
of the released model. It first runs the pipeline without noise in each world, which gives
two models $\theta^{\mathrm{IN}}$ and $\theta^{\mathrm{OUT}}$, and scores a release
$\theta$ by its inner product with their difference,
$\langle\theta,\theta^{\mathrm{IN}}-\theta^{\mathrm{OUT}}\rangle$. For a Gaussian release
centered at $\theta^{\mathrm{IN}}$ or $\theta^{\mathrm{OUT}}$, this score is the
likelihood-ratio statistic. The \emph{black-box} attack only evaluates the released model: its
score is the average loss on the target's records. We report $\hat\mu$, the gap between the mean scores of the two
worlds divided by their pooled standard deviation: for Gaussian scores, this is the
Gaussian-DP level of the attack \cite{dong2022gaussian}, which cannot exceed $\bar\mu$ if
the target is protected; the $95\%$ intervals are percentile bootstrap intervals over the
runs. Besides a natural writer ($101$ records), we use a \emph{canary} user, as in privacy
auditing: $16$ records of a writer are replaced by copies of one of its images with the
next label, so that their gradients stay close to the bound $M$ and are aligned, which is
close to the worst case of the certificate.
Table~\ref{tab:mia} reports both attacks.

\begin{table}[t]
\centering
\caption{Membership inference ($1{,}024$ runs per world, $\bar\mu=0.296$): empirical
Gaussian-DP level $\hat\mu$ of the white-box attack with its $95\%$ bootstrap interval, and
AUC of the white-box and black-box attacks, for a user of $g$ records deleted in one
request ($m=1$) or split into $m$ requests certified separately.}
\label{tab:mia}
\begin{tabular}{lcccccc}
\toprule
User & $g$ & $m$ & $\hat\mu$ & $95\%$ interval & AUC, white-box & AUC, black-box \\
\midrule
canary & 16 & 1 & 0.147 & $[0.061,0.238]$ & 0.544 & 0.542 \\
canary & 16 & 2 & 0.365 & $[0.279,0.458]$ & 0.602 & 0.601 \\
canary & 16 & 4 & 0.565 & $[0.475,0.654]$ & 0.655 & 0.658 \\
canary & 16 & 8 & 0.906 & $[0.814,0.998]$ & 0.742 & 0.751 \\
\midrule
natural & 101 & 1 & -0.018 & $[-0.098,0.082]$ & 0.494 & 0.516 \\
natural & 101 & 2 & 0.009 & $[-0.071,0.099]$ & 0.502 & 0.511 \\
natural & 101 & 4 & 0.017 & $[-0.067,0.095]$ & 0.502 & 0.504 \\
natural & 101 & 8 & 0.138 & $[0.048,0.220]$ & 0.539 & 0.534 \\
\bottomrule
\end{tabular}
\end{table}

\paragraph{Compute}
All experiments run on the GPU of a MacBook Pro, through the MPS backend of PyTorch. The
runs of a point, and the configurations of a grid, are computed in one batch: a
full-batch gradient step on the $279{,}306$ records takes $23$ ms for one run and $11$ ms
for a batch of five runs. The ImageNet features are computed once and cached. The
membership inference
experiment takes less than one minute, the writer-level experiment (including the tuning of the
DP learners) $5$ minutes, the trade-off in $\varepsilon$ $10$ minutes, the groups of
coordinated users $6$ minutes, the levels chosen at deletion time $2$ minutes, and the
appendix experiments $4$ minutes: about half an hour in total.

\end{document}